\newif\ifarxiv
\title{s-ID: Causal Effect Identification in a Sub-Population}
\author{
    Amir Mohammad Abouei\thanks{Equal contribution.}\textsuperscript{\rm 1},
    Ehsan Mokhtarian\footnotemark[1]\textsuperscript{\rm 1},
    Negar Kiyavash\textsuperscript{\rm 1 2}
}
\title{My Publication Title --- Single Author}
\author {
    Author Name
}
\title{My Publication Title --- Multiple Authors}
\author {
    % Authors
    First Author Name\textsuperscript{\rm 1,\rm 2},
    Second Author Name\textsuperscript{\rm 2},
    Third Author Name\textsuperscript{\rm 1}
}
\tikzstyle{block} = [draw, fill=white, circle, text centered,inner sep=0.25cm]
\tikzstyle{block-s} = [draw, fill=white, circle, double, double distance=1pt, text centered,inner sep=0.2cm]
\newcommand{\N}{\mathcal{N}}
\newcommand{\pro}{P}
\newcommand{\PR}{\mathcal{P}}
\newcommand{\Gr}{\mathcal{G}}
\newcommand{\M}{\mathcal{M}}
\newcommand{\s}[0]{\textsc{s}}
\newcommand{\Grs}{\mathcal{G}^{\s}}
\newcommand{\prs}{P^{\s}}
\newcommand{\mysum}[2]{\sum\limits_{#1} ^{ #2} }
\newcommand{\myprod}[2]{\prod\limits_{#1}^{#2}}
\newcommand{\independent}{\perp\!\!\!\perp}
\newcommand{\notindependent}{\not\!\perp\!\!\!\perp}
\newcommand{\Pa}[2]{\textit{Pa}_{#2}(#1)}
\newcommand{\Ch}[2]{\textit{Ch}_{#2}(#1)}
\newcommand{\Anc}[2]{\textit{Anc}_{#2}(#1)}
\newcommand{\Vb}{\mathbf{V}}
\newcommand{\Xb}{\mathbf{X}}
\newcommand{\Yb}{\mathbf{Y}}
\newcommand{\Wb}{\mathbf{W}}
\newcommand{\Zb}{\mathbf{Z}}
\newcommand{\sump}[1]{\text{SPa}_{#1}}
\newcommand{\expo}[1]{\exp \left ( #1 \right)}
\newcommand{\interS}[1][Y]{\pro_{X}(#1 \vert S = 1)}
\newcommand{\inter}[0]{\pro_{\Xb}(\Yb)}
\newcommand{\interSb}[1][\Yb]{\pro_{\Xb}(#1 \vert S = 1)}
\newcommand{\interSp}[0]{\prs_{X}(Y)}
\newcommand{\interSbp}[1][\Yb]{\prs_{\Xb}(#1)}
\newtheorem{theorem}{Theorem}%[section]
\newtheorem{corollary}{Corollary}%[theorem]
\newtheorem{lemma}{Lemma}
\newtheorem{proposition}{Proposition}
\newtheorem{claim}{Claim}
\newtheorem{definition}{Definition}
\newtheorem{remark}{Remark}
\newtheorem*{lemma*}{Lemma}
\newenvironment{myproof}[1][\proofname]{%
  \begin{proof}[#1]$ $\nobreak\ignorespaces
}{%
  \end{proof}
}
\newcommand*\diff{\mathop{}\!\mathrm{d}}
\begin{document}

\maketitle

\begin{abstract}
    Causal inference in a sub-population involves identifying the causal effect of an intervention on a specific subgroup, which is distinguished from the whole population through the influence of systematic biases in the sampling process.
    However, ignoring the subtleties introduced by sub-populations can either lead to erroneous inference or limit the applicability of existing methods. We introduce and advocate for a causal inference problem in sub-populations (henceforth called \s-ID), in which we merely have access to observational data of the targeted sub-population (as opposed to the entire population). Existing inference problems in sub-populations operate on the premise that the given data distributions originate from the entire population, thus, cannot tackle the \s-ID problem. To address this gap, we provide necessary and sufficient conditions that must hold in the causal graph for a causal effect in a sub-population to be identifiable from the observational distribution of that sub-population. Given these conditions, we present a sound and complete algorithm for the \s-ID problem.

\end{abstract}

\section{Introduction}
    In machine learning, variable(s) $\Yb$ are commonly predicted from observed variable(s) $\Xb$ by estimating the conditional probability distribution $\pro(\Yb \vert \Xb)$ \cite{bishop2006pattern}.
    This approach is effective for understanding correlations or associations in the data, but it falls short when we seek to understand how changes in $\Xb$ would affect $\Yb$.
    Such an understanding requires a different methodology, known as causal inference (in population), which involves estimating the \emph{interventional distributions (or causal effect)}, denoted by $\pro_{\Xb}(\Yb)$.
    $\pro_{\Xb}(\Yb)$ represents the probability of an outcome $\Yb$ if we were to intervene or change the values of the input variable(s) $\Xb$ \cite{pearl2000models, pearl2009causality, hernan2010causal}.\footnote{We utilize Judea Pearl's framework for causal inference to present our findings.
    Within this framework, alternative notations for interventional distributions include $\pro(\mathbf{Y} \vert \text{do}(\mathbf{X}))$ and $\pro_{\text{do}(\mathbf{X})}(\mathbf{Y})$, which employ the $do()$ operator to denote an intervention.
    Nevertheless, for the sake of simplicity in notation, we adopt the latter representation and drop the $do()$.}
    
    The gold standard for estimating a causal effect is to perform experiments/interventions in the environment, for instance, by using techniques such as randomized controlled trials (RCTs) \cite{fisher1936design}.
    However, these methods often require real-world experiments, which can be prohibitively expensive, unethical, or simply infeasible in many scenarios.
    Alternatively, researchers can turn to observational methods, utilizing the \emph{causal graph} of the environment and available data to estimate interventional distributions \cite{pearl2009causality, spirtes2000causation}.
    The causal graph, a graphical representation that depicts the causal relationships between variables, plays a central role in this methodology.
    This observational approach avoids the need for costly or impractical experiments but comes with its own challenges.
    In particular, computing interventional distributions uniquely may not always be feasible.
    
    \textbf{Identifiablity in population.}
    Identifiability refers to the ability to uniquely compute a distribution from the available data.
    When all variables in the system are observable and the causal graph is known, all interventional distributions are identifiable using the so-called back-door adjustment sets, meaning all causal effects are identifiable \cite{pearl1993bayesian}.
    However, only a subset of causal effects can be identified in the presence of unobserved variables or hidden confounders \cite{pearl1995causal}.
    Selection bias can also make some causal effects unidentifiable \cite{shpitser2012identification}.
    This bias, which is similar to distribution mismatch in learning theory \cite{masiha2021learning}, often arises from conditioning on selection variables.
    The problem of causal effect identification in population pertains to whether, given the causal graph, an interventional distribution can be uniquely computed from the available data.
    Various forms of available data lead to different problems in causal inference in population, the most well-known of which is the ID problem \cite{pearl1995causal, tian2003ID}.
    This problem arises when the available data is from the joint distribution of the observed variables.
    A summary of these problems is provided in Table \ref{table:1}, and a more comprehensive discussion can be found in the Related Work section.
    
    \begin{table*}[t]
        \centering
        \begin{tabular}{c| c | c |c} 
            \toprule
            \multicolumn{2}{c|}{Causal inference problem} & Given distribution(s) & Target interventional distribution\\
            \hline \hline
            \multirow{3}{*}{On Population}
            & ID & $\pro (\Vb)~$ & $\inter$  \\
            & \s-Recoverability &  $\pro (\Vb \vert S = 1)$  &  $\inter$\\
            & gID & $\{\pro_{\Zb_i}(\Vb \setminus \Zb_i)\}_{i = 0}^{m}$ &  $\inter$ \\
            \hline
            \multirow{3}{*}{On Sub-Population}
            & c-ID  & $\pro (\Vb)$  &  $\pro_{\Xb} (\Yb \vert \Zb)$ \\
            & c-gID  & $\{\pro_{\Zb_i}(\Vb \setminus \Zb_i)\}_{i = 0}^{m}$ &  $\pro_{\Xb} (\Yb \vert \Zb)$ \\
            & \textbf{\s-ID}  &  $\pro (\Vb \vert S = 1)$  &   $\interSb$ \\
            \bottomrule
        \end{tabular}
        \caption{Various causal inference problems based on given and target distributions. 
        Herein, $\Vb$ is the set of observed variables, $\Xb$ is the set of intervened variables, $\Yb$ is the set of outcome variables, and $S=1$ corresponds to a sub-population. 
        In this paper, we introduce the \s-ID problem. Note that in all of these problems, the causal graph is given.
        }
        \label{table:1}
    \end{table*}
    
    \textbf{Conditional causal effects} represent the conditional distributions that capture the impact of a treatment on the outcome within specific contexts or sub-populations.
    This concept allows for targeted interventions and tailored policies, offering valuable insights for practical applications \cite{qian2011performance}.
    \citet{shpitser2012identification} considered the c-ID problem, which pertains to identifying a conditional interventional distribution $\pro_{\Xb} (\Yb \vert \Zb)$ from the joint distribution of observed variables.\footnote{In the notation $\pro_{\Xb} (\Yb \vert \Zb)$, it is important to note the sequence of operations. The notation signifies that we first intervene on the set $\Xb$ and then, within the resulting distribution, condition on $\Zb$.}
    An important practical limitation of the c-ID formulation is that it assumes access to samples from the observational distribution of the \textit{entire population} rather than just the target sub-population.
    Unfortunately, the c-ID identification result cannot be directly extended to the setting where the available samples are from the target sub-population, which is often the prevailing scenario in practical applications.
    The recent extension of c-ID, known as c-gID problem \cite{correa2021nested, kivva2023identifiability}, which we will discuss in Related Work, also suffers from the same practical limitation.
    
    \textbf{Identifiablity in sub-populations.}
    As mentioned earlier, a sub-population is a specific subset of individuals within a larger population distinguished by certain characteristics or traits.\footnote{A sample in a sub-population is generated from a conditional distribution that determines the characteristics of the sub-population. This often introduces selection bias, as the sampling process might not be representative of the entire population.}
    We utilize an auxiliary binary variable $S$ to model a sub-population akin to \citet{Bareinboim_Tian_2015}:
    $S$ is added as a child variable representing the specific traits that distinguish the sub-population of the population ($S$ can have several parents), and $S=1$ corresponds to the target sub-population.
    We will formally introduce the auxiliary variable $S$ in Equation \eqref{eq: def S}.
    In this paper, we address the problem of causal inference in a sub-population, where the objective is to identify $\interSb$, which is the causal effect of a treatment or intervention on a specific subgroup of individuals within a larger population.
    Specifically, we introduce the \emph{\s-ID} problem, an identification problem on sub-population when we merely have access to observational data of the target sub-population.
    That is, given the causal graph, we seek to determine when $\interSb$ can be uniquely computed from $\pro (\Vb \vert S = 1)$, where $\Vb$ is the set of observed variables.
    
    \begin{figure}[t] 
        \centering
        \tikzstyle{block} = [draw, fill=white, circle, text centered,inner sep= 0.2cm]
        \tikzstyle{input} = [coordinate]
        \tikzstyle{output} = [coordinate]
        \begin{tikzpicture}[->, auto, node distance=1.3cm,>=latex', every node/.style={inner sep=0.12cm}]
            \node [block, label=center:X](X) {};
            \node [block, right= 1cm of X, label=center:Z](Z) {};
            \node [block, right= 1cm of Z, label=center:Y](Y) {};
            \node [block, above= 1cm of Z, label=center:W](W) {};
            \node [block-s, right= 1cm of W, label=center:S](S) {};
            
            \draw (X) to (Z);
            \draw (Z) to (Y);
            \draw (W) to (X);
            \draw (W) to (Y);
            \draw (W) to (S);
            \draw[dotted] (W) to (Z);
        \end{tikzpicture}
        \hfill
         \begin{tikzpicture}[->, auto, node distance=1.3cm,>=latex', every node/.style={inner sep=0.12cm}]
            \node [block, label=center:X](X) {};
            \node [block, right= 1cm of X, label=center:Z](Z) {};
            \node [block, right= 1cm of Z, label=center:Y](Y) {};
            \node [block, above= 1cm of Z, label=center:W](W) {};
            \node [block-s, right= 1cm of W, label=center:S](S) {};
            
            \draw (X) to (Z);
            \draw (Z) to (Y);
            \draw (W) to (X);
            \draw (W) to (Y);
            \draw (Z) to (S);
            \draw[dotted] (W) to (Z);
        \end{tikzpicture}
        \caption{$X$: whether the public health policy bans smoking in public areas. $Y$: rate of lung cancer. $Z$: percentage of people who smoke. $W$: the average age of people.
        In the left causal graph, $\pro_X(Y \vert S=1)$ is \s-ID, i.e., can be computed from $\pro (X,Y,Z,W \vert S = 1)$, while it is not \s-ID in the right causal graph.}
        \label{fig: example intro}
    \end{figure}

    \textbf{A real-world example.}
    Consider the causal graphs depicted in Figure \ref{fig: example intro}, where we analyze a hypothetical scenario in a random country.
    Here:
    \begin{itemize}
        \item The treatment variable $X$ denotes whether smoking is banned in public areas.
        \item The mediator variable $Z$ indicates the percentage of the population that smokes.
        \item The outcome variable $Y$ measures the rate of lung cancer.
        \item The confounder variable $W$ captures the average age of the population.
    \end{itemize}
    
    Clearly $X$ influences $Z$, and both $Z$ and $W$ affect $Y$.
    The relationship between $W$ and $X$ can be explained by the possibility that in countries with older populations, there may be greater awareness and concern about the health risks of smoking, potentially leading to stricter health policies such as public smoking bans.
    Additionally, one could argue that $W$ may also have an impact on $Z$.
    Nevertheless, our subsequent analysis remains valid whether or not we consider a causal link between $W$ and $Z$.
    Now, consider the scenario where the data from $X, Y, Z, W$ is available from a subset of countries (sub-population) with younger populations than the world average.
    This scenario is illustrated in the left graph in Figure \ref{fig: example intro}.
    The \s-ID problem aims to identify the causal effect of a new policy $X$ on the outcome variable $Y$ for this target sub-population, given only observational data from this group.
    As we will demonstrate, this causal effect is \textit{identifiable} and can be calculated using Algorithm \ref{algo: SID}.
    
    In contrast, in the setting of the S-Recoverability problem, a causal inference problem in population (refer to the second row of Table \ref{table:1}), the task is to compute the causal effect of $X$ on $Y$ for the entire population using only data from this sub-population.
    The limitation of data coming only from the sub-population renders the inference for the whole population particularly challenging.
    Accordingly, \citet{Bareinboim_Tian_2015} showed that in this example, the causal effect of $X$ on $Y$ (in population) is unidentifiable.
    In the c-ID setting, the conditional causal effect of $X$ on $Y$ in sub-population is identifiable, but it requires observational data from the entire population, i.e., from all the countries in the world.
    Lastly, consider another scenario where the sub-population is based on a condition on the mediator variable $Z$ rather than the confounder $W$ (the right graph in Figure \ref{fig: example intro}).
    An example of this scenario might involve a sub-population of countries that have had high smoking rates in recent years.
    Applying our Theorem \ref{th:markov-single}, we can show that in this case, $\pro_X(Y \vert S=1)$ is \textit{not identifiable} from $\pro (\Vb \vert S = 1)$.
    Note that in the ID setting, $\pro_X(Y)$ is identifiable from $\pro (\Vb)$.
    This shows that simply ignoring the sub-population and applying any algorithms in the ID setting leads to an erroneous inference.
    
    The purpose of this example is to (i) demonstrate the critical role of causal graphs in whether a causal effect in a sub-population is identifiable or not and (ii) show that previous identification results in the literature do not suffice to answer the \s-ID problem.
    An additional example is provided in Appendix A.
    
    Our main contributions are as follows.
    \begin{itemize}
        \item We formally introduce the \s-ID problem, a practical scenario for causal inference in a sub-population. This problem asks whether, given a causal graph, a causal effect in a sub-population can be uniquely computed from the observational distribution pertaining to that sub-population.
        \item We provide necessary and sufficient conditions on the causal graph for when a causal effect in a sub-population can be uniquely computed from the observational distribution of the same sub-population (Theorems \ref{th:markov-single} and \ref{th:markov-general}).
        \item We propose a sound and complete algorithm for the \s-ID problem (Algorithm \ref{algo: SID}).
    \end{itemize}

\section{Preliminaries}
    Throughout the paper, we denote random variables by capital letters and sets of variables by bold letters.
    We use $\sum_{\Xb}$ to denote marginalization, i.e., summation (or integration for continuous variables) over all the realizations of the variables in a set $\Xb$.
    
    Let $\Gr$ be a directed acyclic graph (DAG) over a set of variables $\Vb$.
    We denote by $\Pa{X}{\Gr}$, $\Ch{X}{\Gr}$, and $\Anc{X}{\Gr}$ the set of parents, children, and ancestors of $X$ (including $X$) in $\Gr$, respectively.
    We further define $\Anc{\Xb}{\Gr}=\bigcup_{X \in \Xb}\Anc{X}{\Gr}$ for a set $\Xb \subseteq \Vb$.
    A structural equation model (SEM) describes the dynamics of a system using a collection of equations
    \begin{equation*}
        X = f_X(\Pa{X}{\Gr}, \varepsilon_X), \quad \forall X \in \Vb,
    \end{equation*}
    where $\Gr$ is the causal graph, $f_X$ is a deterministic function and $\varepsilon_X$ is the exogenous noise of variable $X$, which is independent of all the other exogenous noises.
    A SEM $\M$ with causal DAG $\Gr$ induces a unique joint distribution $\pro^{\M}(\Vb)$ that satisfies Markov property with respect to $\Gr$.
    That is, $\pro^{\M}(\Vb)$ can be factorized according to $\Gr$ as
    \begin{equation*}
        \pro^{\M}(\Vb) = \prod_{X\in \Vb} \pro^{\M}(X \vert \Pa{X}{\Gr}). 
    \end{equation*}
    We drop $\M$ from $\pro^{\M}(\cdot)$ when it is clear from the context.
    In this paper, an intervention on a set $\Xb$ converts $\M$ to a new SEM where the equations of the variables in $\Xb$ are replaced by some constants.\footnote{There are other types of interventions, such as soft-interventions, which are not considered herein.}
    We denote the corresponding post-interventional distribution by $\pro_{\Xb}(\Vb \setminus \Xb)$.
    The goal of causal inference in population is to compute an interventional distribution $\pro_{\Xb}(\Yb)$ for two disjoint subsets $\Xb$ and $\Yb$ of $\Vb$.
    
    Let $\Xb,\Yb,\Wb$ be three disjoint subsets of $\Vb$.
    A path $\mathcal{P}= (X, Z_1, \dots, Z_k, Y)$ between $X \in \Xb$ and $Y \in \Yb$ in $\Gr$ is called \emph{blocked} by $\Wb$ if there exists $1 \leq i \leq k$ such that
    \begin{itemize}
        \item $Z_i$ is a collider\footnote{A non-endpoint vertex on a path is called a collider if both of the edges incident to it on the path point to it.} on $\mathcal{P}$ and $Z_i \notin \Anc{\Wb}{\Gr}$, or
        \item $Z_i$ is not a collider on $\mathcal{P}$ and $Z_i \in \Wb$. 
    \end{itemize}
    Denoted by $(\Xb \independent \Yb \vert \Wb)_{\Gr}$, we say $\Wb$ $d$-separates $\Xb$ and $\Yb$ if for any $X \in \Xb$ and $Y \in \Yb$, $\Wb$ blocks all the paths in $\Gr$ between $X$ and $Y$.
    Conversely, $(\Xb \notindependent \Yb \vert \Wb)_{\Gr}$ if there exists at least one \emph{active path} between a variable in $\Xb$ and a variable in $\Yb$ that is not blocked by $\Wb$.
    
    The following three rules, commonly referred to as Pearl's do-calculus rules \cite{pearl2000models}, provide a tool for calculating interventional distributions using the causal graph.
    
    \begin{itemize}
        \item \textbf{Rule 1}: 
            If $(\Yb \independent \Zb \vert \Xb, \Wb)_{\Gr_{\overline{X}}}$, then
           \begin{equation*}
               \pro_{\Xb}(\Yb \vert \Zb, \Wb) = \pro_{\Xb}(\Yb \vert \Wb).
           \end{equation*}    
        \item \textbf{Rule 2}: 
            If $(\Yb \independent \Zb \vert \Xb, \Wb)_{\Gr_{\overline{\Xb}, \underline{\Zb}}}$, then
            \begin{equation*}
                  \pro_{\Xb, \Zb}(\Yb \vert \Wb) = \pro_{\Xb}(\Yb \vert \Zb, \Wb). 
            \end{equation*}  
        \item \textbf{Rule 3}: 
            If $(\Yb \independent \Zb \vert \Xb, \Wb)_{\Gr_{\overline{\Xb}, \overline{\Zb(W)}}}$, where $\Zb(\Wb) \coloneqq \Zb  \setminus \Anc{\Wb}{\Gr_{\overline{\Xb}}}$, then
            \begin{equation*}
                \pro_{\Xb, \Zb}(\Yb \vert \Wb) = \pro_{\Xb} (\Yb \vert \Wb). 
            \end{equation*}
    \end{itemize}
    
    In these rules, $\Gr_{\overline{\Xb} \underline{\Zb}}$ denotes the graph obtained by removing the incoming edges to $\Xb$ and outgoing edges from $\Zb$.

\section{The \s-ID Problem}
    In this section, we start by discussing the integration of an auxiliary variable $S$ into a SEM to model a sub-population and introduce the \s-ID problem and formulate our objective.
    Subsequently, we provide necessary and sufficient graph conditions for the \s-ID problem.
    Finally, given the causal DAG, we provide a sound and complete algorithm for computing $\pro_{\Xb}(\Yb \vert S=1)$ from $\pro (\Vb \vert S = 1)$, when this conditional causal effect is \s-ID.
    
    \subsection{Modeling a Sub-Population: Auxiliary Variable $S$}
        Let $\M$ be a SEM with the set of variables $\Vb$, causal DAG $\Gr$, and observational distribution $\pro(\Vb)$, representing the distribution of the entire \emph{population}.
        That is, a sample is from the population if it is generated from $\pro(\Vb)$.
        A \emph{sub-population}, on the other hand, refers to a biased sampling mechanism.
        % specific subset of individuals within the population.
        Formally, a sample is from a sub-population if it is generated from a conditional distribution $\pro(\Vb \vert S=1)$, in which
        \begin{equation} \label{eq: def S}
            S \coloneqq f_S(\Vb_S, \varepsilon_S),
        \end{equation} 
        where $f_S$ is a binary function that determines the characteristics or traits of the sub-population, $\Vb_S \subseteq \Vb$, and $\varepsilon_S$ is the exogenous noise variable independent of the other exogenous variables.
        Under this modeling approach, $S=1$ signifies that the sample is generated from a specific sub-population.
    
        We denote by $\Grs$ the augmented DAG obtained by adding $S$ to $\Gr$, such that $\Pa{S}{\Grs} = \Vb_S$, and $S$ does not have any children.
        As a result, $\Grs$ is the causal graph of the SEM obtained by adding $S$ to the set of variables.
        Moreover, we define $\prs(\Vb) \coloneqq \pro(\Vb \vert S=1)$, which is the observational distribution of the target sub-population.
        We often omit the graph subscript in $\Pa{}{\Gr}$ and $\Anc{}{\Gr}$ notations as parents and ancestor sets are identical in $\Gr$ and $\Grs$.
    
    \subsection{Problem Formulation: Definition of \s-ID}
        As mentioned earlier, $\pro(\Vb \vert S=1)$ (or $\prs(\Vb)$) is the observational distribution of a sub-population.
        Furthermore, for two disjoint subsets $\Xb$ and $\Yb$ of $\Vb$, $\pro_{\Xb}(\Yb \vert S=1)$ (or $\prs_{\Xb}(\Yb)$) corresponds to the causal effect of $\Xb$ on $\Yb$ in that sub-population.
        The problem of \s-ID, formally defined in the following, considers the identifiability of $\pro_{\Xb}(\Yb \vert S=1)$ from $\pro(\Vb \vert S=1)$.
    
        \begin{definition}[\s-ID]\label{def:S-ID}
            Suppose $\Xb$ and $\Yb$ are disjoint subsets of a set $\Vb$, and let $\Grs$ be the augmented causal graph of a SEM over $\Vb \cup \{S\}$.
            Conditional causal effect $\pro_{\Xb}(\Yb \vert S = 1)$ is \s-ID in $\Grs$ if for any two SEMs $\M_1$ and $\M_2$ with causal graph $\Grs$ such that $\pro^{\M_1} (\Vb \vert S = 1) = \pro^{\M_2} (\Vb \vert S = 1) > 0$, then $\pro^{\M_1}_{\Xb} (\Yb \vert S = 1) = \pro^{\M_2}_{\Xb} (\Yb \vert S = 1)$.
        \end{definition}
    
        In other words, this definition states that $\prs_{\Xb}(\Yb)$ is \s-ID when it can be uniquely computed from $\prs(\Vb)$.
        
        In the rest of the paper, we address the following questions.
        Given an augmented causal DAG $\Grs$ over a set $\Vb \cup \{S\}$ and for two disjoint subsets $\Xb$ and $\Yb$ of $\Vb$,
        \begin{itemize}
            \item What are the necessary and sufficient conditions on $\Grs$ such that $\prs_{\Xb}(\Yb)$ is \s-ID in $\Grs$?
            \item When $\pro^{s}_{\Xb}(\Yb)$ is \s-ID in $\Grs$, how can we compute it from $\prs(\Vb)$?
        \end{itemize}
        To address the first question, for pedagogical reasons, we first consider the case where $\Xb$ and $\Yb$ each contain only one variable.
        We subsequently extend our findings to the multivariate scenario.
        In the last subsection, we address the second question and propose a sound and complete algorithm for the \s-ID problem.
    
    \subsection{Conditions for s-Identifiability: Singleton Case}
        Suppose $\Xb$ and $\Yb$ are singleton, where $\Xb = \{X\}$ and $\Yb = \{Y\}$.
        The following theorem provides a necessary and sufficient condition for $\prs_X(Y)$ to be \s-ID in $\Grs$.
    
        \begin{theorem}\label{th:markov-single}
            For two variables $X$ and $Y$, conditional causal effect $\interSp$ is \s-ID in DAG $\Grs$ if and only if 
            \begin{equation} \label{eq: singleton}
                 X \notin \Anc{S}{} \quad \text{ or } \quad (X \independent Y \vert S)_{\Grs_{\underline{X}}}.
            \end{equation}
        \end{theorem}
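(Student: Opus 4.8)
The plan is to establish the two implications separately: sufficiency by do-calculus, and necessity by constructing a pair of SEMs with the same graph and the same sub-population distribution but different target effects.

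\textbf{Sufficiency.} Assume \eqref{eq: singleton} and treat the two (non-exclusive) cases. If $(X \independent Y \mid S)_{\Grs_{\underline X}}$, then Rule~2 of do-calculus with empty intervention set, $\Zb=\{X\}$, $\Yb=\{Y\}$, $\Wb=\{S\}$ yields $\pro_X(Y \mid S) = \pro(Y \mid X, S)$, so $\interSp = \prs(Y \mid X)$, a functional of $\prs(\Vb)$, hence \s-ID. If instead $X \notin \Anc{S}{}$, I would show that $\Pa{X}{}$ is a valid adjustment set even after the selection:
\begin{equation*}
    \interSp = \sum_{\wb} \prs(Y \mid X, \wb)\,\prs(\wb),
\end{equation*}
where $\wb$ ranges over the realizations of $\Pa{X}{}$. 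To get this, write $\pro_X(Y \mid S=1) = \sum_{\wb} \pro_X(Y \mid \wb, S=1)\,\pro_X(\wb \mid S=1)$ (law of total probability in the intervened model, legitimate since $\pro_X(S=1)=\pro(S=1)>0$); apply Rule~3 to $\Pa{X}{}\cup\{S\}$, a set of non-descendants of $X$ each of whose paths from $X$ in $\Grs_{\overline X}$ is a directed path reaching neither a parent of $X$ (a cycle) nor $S$ (that would be $X\in\Anc{S}{}$) or else contains a blocking collider, to conclude $\pro_X(\wb \mid S=1) = \prs(\wb)$; and apply Rule~2, noting that in $\Grs_{\underline X}$ every path out of $X$ is blocked at its first vertex, which is a parent of $X$ and a non-collider in the conditioning set, to conclude $\pro_X(Y \mid \wb, S=1) = \prs(Y \mid X, \wb)$. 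Substituting gives the display. (The lone degenerate situation $Y \in \Pa{X}{}$ is handled directly: the sum collapses to $\prs(Y)$, which is $\interSp$ because $Y$ is upstream of $X$.) In either case $\interSp$ is \s-ID.

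\textbf{Necessity.} I would prove the contrapositive: assuming $X \in \Anc{S}{}$ \emph{and} $(X \notindependent Y \mid S)_{\Grs_{\underline X}}$, exhibit SEMs $\M_1,\M_2$ over $\Vb\cup\{S\}$ with graph $\Grs$ and $\pro^{\M_1}(\Vb \mid S=1) = \pro^{\M_2}(\Vb \mid S=1) > 0$ yet $\pro^{\M_1}_X(Y \mid S=1) \neq \pro^{\M_2}_X(Y \mid S=1)$. Fix a directed path $q$ from $X$ to $S$ and a path $p$ from $X$ to $Y$ active given $\{S\}$ in $\Grs_{\underline X}$ (both exist by assumption); the first edge of $p$ necessarily points into $X$. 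The construction makes every variable outside $p\cup q$ essentially constant (a fixed value perturbed by independent noise, so its edges are present but inert), lets each variable along $q$ copy its $q$-predecessor (again perturbed), so the realization of $X$ is propagated to $S$, and installs along $p$ a parametrized dependence between $X$ and $Y$, with the colliders of $p$ — each an ancestor of $S$ — additionally forwarding information toward $S$. The remaining noise parameters are tuned so the two models induce the same strictly positive $\pro(\Vb \mid S=1)$. The key point: under $\mathrm{do}(X=x)$ the first edge of $p$ is cut, so $p$ stops transmitting the value of $X$, and the contribution of $p$ to $\pro_X(Y \mid S=1)$ comes to depend on the marginal law of the source end of $p$; this marginal is not pinned down by $\pro(\Vb \mid S=1)$, precisely because $S$ is downstream of $X$ along $q$ and the selection $S=1$ reweights it. The minimal instance is the gadget $Y\to X\to S$, where $\pro_X(Y \mid S=1)=\pro(Y)$ (under $\mathrm{do}(X=x)$, $Y$ is exogenous and independent of $S$) while $\pro(X,Y \mid S=1)$ leaves $\pro(Y)$ undetermined.

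\textbf{Anticipated obstacle.} Sufficiency is routine; the work is in necessity. Two points need care: (i) active paths $p$ with colliders, each of which must be wired toward $S$ without opening spurious active paths that would spoil the match of the two sub-population distributions; and (ii) simultaneously guaranteeing strict positivity of $\pro(\Vb \mid S=1)$ while retaining a free parameter whose two values make the interventional effects differ. I would handle the general $\Grs$ by solving the $Y\to X\to S$ gadget explicitly first and then reducing to it via the inert-off-path-variables device above.
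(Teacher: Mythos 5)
Your sufficiency argument is correct and is essentially the paper's: Rule 2 gives $\prs_X(Y)=\prs(Y\vert X)$ when $(X\independent Y\vert S)_{\Grs_{\underline{X}}}$, and when $X\notin\Anc{S}{}$ the parent adjustment $\sum_{\Pa{X}{}}\prs(Y\vert X,\Pa{X}{})\prs(\Pa{X}{})$ (with the degenerate case $Y\in\Pa{X}{}$ treated separately) is exactly what the paper derives via $(X\independent S\vert\Pa{X}{})_{\Grs}$.

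The gap is in necessity, and it is substantive. First, your plan to ``solve the $Y\to X\to S$ gadget and reduce the general case to it by making off-path variables inert'' cannot work when $Y\notin\Anc{X}{}$: there the active path in $\Grs_{\underline{X}}$ is a back-door path such as $X\gets Z\to\cdots\to Y$, or a path through a collider that is an ancestor of $S$ (e.g.\ $X\gets Z\to W\gets Y$ with $W\to S$), together with a separate directed path from $X$ to $S$ that may thread through the active path itself. Silencing variables off $p\cup q$ does not collapse these configurations to the chain $Y\to X\to S$; the non-identifiability certificate must be built on the actual shapes. This is why the paper spends most of its proof reducing to two canonical forms (via the at-most-one-collider lemma, a lemma for deleting a childless $Y$ with a single parent, edge-disjointness arguments for the collider-to-$S$ path, and a case analysis of where the $X\to S$ path first meets the active path), and then treats Type 1 and Type 2 (with sub-cases on whether $X\in\Anc{Y}{}$) separately. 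Second, the sentence ``the remaining noise parameters are tuned so the two models induce the same strictly positive $\pro(\Vb\vert S=1)$'' is precisely the step that needs a construction, not an assertion: any parameter you install along $p$ to make the causal effects differ also enters the observational likelihood, so matching $\pro(\Vb\vert S=1)$ requires a compensating asymmetry elsewhere. The paper engineers this with a symmetric $\pm 1$ construction (Gaussian mechanisms with means $\pm1$, $Z'=-Z+\varepsilon_Z$, and a Bernoulli selection whose success probability is a Gaussian density of the sum of $S$'s parents), proves the match by a counting identity showing the sign parameter cancels in the exponent, and then shows the effects differ by explicit Gaussian integration plus a ratio argument (if $\prs_X(Y)$ were \s-ID, the ratio $\pro^{\M_1}_X(Y=y,S=1)/\pro^{\M_2}_X(Y=y,S=1)$ could not depend on $y$). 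Note also that because the target is the post-intervention conditional $\pro_X(Y\vert S=1)$, your heuristic ``the fork-source marginal is not pinned down'' does not suffice: both $\pro_X(Y,S=1)$ and $\pro_X(S=1)$ change between models, and one must verify the ratio genuinely differs. Only your $Y\in\Anc{X}{}$ case (the $Y\to X\to S$ gadget, where $\pro_X(Y\vert S=1)=\pro(Y)$ and $\pro(Y)$ is not recoverable under selection) is complete modulo a standard recoverability fact, which the paper imports from \citet{Bareinboim_Tian_Pearl_2014}.
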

    
        Detailed proofs of our results appear in Appendices B and C.
        In the main text, we provide concise proof sketches to emphasize the key steps of our proofs.
    
        \begin{myproof}[Sketch of proof]
            \textit{Sufficiency.}
            Suppose Equation \eqref{eq: singleton} holds.
           Applying do-calculus rules allows us to show the following cases.
            \begin{itemize}
                \item If $(X \independent Y \vert S)_{\Grs_{\underline{X}}}$, then $\prs_{X}(Y) = \prs(Y \vert X)$.
                \item If $X \notin \Anc{S}{}$ and $Y \in \Pa{X}{}$, then $\prs_{X} (Y) = \prs (Y)$.
                \item If $X \notin \Anc{S}{}$ and $Y \notin \Pa{X}{}$, then
                \begin{equation*}
                     \prs_{X} (Y) = \mysum{\Pa{X}{}}{}\prs(Y \vert X, \Pa{X}{})\prs(\Pa{X}{}).
                \end{equation*}
            \end{itemize}
    
            \textit{Necessity.}
            For the necessary part, which is the challenging part of the proof, we need to show that when $X \in \Anc{S}{}$ and $(X \notindependent Y \vert S)_{\Grs_{\underline{X}}}$, then $\interSp$ is not \s-ID in $\Grs$.
            We first consider a special case where $Y \in \Anc{X}{}$ and prove the following in the appendix.
            
            \begin{claim}
                If $Y \in \Anc{X}{}$ and $X \in \Anc{S}{}$, then $\prs_X(Y)$ is not \s-ID in $\Grs$.
            \end{claim}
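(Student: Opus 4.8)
The plan is to establish non-\s-identifiability \emph{directly}, by exhibiting two SEMs $\M_1,\M_2$ on $\Grs$ with $\pro^{\M_1}(\Vb\vert S=1)=\pro^{\M_2}(\Vb\vert S=1)>0$ but $\pro^{\M_1}_X(Y\vert S=1)\neq\pro^{\M_2}_X(Y\vert S=1)$, which by Definition~\ref{def:S-ID} is exactly what is needed. The guiding intuition is that $Y$ sits upstream of $X$, which is itself upstream of the selection node $S$; consequently, in the models we build, $\prs_X(Y)$ will coincide with the \emph{population} marginal $\pro(Y)$, whereas $\prs(\Vb)$ reveals only the law of $Y$ \emph{after} selecting on a descendant of $X$, and ``undoing'' that selection requires knowing the selection mechanism, which the observed distribution does not pin down.

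First I would fix a simple directed path $\pi=(V_0,V_1,\dots,V_n)$ with $V_0=Y$, $V_\ell=X$ for some $1\le \ell<n$, and $V_n=S$, obtained by concatenating a simple directed path from $Y$ to $X$ (which exists since $Y\in\Anc{X}{}$ and $Y\neq X$) with a simple directed path from $X$ to $S$; since ancestors and descendants of $X$ can coincide only at $X$ in a DAG, these two pieces share only $X$, so $\pi$ is a simple path. I then restrict attention to SEMs of the following shape on $\Grs$: every variable not on $\pi$ is an independent fair coin that ignores all of its parents; each $V_i$ with $1\le i\le n$ is a binary function of its path-predecessor $V_{i-1}$ alone (note $V_{i-1}\in\Pa{V_i}{}$), realizing a strictly positive channel $\pro(V_i\vert V_{i-1})$; and $Y=V_0\sim\mathrm{Bernoulli}(q)$ with $q\in(0,1)$. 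Every such SEM satisfies $\prs(\Vb)>0$. Moreover the only functional route from $Y$ to $S$ passes through $X$, so after intervening on $X$ the node $S$ becomes a function of the intervention value together with the noises $\varepsilon_{V_{\ell+1}},\dots,\varepsilon_{V_n}$, hence independent of $Y=\varepsilon_Y$; this gives $\pro_X(Y{=}y\vert S{=}1)=\pro_X(Y{=}y)=\pro(Y{=}y)$ for every intervention value, i.e.\ the target is simply the population marginal of $Y$ for models in this family.

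Next I would choose $\M_1$ in this family whose channels make $V_0$ and $V_{n-1}$ correlated under $\pro^{\M_1}$ (e.g.\ all channels binary-symmetric with a common crossover probability in $(0,\tfrac12)$, and $q=\tfrac12$), and define $\M_2$ by the tilt $\pro^{\M_2}(V_0,\dots,V_{n-1})\propto\beta_{V_{n-1}}\,\pro^{\M_1}(V_0,\dots,V_{n-1})$ along $\pi$ together with the compensating mechanism $\pro^{\M_2}(S{=}1\vert V_{n-1})=\kappa\,\pro^{\M_1}(S{=}1\vert V_{n-1})/\beta_{V_{n-1}}$, leaving all off-path coins unchanged; here $\beta_0,\beta_1>0$ are taken on the one-parameter line that normalizes the tilted law (which contains $(1,1)$) but are chosen away from $(1,1)$ so that $\pro^{\M_2}(Y)\neq\pro^{\M_1}(Y)$, and $\kappa>0$ is taken small enough that $\pro^{\M_2}(S{=}1\vert V_{n-1})\in(0,1]$. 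Tilting a Markov chain by a function of its last coordinate preserves the chain factorization, so $\pro^{\M_2}$ restricted to $\pi$ again factorizes as $V_0\to\cdots\to V_{n-1}$ with strictly positive conditionals, hence $\M_2$ is a bona fide SEM of the required shape; and since $\pro^{\M_2}(V_0,\dots,V_{n-1},S{=}1)=\kappa\,\pro^{\M_1}(V_0,\dots,V_{n-1},S{=}1)$ while the off-path coins are identical in both, conditioning on $S{=}1$ yields $\prs^{\M_2}(\Vb)=\prs^{\M_1}(\Vb)>0$. Since $\pro^{\M_1}_X(Y\vert S{=}1)=\pro^{\M_1}(Y)\neq\pro^{\M_2}(Y)=\pro^{\M_2}_X(Y\vert S{=}1)$, we conclude $\prs_X(Y)$ is not \s-ID in $\Grs$.

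I expect the bookkeeping in the last step to be the main obstacle: verifying that $\M_2$ remains a legitimate, strictly positive SEM on $\Grs$ — in particular that the tilted path law still factorizes as a Markov chain along $\pi$ and therefore supplies valid positive mechanisms — and, crucially, that the admissible line of $(\beta_0,\beta_1)$ contains a point other than $(1,1)$ at which $\pro(Y)$ actually moves. A short computation shows $\pro^{\M_2}(Y{=}1)=\beta_0\,\pro^{\M_1}(Y{=}1,V_{n-1}{=}0)+\beta_1\,\pro^{\M_1}(Y{=}1,V_{n-1}{=}1)$, which is an affine function along that line with nonzero slope precisely when $\pro^{\M_1}(Y{=}1\vert V_{n-1}{=}1)\neq\pro^{\M_1}(Y{=}1\vert V_{n-1}{=}0)$, i.e.\ when $\big(V_0\notindependent V_{n-1}\big)$ under $\pro^{\M_1}$ — exactly the non-degeneracy we arranged for $\M_1$; so such a point exists arbitrarily close to $(1,1)$, keeping all entries in range. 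The reduction to $\pi$ and the identity $\prs_X(Y)=\pro(Y)$ for this family are routine, as is the verification of strict positivity.
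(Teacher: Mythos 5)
Your construction is correct, and it reaches the conclusion by a genuinely more self-contained route than the paper. The skeleton is shared: you, like the paper, reduce everything to a directed chain $Y\to\cdots\to X\to\cdots\to S$ (the two pieces meet only at $X$, exactly as in the paper's argument) and exploit the fact that on this structure the target collapses to the population marginal, $\prs_X(Y)=\pro_X(Y\vert S=1)=\pro(Y)$. The paper proves this identity for \emph{all} SEMs on the chain subgraph via Rules 1 and 3 of do-calculus, then finishes by citing Theorem~1 of \citet{Bareinboim_Tian_Pearl_2014}: since $(Y\notindependent S)$ in the chain, $\pro(Y)$ is not recoverable from $\pro(\Vb\vert S=1)$, and Lemma~\ref{lemma:remove} lifts the non-identifiability from the subgraph back to $\Grs$. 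You instead build the two witnessing SEMs explicitly: a binary Markov chain along the path with off-path variables as independent coins, and a second model obtained by tilting the chain law by a function of $V_{n-1}$ with a compensating selection mechanism $\pro^{\M_2}(S{=}1\vert V_{n-1})=\kappa\,\pro^{\M_1}(S{=}1\vert V_{n-1})/\beta_{V_{n-1}}$. The two nontrivial verifications you flag are indeed the right ones and both go through: tilting by a function of the last coordinate preserves the Markov factorization (so $\M_2$ is a legitimate, strictly positive SEM of the same shape), and the marginal $\pro^{\M_2}(Y{=}1)$ is affine along the normalization line with nonzero slope exactly when $Y\notindependent V_{n-1}$ under $\M_1$, which your BSC choice guarantees; the equality $\pro^{\M_2}(\Vb,S{=}1)=\kappa\,\pro^{\M_1}(\Vb,S{=}1)$ then gives identical selected observational distributions while the targets differ. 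Two further points in your favor: building the models directly on $\Grs$ with mechanisms that ignore extra parents effectively inlines Lemma~\ref{lemma:remove} (and is consistent with the paper's own convention, since that lemma's proof uses exactly such mechanisms), and your functional argument for $\pro_X(Y\vert S{=}1)=\pro(Y)$ only needs to hold for your specific family, so you do not need do-calculus at all. What the paper's route buys is brevity, by outsourcing the hard step to the known s-recoverability criterion; what yours buys is a fully elementary, constructive counterexample that re-proves the needed special case of that criterion from scratch.
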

    
            Accordingly, to complete the proof, suppose $Y \notin \Anc{X}{}$.
    
            \begin{claim} \label{lem: subgraph}
                If $\interSp$ is not \s-ID in a subgraph of $\Grs$, then $\interSp$ is not \s-ID in $\Grs$.
            \end{claim}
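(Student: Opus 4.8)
The statement is a monotonicity property, and the natural route is to \emph{lift} a non-identifiability witness from the subgraph to $\Grs$. Fix a subgraph $\Grs'$ of $\Grs$; its vertex set is $\Vb' \cup \{S\}$ for some $\Vb' \subseteq \Vb$ containing $X$ and $Y$ (and $S$ remains childless in $\Grs'$ automatically, since deleting vertices and edges cannot create children, so \s-identifiability of $\interSp$ in $\Grs'$ is meaningful). By hypothesis there are SEMs $\M_1'$ and $\M_2'$ with causal graph $\Grs'$ such that $\pro^{\M_1'}(\Vb' \vert S = 1) = \pro^{\M_2'}(\Vb' \vert S = 1) > 0$ while $\pro^{\M_1'}_X(Y \vert S = 1) \neq \pro^{\M_2'}_X(Y \vert S = 1)$. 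I will extend this pair to SEMs $\M_1, \M_2$ over $\Vb \cup \{S\}$ with causal graph $\Grs$ that witness non-\s-ID there.

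\textbf{Construction and verification.} For $i \in \{1,2\}$, build $\M_i$ as follows: every $V \in \Vb' \cup \{S\}$ keeps its structural equation from $\M_i'$ (legitimate since $\Pa{V}{\Grs'} \subseteq \Pa{V}{\Grs}$ and a mechanism may ignore parents), and every $V \in \Vb \setminus \Vb'$ is given the trivial mechanism $V = \varepsilon_V$ with $\varepsilon_V$ a fresh exogenous noise, independent of everything and with full support on its domain. The remaining steps are routine: since the equations of $\Vb' \cup \{S\}$ reference only vertices in $\Vb' \cup \{S\}$, the marginal law of $(\Vb', S)$ under $\M_i$ equals that under $\M_i'$, while the added variables are mutually independent of $(\Vb', S)$ and of each other. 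Hence $\pro^{\M_i}(\Vb \vert S = 1)$ factors as $\pro^{\M_i'}(\Vb' \vert S = 1)$ times a fixed strictly positive factor over $\Vb \setminus \Vb'$, which is strictly positive and, by the agreement of $\M_1'$ and $\M_2'$, independent of $i$. Intervening on $X \in \Vb'$ leaves the mechanisms of $\Vb \setminus \Vb'$ untouched, so the same reasoning gives $\pro^{\M_i}_X(Y \vert S = 1) = \pro^{\M_i'}_X(Y \vert S = 1)$ (both conditionals are well defined because $\pro^{\M_i'}_X(S=1) > 0$ for the witness); these differ between $i = 1$ and $i = 2$. Thus $(\M_1, \M_2)$ certifies that $\interSp$ is not \s-ID in $\Grs$.

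\textbf{Main obstacle.} Conceptually there is none; the one place needing care is the strict-positivity clause in Definition~\ref{def:S-ID}. The naive extension that fixes the added variables to constants would make $\pro^{\M_i}(\Vb \vert S=1)$ vanish on most configurations of $\Vb$, violating the ``$>0$'' requirement, which is exactly why the extension uses full-support independent noise instead — this preserves both the observational equality and the interventional discrepancy. A minor bit of bookkeeping is that ``subgraph'' may delete edges as well as vertices; this is harmless, because dropping an edge only means the corresponding mechanism uses fewer parents, which is always admissible, and one should invoke the standard convention that a SEM compatible with a DAG may have mechanisms ignoring some parents, so that $\M_1$ and $\M_2$ genuinely have causal graph $\Grs$.
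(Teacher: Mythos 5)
Your proof is correct and follows essentially the same route as the paper's Lemma~\ref{lemma:remove}: lift the witnessing pair of SEMs from the subgraph to $\Grs$ by letting the mechanisms ignore the extra parents, so both the observational agreement and the interventional discrepancy are preserved. The only difference is cosmetic: the paper adds one edge at a time over a fixed vertex set, whereas you do the extension in one shot and additionally cover deleted vertices by giving them independent full-support noise, which is exactly the right fix for the positivity clause in Definition~\ref{def:S-ID}.
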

    
            To prove the theorem using Claim \ref{lem: subgraph}, we first introduce a subgraph of $\Grs$ and then show that $\interSp$ is not \s-ID in that subgraph.
            
            \begin{claim} \label{claim: min collider}
               There exists a path between $X$ and $Y$ in $\Grs_{\underline{X}}$, which is not blocked by $S$, and it contains at most one collider.
            \end{claim}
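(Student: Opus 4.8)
\begin{myproof}[Proof sketch for Claim~\ref{claim: min collider}]
The plan is to start from an active path with the fewest colliders and argue that, if it had two or more, one could reroute it \emph{through} $S$ to obtain an active path with at most one collider, contradicting minimality. Throughout, ``active'' abbreviates ``not blocked by $S$'', and all ancestor sets below are taken in $\Grs_{\underline{X}}$.

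Since $(X \notindependent Y \vert S)_{\Grs_{\underline{X}}}$, an active path from $X$ to $Y$ in $\Grs_{\underline{X}}$ exists; let $\mathcal{P}$ be one with the minimum number of colliders, and suppose for contradiction that it has colliders $C_1, \dots, C_k$ (listed from $X$ to $Y$) with $k \geq 2$. I would first record two structural facts. (i) Because $C_1$ is the first collider and $X$ has no incoming edge in $\Grs_{\underline{X}}$, the portion of $\mathcal{P}$ from $X$ to $C_1$ is a directed path $X \to \cdots \to C_1$; symmetrically, the portion from $C_k$ to $Y$ contains no collider. (ii) Since the conditioning set $\{S\}$ is a singleton, every collider on the active path $\mathcal{P}$ is an ancestor of $S$; in particular there are directed paths $\pi_1 \colon C_1 \to \cdots \to S$ and $\pi_2 \colon C_k \to \cdots \to S$ in $\Grs_{\underline{X}}$.

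Next I would form the walk obtained by concatenating: the directed segment $X \to \cdots \to C_1$, then $\pi_1$, then $\pi_2$ traversed backwards (from $S$ to $C_k$), then the segment of $\mathcal{P}$ from $C_k$ to $Y$. Reading off arrow directions, the first two pieces form a directed path $X \to \cdots \to S$, the reversed $\pi_2$ is ``backwards'' directed, and the last piece has no collider; hence the only place a collider can arise along this walk is at the vertex $S$ where $\pi_1$ meets the reversal of $\pi_2$ (coinciding with an endpoint if $C_1 = S$ or $C_k = S$). That collider lies in the conditioning set, so it does not block; every other vertex is a non-collider, and none of the newly added interior vertices can be $S$ since $S$ has no outgoing edge. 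So this is an active \emph{walk} from $X$ to $Y$ with exactly one collider.

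It remains to replace this walk by an actual path without gaining colliders, which I expect to be the crux. I would establish the auxiliary fact that \emph{any active walk from $X$ to $Y$ with at most one collider contains an active path from $X$ to $Y$ with at most one collider}: take a shortest active walk with at most one collider; if a vertex $v$ repeats, splice out the cycle at $v$ and check that the result is a strictly shorter active walk still with at most one collider. This reduces to a short case analysis on the two edges at $v$ that the spliced walk inherits: if either points away from $v$, no collider is created at $v$ and the colliders of the excised cycle are lost; if both point into $v$, the orientation pattern of the excised cycle forces it to contain a collider, that collider is an ancestor of $S$, and tracing the cycle out of $v$ exhibits $v$ itself as an ancestor of $S$ --- so the spliced walk remains active and its collider count does not increase. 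Applying this to the walk built above yields an active path from $X$ to $Y$ with at most one collider, contradicting the minimality of $\mathcal{P}$. The main difficulty, as indicated, is this walk-to-path reduction: a naive shortcut can introduce a new collider, and the case analysis is exactly what rules that out.
\end{myproof}
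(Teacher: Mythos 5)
Your core argument is the same as the paper's: take an active path $\PR$ with the fewest colliders, assume it has at least two, use the fact that every collider on an active path must lie in $\Anc{S}{}$ to obtain directed paths from the two extreme colliders to $S$, and reroute to get an active connection with at most one collider, contradicting minimality. Where you differ is in how the reroute is turned into a genuine path: the paper joins the two directed paths at their \emph{first intersection} $N$ (possibly $S$) and treats the concatenation directly as a path whose unique collider is $N$, whereas you join them at $S$ itself and then prove an explicit walk-to-path splicing lemma (an active walk with at most one collider yields an active path with at most one collider). Your extra lemma is the more careful treatment: the paper's concatenation could in principle revisit vertices of the original segments of $\PR$, a point it leaves implicit, and your case analysis at a repeated vertex $v$ --- in particular the observation that if $v$ becomes a new collider then either $v$ was already a collider at one of its occurrences or the excised cycle must contain a collider, so in all cases $v \in \Anc{S}{}$ and the count stays at one --- is exactly what closes that issue, at the cost of extra bookkeeping. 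One slip to fix: in $\Grs_{\underline{X}}$ the \emph{outgoing} edges of $X$ are removed, not the incoming ones, so the first edge of $\PR$ points into $X$ and the segment from $X$ to the first collider $C_1$ has the form $X \gets \cdots \gets Z \to \cdots \to C_1$ rather than being a directed path out of $X$; this does not harm your collider count, since that segment is collider-free and $C_1$ becomes a non-collider once $\pi_1$ is appended, but the justification ``the first two pieces form a directed path $X \to \cdots \to S$'' should be restated accordingly (and, likewise, note in your splicing case (a) that $v \neq S$ because $S$ has no outgoing edge, so no non-collider in the conditioning set is ever created).
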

    
            Denote by $\PR$, a path between $X$ and $Y$ in $\Grs_{\underline{X}}$ with the minimum number of colliders such that $S$ does not block $\PR$.
            Due to Claim \ref{claim: min collider}, path $\PR$ exists and has at most one collider.
    
            Let $\Gr'$ be a minimal (in terms of edges) subgraph of $\Grs$ such that
            \begin{enumerate*}[label=(\roman*)]
                \item $\Gr'$ contains $\PR$,
                \item $X \in \Anc{S}{\Gr'}$, and
                \item if $\mathcal{P}$ has exactly one collider, then the collider is an ancestor of $S$ in $\Gr'$.
            \end{enumerate*}
            Note that if $\PR$ has a collider, then it is an ancestor of $S$ in $\Grs$ since $S$ does not block $\PR$.
            Thus, graph $\Gr'$ with these properties exists.
    
            \begin{figure}
                \centering
                \begin{subfigure}[b]{0.4\textwidth}
                    \centering
                    \begin{tikzpicture}
                        \tikzset{edge/.style = {->,> = latex',-{Latex[width=1.5mm]}}}
                        % vertices
                        \node [block, label=center:Z](Z) {};
                        \node [block, below right = 0.5 and 0.5 of Z, label=center:N](N) {};
                        \node [block, right = 1 of N, label=center:Y](Y) {};
                        \node [block, below left = 0.5 and 0.5 of Z, label=center:X](X) {};
                        \node [block-s, right = 0.5 of Y, label=center:S](S) {};
                        %edges
                        \draw[edge, dotted] (Z) to (N);
                        \draw[edge, dotted] (N) to (Y);
                        \draw[edge, dotted] (X) to (N);
                        \draw[edge, dotted] (Y) to (S);
                        \draw[edge, dotted] (Z) to (X);
                    \end{tikzpicture}
                    \caption{Type 1. $N$ can coincide with $Y$.}
                    \label{fig:type1}
                \end{subfigure}
                
                \begin{subfigure}[b]{0.4\textwidth}
                    \centering
                    \begin{tikzpicture}[
                        node distance = 1ex and 0em,
                        outer/.style={draw=gray, thick, rounded corners,
                                      densely dashed, fill=white!5,
                                      inner xsep=0ex, xshift=0ex, inner ysep=2ex, yshift=1ex,
                                      fit=#1}
                              ]
                        \tikzset{edge/.style = {->,> = latex',-{Latex[width=1.5mm]}}}
                        % vertices
                        \node [block, label=center:Z](Z) {};
                        \node [block, right = 1 of Z, label=center:N](N) {};
                        \node [block, right = 1 of N, label=center:Y](Y) {}; 
                        \node [block, left = 1 of Z, label=center:X](X) {};
                        \node [block-s, below = 0.5 of N, label=center:S](S) {};
                        %edges
                        \draw[edge, dotted] (Z) to (N);
                        \draw[edge, dotted] (Y) to (N);
                        \draw[edge, dotted] (N) to (S);
                        \draw[edge, dotted] (Z) to (X);
                        \scoped[on background layer]
                        \node [outer=(Z.east)(S) (Y.east),
                             label={[anchor=north]:}](M) {};
                        \draw[edge, dotted, red] (X.south east) to [bend right=10] (M);
                
                    \end{tikzpicture}  
                    \caption{Type 2. $N$ can coincide with $S$.}
                    \label{fig:type2}
                \end{subfigure}
                \caption{Two types of DAGs used in the proof of Theorem \ref{th:markov-single}. The dotted edges indicate the presence of a directed path.} 
                \label{fig:markov_cases}
            \end{figure}

            Figure \ref{fig:markov_cases} illustrates two types of DAGs, where the dotted edges indicate the presence of a directed path, and the directed paths do not share any edges.
            Variable $N$ can coincide with $Y$ in Figure \ref{fig:type1} and with $S$ in Figure \ref{fig:type2}.
            Furthermore, in Figure \ref{fig:type2}, the directed path in red is towards a variable inside the box, i.e., the variables in the directed paths from $Z$ to $N$ (except $Z$ itself), from $N$ to $S$, and from $Y$ to $N$.
            
            In the appendix, we introduce a series of transformations to simplify $\Gr'$ and convert it to one of the two forms depicted in Figure \ref{fig:markov_cases}.
            Denote by $\Gr''$, the DAG obtained by this conversion.
            This conversion ensures that if $\interSp$ is not \s-ID in $\Gr''$, then it is not \s-ID in $\Gr'$.
            Therefore, it suffices to show that $\interSp$ is not \s-ID in $\Gr''$.
            To this end, in the appendix, we introduce two SEMs with causal graph $\Gr''$, denoted by $\M_1$ and $\M_2$, and show that $\pro^{\M_1}_{X}(Y \vert S = 1) \neq \pro^{\M_2}_{X} (Y \vert S = 1)$, while $\pro^{\M_1}(\Vb \vert S = 1) = \pro^{\M_2}(\Vb \vert S = 1)>0$.
            This proves that $\interSp$ is not \s-ID in $\Gr''$ and completes the proof.
        \end{myproof}
    
        \begin{figure}[t]
            \centering
            \begin{subfigure}{0.45 \textwidth}
                \centering
                \resizebox{0.45\textwidth}{!}{     
                    \begin{tikzpicture}
                                \tikzset{edge/.style = {->,> = latex',-{Latex[width=1.5mm]}}}
                                % vertices
                                \node [block, label=center:X](X) {};
                               
                                \node [block-s, below right= 1 and 1 of X, label=center:S](S) {};
                                 \node [block, above right= 1 and 1 of S, label=center:Y](Y) {};
                                %edges
                                \draw[edge] (X) to (S);
                                \draw[edge] (Y) to (S);
                    \end{tikzpicture}
                } \hfill
                \resizebox{0.45\textwidth}{!}{
                    \begin{tikzpicture}
                        \tikzset{edge/.style = {->,> = latex',-{Latex[width=1.5mm]}}}
                        % vertices
                        \node [block, label=center:X](X) {};  
                        \node [block-s, below right= 1 and 1 of X, label=center:S](S) {};
                         \node [block, above right= 1 and 1 of S, label=center:Y](Y) {};
                        %edges
                        \draw[edge] (X) to (Y);
                        \draw[edge] (Y) to (S);
                    \end{tikzpicture}
                }
                \caption{$\interSp = \prs(Y \vert X)$}
                \label{fig:singleton-main-a}
            \end{subfigure}
            
            \begin{subfigure}{0.45\textwidth}
                \centering
                \resizebox{.6\textwidth}{!}{              
                    \begin{tikzpicture}
                                \tikzset{edge/.style = {->,> = latex',-{Latex[width=1.5mm]}}}
                                % vertices
                                \node [block-s, label=center:S](S) {};
                                \node [block, above right= 0.5 and 0.5 of S, label=center:W](W) {};
                                \node [block, above left= 0.5 and  0.5 of S, label=center:Z](Z) {};
                                \node [block, below left= 0.3 and 2 of S, label=center:X](X) {};
                                \node [block, below right= 0.3 and 2 of S, label=center:Y](Y) {};
                                %edges
                                \draw[edge] (Z) to (X);
                                \draw[edge] (X) to (Y);
                                \draw[edge] (Z) to (S);
                                \draw[edge] (W) to (S);
                                \draw[edge] (W) to (Y);
                    \end{tikzpicture}
                }
                \caption{$\interSp = \mysum{Z}{}\prs(Y\vert X, Z)\prs(Z)$}
                \label{fig:singleton-main-b}
            \end{subfigure}
            \caption{Three DAGs in which $\interSp$ is \s-ID.}
            \label{fig:singleton_main}
        \end{figure}
    
        Figure \ref{fig:singleton_main} depicts three example graphs in which $\interSbp$ is \s-ID.
        In both DAGs in Figure \ref{fig:singleton-main-a}, $(X \independent Y \vert S)_{\Grs_{\underline{X}}}$.
        As mentioned in the sketch of proof of Theorem \ref{th:markov-single}, this implies that $\prs_{X}(Y) = \prs(Y \vert X)$.
        We note that in the left graph, $X$ does not have any causal effect on $Y$ in the population (i.e., $\pro_X(Y) = \pro(Y)$) since $Y$ is not a descendent of $X$.
        However, $X$ has causal effect on $Y$ in the sub-population (i.e., $\interSp \neq \prs(Y)$) due to the dependency of $X$ and $Y$ in $\prs$.
        In Figure \ref{fig:singleton-main-b}, since $X \notin \Anc{S}{} = \{Z,W,S\}$ and $Y \notin \Pa{X}{} = \{Z\}$, we have $\interSp = \sum_{Z}\prs(Y\vert Z, X)\prs(Z)$.
        Note that while $\pro_{X}(Y) = \pro(Y\vert X)$, $\prs_{X}(Y) \neq \prs(Y\vert X)$.
        \begin{remark}
            These examples show that ignoring $S$ and assuming that our available samples are generated from $\pro$ (as opposed to $\prs$) might lead to erroneous inferences.
        \end{remark}
    
        \begin{figure}[t]
            \centering
            \begin{subfigure}[b]{0.18\textwidth}
                \centering
                \begin{tikzpicture}
                    \tikzset{edge/.style = {->,> = latex',-{Latex[width=1.5mm]}}}
                    % vertices
                    \node [block, label=center:X](X) {};
                    \node [block, above right= 0.5 and 0.5 of X, label=center:Z](Z) {};
                    \node [block, below right= 0.5 and 0.5 of Z, label=center:Y](Y) {};
                    \node [block-s, below right= 0.5 and 0.5 of X, label=center:S](S) {};
                    %edges
                    \draw[edge] (Z) to (X);
                    \draw[edge] (Z) to (Y);
                    \draw[edge] (X) to (Y);
                    \draw[edge] (Y) to (S);
                \end{tikzpicture}
            \end{subfigure}
            \hfill
            \begin{subfigure}[b]{0.28\textwidth}
                \centering
                \begin{tikzpicture}
                    \tikzset{edge/.style = {->,> = latex',-{Latex[width=1.5mm]}}}
                    % vertices
                    \node [block, label=center:X](X) {};
                    \node [block, above right=  1 and 0.3 of X, label=center:Z](Z) {};
                    \node [block, right= 2.2 of X, label=center:Y](Y) {};
                    \node [block, above left = 1 and 0.3 of Y, label=center:W](W) {};
                    \node [block-s, right= 0.5 of W, label=center:S](S) {};
                    %edges
                    \draw[edge] (Z) to (X);
                    \draw[edge] (Z) to (W);
                    % \draw[edge] (X) to[bend right] (Y);
                    \draw[edge] (X) to (Y);   
                    \draw[edge] (Y) to (W);    
                    \draw[edge] (W) to (S);
                \end{tikzpicture}    
            \end{subfigure}
            \caption{Two DAGs in which $\interSp$ is not \s-ID.}
            \label{fig: singleton-not s-ID}
        \end{figure}
    
        Figure \ref{fig: singleton-not s-ID} depicts two DAGs in which $X \in \Anc{S}{}$ and $(X \notindependent Y \vert S)_{\Grs_{\underline{X}}}$ (in the left graph $X \gets Z \to Y$ and in the right graph $X \gets Z \to W \gets Y$ is an active path in $\Grs_{\underline{X}}$).
        Hence, Equation \eqref{eq: singleton} does not hold, and Theorem \ref{th:markov-single} implies that $\interSp$ is not \s-ID.
    
    \subsection{Conditions for s-Identifiability: Multivariate Case}
        We present a necessary and sufficient condition for $\interSbp$ to be \s-ID in DAG $\Grs$ in the multivariate case.
        To do so, we decompose $\Xb$ into two parts: ancestors and non-ancestors of $S$.
        The following proposition demonstrates that the conditional causal effect of the latter portion of $\Xb$ on any other subset is always \s-ID.
    
        \begin{proposition} \label{prp: X2}
            Suppose $\Grs$ is an augmented DAG over $\Vb \cup \{S\}$, and let $\Xb \subsetneq \Vb$.
            For $\Xb_{2} \coloneqq \Xb \setminus \Anc{S}{}$, conditional causal effect $\prs_{\Xb_2} (\Vb \setminus \Xb_2)$ is \s-ID in $\Grs$ and can be computed from $\prs (\Vb)$ by
            \begin{equation} \label{eq: X2}
                \prs (\Anc{S}{} \setminus S) \prod_{W \in \Wb} \prs (W \vert \Pa{W}{}),
            \end{equation}
            where $\Wb =  \Vb \setminus (\Xb_2 \cup \Anc{S}{})$.
        \end{proposition}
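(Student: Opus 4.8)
The plan is to compute $\prs_{\Xb_2}(\Vb\setminus\Xb_2)=\pro_{\Xb_2}(\Vb\setminus\Xb_2\vert S=1)$ in closed form, starting from the truncated factorization (g-formula) for intervening on $\Xb_2$ in the augmented DAG $\Grs$, and regrouping the resulting product along the partition $\Vb\cup\{S\}=\Xb_2\,\cup\,\Anc{S}{}\,\cup\,\Wb$ into three pairwise-disjoint pieces ($\Xb_2$ and $\Anc{S}{}$ are disjoint by the definition $\Xb_2=\Xb\setminus\Anc{S}{}$, and $\Wb$ is disjoint from both by its definition). The key structural observation is that, since $\Xb_2$ contains no ancestor of $S$, deleting the incoming edges of $\Xb_2$ does not touch the induced subgraph on $\Anc{S}{}$; hence intervening on $\Xb_2$ changes neither the marginal law of $\Anc{S}{}$ nor the selection probability $\pro(S=1)$ --- this is exactly where the hypothesis $\Xb_2=\Xb\setminus\Anc{S}{}$ is used in an essential way.

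First I would apply the truncated factorization in $\Grs$ to obtain $\pro_{\Xb_2}(\Vb\setminus\Xb_2,S)=\prod_{V\in\Anc{S}{}}\pro(V\vert\Pa{V}{})\cdot\prod_{W\in\Wb}\pro(W\vert\Pa{W}{})$, using that $\Pa{V}{\Grs_{\overline{\Xb_2}}}=\Pa{V}{}$ for every $V\notin\Xb_2$ and that $S$ is childless, so its factor $\pro(S\vert\Pa{S}{})$ falls inside the first product (as $\Pa{S}{}\subseteq\Anc{S}{}$) and no $W\in\Wb$ has $S$ among its parents. Since $\Anc{S}{}$ is an ancestral set, $\prod_{V\in\Anc{S}{}}\pro(V\vert\Pa{V}{})=\pro(\Anc{S}{})$; evaluating at $S=1$ and writing $\pro(\Anc{S}{}\setminus S,S=1)=\pro(S=1)\,\prs(\Anc{S}{}\setminus S)$ then gives $\pro_{\Xb_2}(\Vb\setminus\Xb_2,S=1)=\pro(S=1)\,\prs(\Anc{S}{}\setminus S)\prod_{W\in\Wb}\pro(W\vert\Pa{W}{})$.

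Next I would handle the normalizer: marginalizing this expression over $\Vb\setminus\Xb_2$ and summing out the $\Wb$-factors in reverse topological order (each $\sum_{W}\pro(W\vert\Pa{W}{})=1$) shows $\sum_{\Wb}\prod_{W\in\Wb}\pro(W\vert\Pa{W}{})=1$, hence $\pro_{\Xb_2}(S=1)=\pro(S=1)$ --- equivalently this is Rule 3 of do-calculus applied with $(S\independent\Xb_2)_{\Grs_{\overline{\Xb_2}}}$, which holds because $S$ is not a descendant of $\Xb_2$. Dividing yields $\prs_{\Xb_2}(\Vb\setminus\Xb_2)=\prs(\Anc{S}{}\setminus S)\prod_{W\in\Wb}\pro(W\vert\Pa{W}{})$. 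Finally I would upgrade each factor $\pro(W\vert\Pa{W}{})$ to $\prs(W\vert\Pa{W}{})$: for $W\in\Wb$ we have $W\notin\Anc{S}{}$, i.e., $S\notin\De{W}{\Grs}$, and $S\notin\Pa{W}{}$ (childless), so $W$ is d-separated from $S$ given $\Pa{W}{}$ in $\Grs$, whence $\pro(W\vert\Pa{W}{})=\pro(W\vert\Pa{W}{},S=1)=\prs(W\vert\Pa{W}{})$. This is precisely \eqref{eq: X2}, and since its right-hand side is built only from a marginal and conditionals of $\prs(\Vb)$, $\prs_{\Xb_2}(\Vb\setminus\Xb_2)$ is \s-ID. (All conditionings are well-defined since the \s-ID setting assumes $\prs(\Vb)>0$, so $\pro(S=1)>0$.)

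I expect the main obstacle to be making the two ``unchanged'' claims for the intervention airtight --- that $\pro_{\Xb_2}$ leaves both $\pro(\Anc{S}{})$ and $\pro(S=1)$ intact --- which is exactly the place where $\Xb_2=\Xb\setminus\Anc{S}{}$ cannot be relaxed, together with the easy but pivotal step of rewriting the $\Wb$-factors in terms of $\prs$ rather than $\pro$. Once these are pinned down, the remainder is routine manipulation of Markov factorizations.
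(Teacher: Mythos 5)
Your proposal is correct and follows essentially the same route as the paper's proof: truncated factorization under the intervention on $\Xb_2$, the ancestral-set identity $\prod_{V\in\Anc{S}{}}\pro(V\vert\Pa{V}{})=\pro(\Anc{S}{})$ (the paper's Lemma on ancestral sets), Rule 3 (equivalently $\Xb_2\cap\Anc{S}{}=\varnothing$) to get $\pro_{\Xb_2}(S=1)=\pro(S=1)$, and the local Markov relation $(W\independent S\vert\Pa{W}{})_{\Grs}$ for $W\in\Wb$ to replace $\pro(W\vert\Pa{W}{})$ by $\prs(W\vert\Pa{W}{})$. The only cosmetic difference is that you work with the joint $\pro_{\Xb_2}(\Vb\setminus\Xb_2,S=1)$ and divide at the end, whereas the paper writes the conditional from the start; the substance is identical.
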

        \begin{corollary}\label{coro}
            For $\Yb \subseteq \Vb \setminus \Xb_{2}$, conditional causal effect $\prs_{\Xb_2} (\Yb)$ is \s-ID in $\Grs$ since
            \begin{equation} \label{eq: X2 to Y}
                \prs_{\Xb_2} (\Yb) = \sum_{\Vb \setminus ( \Xb_2 \cup \Yb)} \prs_{\Xb_2} (\Vb \setminus \Xb_2).
            \end{equation}
        \end{corollary}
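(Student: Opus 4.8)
The statement to be proved, Corollary~\ref{coro}, is an immediate consequence of Proposition~\ref{prp: X2}: because $\Yb \subseteq \Vb \setminus \Xb_2$, marginalizing the joint post-interventional law $\prs_{\Xb_2}(\Vb \setminus \Xb_2)$ over the realizations of $\Vb \setminus (\Xb_2 \cup \Yb)$ yields $\prs_{\Xb_2}(\Yb)$, which is exactly \eqref{eq: X2 to Y}; and since Proposition~\ref{prp: X2} writes $\prs_{\Xb_2}(\Vb \setminus \Xb_2)$ as a fixed functional of $\prs(\Vb)$, so is $\prs_{\Xb_2}(\Yb)$, which gives \s-identifiability. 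Thus the real work lies in Proposition~\ref{prp: X2}, for which I outline a plan below.

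The plan for Proposition~\ref{prp: X2} is to compute $\prs_{\Xb_2}(\Vb \setminus \Xb_2) = \pro_{\Xb_2}(\Vb \setminus \Xb_2 \vert S = 1)$ from the truncated factorization of the $S$-augmented SEM and to simplify it using the position of $\Xb_2 = \Xb \setminus \Anc{S}{}$ relative to $S$. First I would isolate the driving structural fact: no variable in $\Xb_2$ is a $\Grs$-ancestor of any variable in $\Anc{S}{}$ --- a directed path from $X \in \Xb_2$ into $\Anc{S}{}$ would force $X \in \Anc{S}{}$, a contradiction. Hence $\Anc{S}{}$ is ancestrally closed and disjoint from $\Xb_2$; the sets $\Anc{S}{} \setminus S$, $\Xb_2$, and $\Wb = \Vb \setminus (\Xb_2 \cup \Anc{S}{})$ partition $\Vb$; and every $V \in \Anc{S}{}$ has $\Pa{V}{} \subseteq \Anc{S}{}$.

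The computation then splits into three steps. (i) By the truncated factorization for intervening on $\Xb_2$ in the augmented SEM, $\pro_{\Xb_2}(\Vb \setminus \Xb_2, S)$ equals $\prod_{V \in (\Vb \cup \{S\}) \setminus \Xb_2} \pro(V \vert \Pa{V}{})$ with $\Xb_2$ fixed to the intervened values; I would partition this product into the factors indexed by $\Anc{S}{}$ (the $S$-factor included) and those indexed by $\Wb$. Since the $\Anc{S}{}$-factors use only parents lying in $\Anc{S}{}$, they are unaffected by the intervention, and by the marginal factorization of an ancestral set their product, with $S = 1$, is $\pro(\Anc{S}{} \setminus S,\, S = 1)$. (ii) As $\Xb_2$ contains no ancestor of $S$, Rule~3 of do-calculus gives $\pro_{\Xb_2}(S) = \pro(S)$, so dividing by $\pro_{\Xb_2}(S = 1) = \pro(S = 1)$ converts the $\Anc{S}{}$-block into $\pro(\Anc{S}{} \setminus S \vert S = 1) = \prs(\Anc{S}{} \setminus S)$. (iii) For each $W \in \Wb$, $S$ is a non-descendant of $W$ (because $W \notin \Anc{S}{}$) and $S \notin \Pa{W}{}$, so the local Markov property at $W$ yields $(W \independent S \vert \Pa{W}{})_{\Grs}$, hence $\pro(W \vert \Pa{W}{}) = \pro(W \vert \Pa{W}{}, S = 1) = \prs(W \vert \Pa{W}{})$. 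Combining (i)--(iii) gives $\prs_{\Xb_2}(\Vb \setminus \Xb_2) = \prs(\Anc{S}{} \setminus S) \prod_{W \in \Wb} \prs(W \vert \Pa{W}{})$, which is \eqref{eq: X2}; as the right-hand side depends only on $\prs(\Vb)$, the effect is \s-ID.

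I do not expect a single hard step; the proof is mostly careful bookkeeping of which factors of the truncated product survive the intervention and the conditioning on $S = 1$. The two places that genuinely need an argument are the replacement $\pro(W \vert \Pa{W}{}) = \prs(W \vert \Pa{W}{})$ for $W \in \Wb$ and the invariance $\pro_{\Xb_2}(\Anc{S}{}) = \pro(\Anc{S}{})$ used both for the normalization and for the $\Anc{S}{}$-block, and both reduce to the ancestral closure of $\Anc{S}{}$ together with, respectively, the local Markov property and Rule~3. A final routine point is to track positivity --- guaranteed by the hypothesis $\prs(\Vb) > 0$ in Definition~\ref{def:S-ID} (together with $\pro(S = 1) > 0$) --- so that every conditional distribution appearing in the derivation is well defined.
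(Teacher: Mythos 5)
Your proposal is correct and follows essentially the same route as the paper: the corollary is obtained by marginalizing $\prs_{\Xb_2}(\Vb \setminus \Xb_2)$ over $\Vb \setminus (\Xb_2 \cup \Yb)$, and your plan for Proposition \ref{prp: X2} (truncated factorization, Rule 3 giving $\pro_{\Xb_2}(S)=\pro(S)$, the ancestral-set factorization of $\Anc{S}{}$, and $(W \independent S \vert \Pa{W}{})_{\Grs}$ for $W \in \Wb$) matches the paper's proof step for step, with your appeal to the local Markov property and the standard ancestral factorization playing the role of the paper's Lemmas on d-separation and ancestral sets.
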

    
        So far, we have shown that $\prs_{\Xb_2} (\Vb \setminus \Xb_2)$ is always \s-ID in $\Grs$, where $\Xb_{2} = \Xb \setminus \Anc{S}{}$.
        The following theorem provides a necessary and sufficient condition for $\interSbp$ to be \s-ID in $\Grs$.
        When this condition holds, the following theorem presents a formula to compute $\interSbp$ in terms of $\prs_{\Xb_2} (\Vb \setminus \Xb_2)$, which is always \s-ID as established in Corollary \ref{coro}.
    
        \begin{theorem}\label{th:markov-general}
            For disjoint subsets $\Xb$ and $\Yb$ of $\Vb$, let $\Xb_{1} \coloneqq \Xb \cap \Anc{S}{}$ and $\Xb_{2} \coloneqq \Xb \setminus \Anc{S}{}$.
            
            \begin{itemize}
                \item If $\Xb_1 = \varnothing$:  Conditional causal effect $\prs_{\Xb} (\Yb)$ is \s-ID and can be computed from Equation \eqref{eq: X2 to Y}.
                \item If $\Xb_1 \neq \varnothing$: Conditional causal effect $\prs_{\Xb} (\Yb)$ is \s-ID if and only if
                    \begin{equation} \label{eq: general case cond}
                        (\Xb_{1} \independent \Yb \vert \Xb_{2}, S)_{\Grs_{\underline{\Xb_{1}}\overline{\Xb_{2}}}}.
                    \end{equation}
                    Moreover, when \eqref{eq: general case cond} holds, we have 
                    \begin{equation}\label{eq:general_formula}
                        \prs_{\Xb} (\Yb) = \prs (\Xb_1)^{-1} \sum_{\Vb \setminus (\Xb \cup \Yb)} \prs_{\Xb_2} (\Vb \setminus \Xb_2),
                    \end{equation}
                    where $\prs_{\Xb_2} (\Vb \setminus \Xb_2)$ can be computed from $\prs(\Vb)$ using Equation \eqref{eq: X2}.
            \end{itemize}
        \end{theorem}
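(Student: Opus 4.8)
The argument naturally splits along the statement. If $\Xb_1 = \varnothing$ then $\Xb = \Xb_2$, so $\prs_{\Xb}(\Yb) = \prs_{\Xb_2}(\Yb)$ is \s-ID and is given by \eqref{eq: X2 to Y}; this is exactly Corollary \ref{coro}, which rests on Proposition \ref{prp: X2}. So from now on assume $\Xb_1 \neq \varnothing$, and treat sufficiency and necessity in turn.

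\textbf{Sufficiency.} Assume \eqref{eq: general case cond}. Starting from $\prs_{\Xb}(\Yb) = \pro_{\Xb_2, \Xb_1}(\Yb \vert S = 1)$, I would first remove the intervention on $\Xb_1$ by Rule 2 of do-calculus: the hypothesis of Rule 2 (with intervened set $\Xb_2$, conditioned-and-un-intervened set $\Xb_1$, and context $\{S\}$) is precisely the $d$-separation $(\Yb \independent \Xb_1 \vert \Xb_2, S)_{\Grs_{\underline{\Xb_1}\overline{\Xb_2}}}$ of \eqref{eq: general case cond}, hence $\pro_{\Xb_2, \Xb_1}(\Yb \vert S) = \pro_{\Xb_2}(\Yb \vert \Xb_1, S)$ and therefore $\prs_{\Xb}(\Yb) = \pro_{\Xb_2}(\Yb \vert \Xb_1, S = 1) = \pro_{\Xb_2}(\Yb, \Xb_1 \vert S=1) / \pro_{\Xb_2}(\Xb_1 \vert S=1)$. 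Because $\Yb \cup \Xb_1 \subseteq \Vb \setminus \Xb_2$ and $\Xb_1 \cup \Xb_2 = \Xb$, the numerator equals $\sum_{\Vb \setminus (\Xb \cup \Yb)} \prs_{\Xb_2}(\Vb \setminus \Xb_2)$. For the denominator I would show $\pro_{\Xb_2}(\Xb_1 \vert S=1) = \prs(\Xb_1)$ via Rule 3: since $\Xb_2 \cap \Anc{S}{} = \varnothing$ we have $\Xb_2 \setminus \Anc{S}{} = \Xb_2$, so the hypothesis of Rule 3 reduces to $(\Xb_1 \independent \Xb_2 \vert S)_{\Grs_{\overline{\Xb_2}}}$. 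That separation follows by a short structural argument: in $\Grs_{\overline{\Xb_2}}$ a vertex of $\Xb_2$ has no incoming edge, so an active path leaving it begins with an out-edge and runs along a directed segment until its first collider; that collider, being active, is an ancestor of $S$, forcing the $\Xb_2$-endpoint into $\Anc{S}{}$ (and if the path has no collider it is a directed path into $\Xb_1 \subseteq \Anc{S}{}$, with the same conclusion), contradicting $\Xb_2 \cap \Anc{S}{} = \varnothing$. Substituting, invoking Proposition \ref{prp: X2} to express $\prs_{\Xb_2}(\Vb \setminus \Xb_2)$ through \eqref{eq: X2}, and noting that $\prs(\Xb_1) > 0$ by the positivity in Definition \ref{def:S-ID}, yields \eqref{eq:general_formula}.

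\textbf{Necessity.} I would argue the contrapositive: assuming \eqref{eq: general case cond} fails, construct two SEMs over $\Grs$ that agree on $\prs(\Vb) > 0$ but differ on $\prs_{\Xb}(\Yb)$. The failure furnishes a path $\PR$ between some $X_1 \in \Xb_1$ and some $Y \in \Yb$ that is active in $\Grs_{\underline{\Xb_1}\overline{\Xb_2}}$ given $\Xb_2 \cup \{S\}$. Two observations drive the reduction. First, $\PR$ has no internal vertex in $\Xb_1 \cup \Xb_2$: a vertex of $\Xb_1$ has no out-edge in $\Grs_{\underline{\Xb_1}}$, so it could only be a collider, but then it has no descendants and cannot be an ancestor of $\Xb_2 \cup \{S\}$, contradicting activeness; a vertex of $\Xb_2$ has no in-edge in $\Grs_{\overline{\Xb_2}}$, so it could only be a non-collider, again contradicting activeness. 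Second, consequently $\PR$ remains an active path between $X_1$ and $Y$ given $\{S\}$ in the augmented DAG $(\Grs_{\overline{\Xb_2}})_{\underline{X_1}}$, its colliders are still ancestors of $S$ there, and $X_1 \in \Anc{S}{\Grs_{\overline{\Xb_2}}}$ since some directed $X_1$-to-$S$ path avoids $\Xb_2$ (any $\Xb_2$-vertex on such a path would lie in $\Anc{S}{}$). Hence Theorem \ref{th:markov-single}, applied to the augmented DAG $\Grs_{\overline{\Xb_2}}$, shows $\prs_{X_1}(Y)$ is not \s-ID in $\Grs_{\overline{\Xb_2}}$, witnessed by SEMs $\M_1, \M_2$.

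To finish, I would lift this witness to $\Grs$. Passing from $\Grs$ to $\Grs_{\overline{\Xb_2}}$ only deletes incoming edges of $\Xb_2$, so every vertex keeps its parent set except the vertices of $\Xb_2$, which become roots. I can therefore define SEMs $\M_1', \M_2'$ on $\Grs$ by keeping all mechanisms of $\M_1, \M_2$ verbatim and letting each $\Xb_2$-mechanism ignore the parents it acquires in $\Grs$; this leaves the joint distributions over $\Vb \cup \{S\}$ unchanged, so $\M_1', \M_2'$ still agree on $\prs(\Vb) > 0$, while interventions on the $\Xb_2$-vertices (and on any vertex outside the minimal subgraph underlying Theorem \ref{th:markov-single}'s construction) act vacuously on everything downstream, so $\prs_{\Xb}(\Yb)$ collapses to $\prs_{X_1}(Y)$ on the relevant coordinates and the disagreement survives. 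The delicate part, and the step I expect to be the main obstacle, is the bookkeeping when $|\Xb_1| > 1$: one must select $\PR$ and the accompanying directed $X_1$-to-$S$ path (equivalently, the representative $X_1 \in \Xb_1$) so that intervening on the \emph{remaining} vertices of $\Xb$ does not sever the structure Theorem \ref{th:markov-single} relies on; here one uses that $\PR$ is internally disjoint from $\Xb$ together with a careful choice of $X_1$ (e.g.\ a vertex of $\Xb_1$ whose shortest route to $S$ meets no other vertex of $\Xb_1$). Alternatively, one can bypass the black-box reduction and re-run Theorem \ref{th:markov-single}'s Type~1 / Type~2 construction directly in the multivariate setting.
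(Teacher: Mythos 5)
Your treatment of the $\Xb_1=\varnothing$ case and of sufficiency is correct and essentially identical to the paper's: Rule 2 with the separation \eqref{eq: general case cond} gives $\prs_{\Xb}(\Yb)=\prs_{\Xb_2}(\Yb\vert\Xb_1)$, Rule 3 (justified by your structural argument, which matches the paper's observation that $\Xb_2$ has no ancestors of $\Xb_1\cup\{S\}$, i.e.\ $(\Xb_1\independent\Xb_2\vert S)_{\Grs_{\overline{\Xb_2}}}$) gives $\prs_{\Xb_2}(\Xb_1)=\prs(\Xb_1)$, and Corollary \ref{coro} applied to $\Xb_1\cup\Yb$ yields \eqref{eq:general_formula}.

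The necessity direction, however, has a genuine gap, and it is exactly the step you flag as ``the main obstacle.'' Knowing that $\prs_{X_1}(Y)$ is not \s-ID (in $\Grs_{\overline{\Xb_2}}$ or in any subgraph) does not by itself imply that $\prs_{\Xb}(\Yb)$ is not \s-ID: intervening on the remaining variables of $\Xb$ --- in particular on $\Xb_1\setminus\{X_1\}$, which are ancestors of $S$ and may sit on the very directed path to $S$ that the Theorem~\ref{th:markov-single} witness construction relies on --- can destroy the disagreement between the two witness SEMs. The paper closes this by Lemma \ref{lemma: G^*}: it produces $X^*\in\Xb_1$, $Y^*\in\Yb$ and a \emph{minimal} subgraph $\Gr^*$ satisfying $\Xb\cap\Anc{S}{\Gr^*_{\overline{X^*}}}=\{X^*\}$, $(X^*\notindependent Y^*\vert S)_{\Gr^*_{\underline{X^*}}}$, and $(\Xb\setminus\{X^*\}\independent Y^*\vert X^*,S)_{\Gr^*_{\overline{\Xb}}}$; the first and third conditions let Rule 3 give $\pro_{X^*}(Y^*\vert S)=\pro_{\Xb}(Y^*\vert S)$ for every SEM on $\Gr^*$, after which Theorem \ref{th:markov-single}, Lemma \ref{lemma: Yb to Y} and Lemma \ref{lemma:remove} finish the argument. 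Your heuristic choice of $X_1$ (``a vertex of $\Xb_1$ whose shortest route to $S$ meets no other vertex of $\Xb_1$'') does not deliver these conditions: the paper's case analysis shows that the correct $X^*$ is generally \emph{not} the endpoint of the active path $\PR$ but the last $\Xb_1$-vertex on a directed path toward $S$ (or toward the first intersection of that path with $\PR$), and the active path certifying the second condition is then a concatenation $(\PR_{X^*\gets X},\PR)$ rather than $\PR$ itself; moreover one must rule out intersections of the $X$-to-$S$ path with $\PR$ creating shortcuts, which is where the minimality of the subgraph and the minimal-collider choice of $\PR$ are used. Without this construction (or the alternative you mention of redoing the Type~1/Type~2 SEM construction in the multivariate setting, which you also do not carry out), the claim that ``the disagreement survives'' intervening on all of $\Xb$ is unsubstantiated, so the necessity half of the theorem is not yet proved.
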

        \begin{corollary}
            Conditional causal effect $\interSbp$ is \s-ID in $\Grs$ if and only if 
            \begin{equation} \label{eq: multivariate}
                 \Xb_1 = \varnothing \quad \text{ or } \quad (\Xb_{1} \independent \Yb \vert \Xb_{2}, S)_{\Grs_{\underline{\Xb_{1}}\overline{\Xb_{2}}}}.
            \end{equation}
            Furthermore, if $\Xb= \{X\}$ is singleton, then $\Xb_1 = \varnothing$, which is equivalent to $X \notin \Anc{S}{}$ and Theorem \ref{th:markov-general} reduces to Theorem \ref{th:markov-single} for the singleton case.
        \end{corollary}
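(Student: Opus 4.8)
The final corollary merely repackages Theorem \ref{th:markov-general}, so the plan is to prove that theorem; the corollary then follows because the theorem's two bullet points already give the stated ``if and only if'', and because for a singleton $\Xb=\{X\}$ we have $\Xb_1=\varnothing\iff X\notin\Anc{S}{}$, while if $X\in\Anc{S}{}$ then $\Xb_2=\varnothing$ and Equation \eqref{eq: general case cond} collapses to $(X\independent Y\vert S)_{\Grs_{\underline{X}}}$, matching \eqref{eq: singleton}. I would split the proof of Theorem \ref{th:markov-general} into three parts. The case $\Xb_1=\varnothing$ is immediate: then $\Xb=\Xb_2$, so Corollary \ref{coro} already says $\prs_{\Xb}(\Yb)$ is \s-ID and equals \eqref{eq: X2 to Y}, with $\prs_{\Xb_2}(\Vb\setminus\Xb_2)$ computed from $\prs(\Vb)$ by Proposition \ref{prp: X2}.

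For sufficiency when $\Xb_1\neq\varnothing$, the plan is to apply Rule~2 of do-calculus with intervened set $\Xb_2$, swap-set $\Xb_1$, and context variable $S$: Equation \eqref{eq: general case cond} is exactly the hypothesis of Rule~2 in this configuration, giving $\pro_{\Xb}(\Yb\vert S)=\pro_{\Xb_2}(\Yb\vert \Xb_1,S)$, i.e. at $S=1$, $\interSbp=\prs_{\Xb_2}(\Yb,\Xb_1)/\prs_{\Xb_2}(\Xb_1)$, where $\prs_{\Xb_2}(\cdot)\coloneqq\pro_{\Xb_2}(\cdot\vert S=1)$ and all quantities are positive because $\prs(\Vb)>0$. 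The numerator is $\sum_{\Vb\setminus(\Xb\cup\Yb)}\prs_{\Xb_2}(\Vb\setminus\Xb_2)$ by marginalization. For the denominator I would marginalize Equation \eqref{eq: X2} of Proposition \ref{prp: X2}: summing $\prod_{W\in\Wb}\prs(W\vert\Pa{W}{})$ over $\Wb$ in reverse topological order gives $1$, hence $\prs_{\Xb_2}(\Anc{S}{}\setminus S)=\prs(\Anc{S}{}\setminus S)$ and, marginalizing further, $\prs_{\Xb_2}(\Xb_1)=\prs(\Xb_1)$ since $\Xb_1\subseteq\Anc{S}{}\setminus S$. Substituting yields Equation \eqref{eq:general_formula} and establishes \s-identifiability.

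The main obstacle is necessity when $\Xb_1\neq\varnothing$. If \eqref{eq: general case cond} fails there is an active path $\PR$ between some $X^\star\in\Xb_1$ and some $Y^\star\in\Yb$ in $\Grs_{\underline{\Xb_1}\overline{\Xb_2}}$ given $\Xb_2\cup\{S\}$. I would first record the shape of $\PR$: since outgoing edges of $\Xb_1$ and incoming edges of $\Xb_2$ are absent in this graph, any interior vertex of $\PR$ in $\Xb_1$ is a descendant-free collider (hence blocked) and any interior vertex in $\Xb_2$ is a non-collider in the conditioning set (hence blocked); so the interior of $\PR$ avoids $\Xb_1\cup\Xb_2$, $\PR$ leaves $X^\star$ along an incoming edge, and $\PR$ stays active after deleting $\Xb_2$ from the conditioning set. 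Because $\Xb_2\cap\Anc{S}{}=\varnothing$, Proposition \ref{prp: X2} makes $\prs_{\Xb_2}(\Vb\setminus\Xb_2)$ effectively ``free data'', so the obstruction lives in the graph with $\Xb_2$ removed, where $\PR$ witnesses $(\Xb_1\notindependent\Yb\vert S)$ in the ``$\underline{\Xb_1}$''-modified graph with $\Xb_1$ entirely ancestral to $S$ --- the multivariate analogue of the failure condition of Theorem \ref{th:markov-single}. From here the plan is to rerun the machinery of that theorem's proof: pass to a minimal subgraph containing $\PR$ (an analogue of Claim \ref{lem: subgraph} that also preserves the partition $\Xb_1,\Xb_2$), extract from it a path with at most one collider that is ancestral to $S$ (Claim \ref{claim: min collider}), normalize to one of the two canonical shapes of Figure \ref{fig:markov_cases}, and exhibit two SEMs $\M_1,\M_2$ on that graph with $\prs^{\M_1}(\Vb)=\prs^{\M_2}(\Vb)>0$ but $\prs^{\M_1}_{\Xb}(\Yb)\neq\prs^{\M_2}_{\Xb}(\Yb)$, all variables off $\PR$ and all coordinates of $\Xb_1\setminus\{X^\star\}$ and $\Xb_2$ made inert so that only the mechanisms along $\PR$ and the normalizer $\prs(\Xb_1)$ carry the discrepancy.

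The delicate part will be making this last reduction rigorous: (i) keeping each member of $\Xb_1$ genuinely ancestral to $S$ as the fixed graph $\Grs$ requires; (ii) lifting the subgraph counterexample back to $\Grs$ without spoiling $\prs^{\M_1}(\Vb)=\prs^{\M_2}(\Vb)$, even though some intervened $\Xb_2$ coordinates may be ancestors of $\Yb$; and (iii) checking that dividing by $\prs(\Xb_1)$ in \eqref{eq:general_formula} does not accidentally equalize $\prs^{\M_1}_{\Xb}(\Yb)$ and $\prs^{\M_2}_{\Xb}(\Yb)$. Once this is done, the corollary's ``if and only if'' \eqref{eq: multivariate} and its specialization to Theorem \ref{th:markov-single} follow immediately.
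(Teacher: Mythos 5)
Your handling of the corollary itself, of the $\Xb_1=\varnothing$ bullet, and of sufficiency is essentially the paper's argument: the corollary is indeed just a repackaging of Theorem \ref{th:markov-general}, and your singleton reduction (if $X\in\Anc{S}{}$ then $\Xb_2=\varnothing$ and Equation \eqref{eq: general case cond} collapses to $(X\independent Y\vert S)_{\Grs_{\underline{X}}}$) is exactly right. Your Rule~2 step in sufficiency is the paper's; the only variation is that you obtain $\prs_{\Xb_2}(\Xb_1)=\prs(\Xb_1)$ by marginalizing the formula of Proposition \ref{prp: X2}, whereas the paper gets it directly from Rule~3 using $(\Xb_2\independent\Xb_1\vert S)_{\Grs_{\overline{\Xb_2}}}$; both are fine.

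The genuine gap is in necessity. Your plan is to rerun the explicit two-SEM construction of Theorem \ref{th:markov-single} at the multivariate level, with the coordinates of $\Xb_1\setminus\{X^\star\}$ and $\Xb_2$ ``made inert,'' and you explicitly leave your points (i)--(iii) unresolved --- but those are not finishing touches, they are the proof. The paper never re-constructs SEMs for the multivariate case: it proves (Claim \ref{lemma:subgraph-non-id}, i.e.\ Lemma \ref{lemma: G^*} in the appendix) that there exist $X^*\in\Xb_1$, $Y^*\in\Yb$ and a subgraph $\Gr^*$ of $\Grs$ satisfying three conditions; the second condition together with $X^*\in\Anc{S}{\Gr^*}$ lets Theorem \ref{th:markov-single} be invoked as a black box to get non-\s-ID of $\prs_{X^*}(Y^*)$ in $\Gr^*$, while conditions 1 and 3 give, via Rule~3, $\pro_{\Xb}(Y^*\vert S)=\pro_{X^*}(Y^*\vert S)$ for \emph{every} SEM on $\Gr^*$, so the same pair of SEMs already witnesses non-\s-ID of $\prs_{\Xb}(Y^*)$; Lemmas \ref{lemma: Yb to Y} and \ref{lemma:remove} then lift to $\Yb$ and to $\Grs$. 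Your sketch is missing precisely the content of that claim. In particular, taking $X^\star$ to be the $\Xb_1$-endpoint of the active path $\PR$ does not work in general: the directed path from $X^\star$ to $S$ that must be added to the subgraph may pass through other members of $\Xb_1$ (breaking the ``only $X^\star$ reaches $S$'' requirement) or re-enter $\PR$, and the paper's Lemma \ref{lemma: G^*} resolves this with a case analysis that re-selects $X^*$ (e.g.\ the last member of $\Xb_1$ on that directed path, or the closest to the intersection vertex $I$) and sometimes replaces part of $\PR$ altogether. Without this, or an equally concrete substitute, ``making the other coordinates inert'' is an assertion, not an argument. (Your worry (iii) is moot for necessity: non-identifiability only requires two SEMs agreeing on $\prs(\Vb)$ and disagreeing on $\prs_{\Xb}(\Yb)$; no division by $\prs(\Xb_1)$ enters that comparison, since Equation \eqref{eq:general_formula} is only asserted when \eqref{eq: general case cond} holds.)
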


        \begin{myproof}[Sketch of proof]
            The first part of the theorem (if $\Xb_1 = \varnothing$) is a direct consequence of Proposition \ref{prp: X2}.
            To show the second part, we assume $\Xb_1 \neq \varnothing$.
            
            \textit{Sufficiency.}
            Suppose Equation \eqref{eq: general case cond} holds.
            We need to show that Equation \eqref{eq:general_formula} holds.
            By applying Rules 2 and 3 of do-calculus, it can be shown that
            \begin{equation*}
                 \prs_{\Xb}(\Yb) = \prs_{\Xb_{2}}(\Yb \vert \Xb_{1}) = \frac{\prs_{\Xb_{2}}(\Xb_{1}, \Yb)}{\prs_{\Xb_{2}}(\Xb_{1})} = \frac{\prs_{\Xb_{2}}(\Xb_{1}, \Yb)}{\prs(\Xb_{1})}.
            \end{equation*}
            Moreover, Corollary \ref{coro} for $\Xb_{1} \cup \Yb$ implies that 
            \begin{equation*}
                \prs_{\Xb_{2}}(\Xb_{1}, \Yb) = \sum_{\Vb \setminus (\Xb \cup \Yb)} \prs_{\Xb_2} (\Vb \setminus \Xb_2).
            \end{equation*}
            Equation \eqref{eq:general_formula} can be obtained by merging the above equations.
        
            \textit{Necessity.}
            Suppose $(\Xb_{1} \notindependent \Yb \vert \Xb_{2}, S)_{\Grs_{\underline{\Xb_{1}}\overline{\Xb_{2}}}}$.
            We need to show that $\prs_{\Xb}(\Yb)$ is not \s-ID in $\Grs$.   
            \begin{claim}
                \label{lemma:subgraph-non-id}
                There exists $X^* \in \Xb_1$, $Y^* \in \Yb$, and a subgraph $\Gr^*$ of $\Grs$ such that 
                \begin{itemize}
                    \item $\Xb \cap \Anc{S}{\Gr^*_{\overline{X^*}}} = \{X^*\}$,
                    \item $(X^* \notindependent Y^* \vert S)_{\Gr^*_{\underline{X^*}}}$, and
                    \item $(\Xb\setminus \{X^*\} \independent Y^* \vert X^*, S)_{\Gr^*_{\overline{\Xb}}}$.
                \end{itemize}
            \end{claim}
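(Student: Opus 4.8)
The overall strategy is to reduce the multivariate obstruction to the singleton obstruction analyzed in Theorem~\ref{th:markov-single}: the three bullet properties are engineered so that, inside $\Gr^*$, the pair $(X^*,Y^*)$ sits in exactly the ``$X^*\in\Anc{S}{}$ and $(X^*\notindependent Y^*\vert S)$'' configuration that makes $\prs_{X^*}(Y^*)$ non-\s-ID, while the third property (a Rule~3 condition) guarantees that intervening on all of $\Xb$ rather than just on $X^*$ does not change the distribution of $Y^*$ given $S=1$, so the two SEMs that refute \s-ID of $\prs_{X^*}(Y^*)$ will also refute \s-ID of $\prs_{\Xb}(\Yb)$. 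Thus the whole content of the claim is the construction of $X^*$, $Y^*$, and $\Gr^*$ with these combinatorial properties.

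First I would invoke the hypothesis $(\Xb_{1} \notindependent \Yb \vert \Xb_{2}, S)_{\Grs_{\underline{\Xb_{1}}\overline{\Xb_{2}}}}$ to fix an active path $q$, given $\{\Xb_2,S\}$, between some $X_0\in\Xb_1$ and some $Y_0\in\Yb$ in $H\coloneqq\Grs_{\underline{\Xb_{1}}\overline{\Xb_{2}}}$. Truncating $q$ at the last vertex of $\Xb_1$ and the first vertex of $\Yb$ that it meets, I may assume $q$ has no internal vertex in $\Xb\cup\Yb$ (internal vertices in $\Xb_2$ are excluded automatically, since $H$ deletes all edges into $\Xb_2$). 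The key local fact is that every collider $c$ of $q$ lies on a directed path $\pi_c$ to $S$ that avoids $\Xb$ entirely: activeness forces $c\in\Anc{\{\Xb_2,S\}}{H}$, but $c$ cannot be an ancestor of any $X_2\in\Xb_2$ in $H$ because a directed path to $X_2$ would have to end with an edge into $X_2$, all of which are deleted in $H$; hence $c\in\Anc{S}{H}$, and a directed path witnessing this uses no edge out of $\Xb_1$ and no edge into $\Xb_2$, so it meets neither $\Xb_1$ nor $\Xb_2$.

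Next I would choose $X^*$. Since $X_0\in\Xb_1\subseteq\Anc{S}{}$, pick a directed path $\rho$ from $X_0$ to $S$ in $\Grs$ and let $X^*$ be its last vertex lying in $\Xb$; then $X^*\in\Xb_1$, the tail $r'\subseteq\rho$ from $X^*$ to $S$ avoids $\Xb\setminus\{X^*\}$, and the initial segment $\sigma\subseteq\rho$ from $X_0$ to $X^*$ is directed. Put $Y^*\coloneqq Y_0$ and let $\Gr^*$ be a subgraph of $\Grs$ with the fewest edges among those containing $q$, $\sigma$, $r'$, and a path $\pi_c$ for every collider $c$ of $q$. The second bullet $(X^*\notindependent Y^*\vert S)_{\Gr^*_{\underline{X^*}}}$ I would check using the walk obtained by concatenating the reverse of $\sigma$ with $q$: it joins $X^*$ to $Y^*$, its first edge points into $X^*$ (so it survives in $\Gr^*_{\underline{X^*}}$), every vertex of $\sigma$ and the junction vertex $X_0$ are non-colliders distinct from $S$, each collider of $q$ is still an ancestor of $S$ in $\Gr^*_{\underline{X^*}}$ because $\pi_c$ avoids $X^*$, and dropping $\Xb_2$ from the conditioning set blocks nothing because $q$ never meets $\Xb_2$. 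For the first bullet, the only vertices of $\Xb$ occurring in $\Gr^*$ are $X^*$ together with $X_0$ and any $\Xb_1$-vertices on $\sigma$; by minimality each of these other vertices can reach $S$ in $\Gr^*$ only through $X^*$, so deleting the incoming edges of $X^*$ removes them from $\Anc{S}{\Gr^*_{\overline{X^*}}}$, while $X^*$ itself still reaches $S$ through $r'$. The third bullet is similar: in $\Gr^*_{\overline{\Xb}}$ every vertex of $\Xb\setminus\{X^*\}$ has lost all incoming edges and, by minimality, its only outgoing edges run along $\sigma$, which quickly reaches a vertex of $\Xb$ whose incoming edges are deleted, so no active path to $Y^*$ remains.

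The main obstacle I expect is the bookkeeping around overlaps: $\rho$, $q$, and the $\pi_c$'s need not be internally disjoint, so the real work is to show that a minimum-edge $\Gr^*$ is clean enough — that $\sigma$ shares no internal vertex with $q$ or with any $\pi_c$, that the $\pi_c$'s introduce no side route from $X_0$ or a $\sigma$-vertex to $S$ bypassing $X^*$, and that the reverse-$\sigma$/$q$ concatenation can be taken to be a simple path with its collider structure intact — and to organize the choices of $\rho$, $X_0$, $Y_0$ jointly so that this holds. A secondary point needing an explicit check is the degenerate case in which $q$ passes through $S$ (necessarily as a collider, since $S$ is childless) or in which $Y^*\in\Anc{X^*}{}$; I expect the same truncation and minimality arguments to dispose of these, paralleling the ``$Y\in\Anc{X}{}$'' case in the singleton proof.
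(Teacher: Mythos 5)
Your high-level plan is the same as the paper's: take an active path $q$ from some $X_0\in\Xb_1$ to some $Y_0\in\Yb$ in $\Grs_{\underline{\Xb_{1}}\overline{\Xb_{2}}}$, note that it avoids $\Xb\setminus\{X_0\}$ and that its colliders are ancestors of $S$ via $\Xb$-avoiding directed paths, adjoin a directed path from $X_0$ to $S$, take $X^*$ to be the last $\Xb$-vertex on it, and verify the three bullets in a minimal subgraph. However, there is a genuine gap exactly where you flag ``the main obstacle'': the first and third bullets are justified only by the assertion that, ``by minimality,'' every $\Xb$-vertex other than $X^*$ can reach $S$ in $\Gr^*$ only through $X^*$. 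Edge-minimality of $\Gr^*$ does not give this. All of $q$, $\sigma$, $r'$ and the $\pi_c$'s are required constituents of $\Gr^*$, so no edge can be discarded, yet their overlaps can create a directed route to $S$ that bypasses $X^*$: for instance, if $\sigma$ meets some $\pi_c$ (or the directed sub-path of $q$ feeding a collider $c$) at a vertex $u$, then $X_0$ (or an intermediate $\Xb_1$-vertex of $\sigma$) reaches $S$ via $\sigma[X_0..u]$ followed by $\pi_c[u..S]$, a route that avoids $X^*$ since $\pi_c$ avoids $\Xb$; then $\Xb\cap\Anc{S}{\Gr^*_{\overline{X^*}}}\neq\{X^*\}$ and your choice of $X^*$ is wrong, not merely inconvenient. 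The same overlaps threaten the third bullet. So the construction as stated can fail, and the fix requires re-choosing $X^*$ (and sometimes shrinking $\Gr^*$) depending on how the directed path to $S$ meets the rest of the construction --- which is precisely the content you have deferred.

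The paper's proof of this claim (Lemma \ref{lemma: G^*}) handles exactly this bookkeeping, and does so with two devices you omit. First, it chooses $\PR$ with the \emph{minimum number of colliders} among active paths, which (as in Lemma \ref{lemma:collider}) forces at most one collider $W$ and, again by collider-minimality, forces the single path $\PR_{W\to S}$ to be disjoint from $\PR$; keeping all colliders of $q$ with separate $\pi_c$'s, as you do, multiplies the overlap configurations without any tool to control them. Second, it performs a case analysis on the first intersection $I$ of the directed path $\PR_{X\to S}$ with the rest of the construction (Figures \ref{fig: no I}--\ref{fig: I exists, one collider}): when $I$ exists, $X^*$ is not taken as the last $\Xb$-vertex on the whole path to $S$, but is re-chosen as the $\Xb_1$-vertex closest to $S$ on $\PR_{I'\to S}$ (with $\Gr^*$ shrunk to just two paths) or closest to $I$ on $\PR_{X\to I}$, and only then do the three bullets go through. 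Your walk argument for the second bullet is fine in spirit (an active walk yields an active path), but without the disjointness/no-bypass analysis the first and third bullets remain unproven, so the proposal is an outline of the easy part plus a correct identification of the hard part, rather than a proof.
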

            The first property implies that $X^* \in \Anc{S}{\Gr^*}$.
            Hence, Equation \eqref{eq: singleton} holds for $X^*$ and $Y^*$ in $\Gr^*$ and Theorem \ref{th:markov-single} implies that $\prs_{X^*}(Y^*)$ is not \s-ID in $\Gr^*$.
            To conclude the proof, similar to the sketch of proof of Theorem \ref{th:markov-single}, it suffices to show that $\interSbp$ is not \s-ID in $\Gr^*$.
            
            For any SEM with causal graph $\Gr^*$, due to the first and third properties in Claim \ref{lemma:subgraph-non-id}, Rule 3 of do-calculus implies that $\pro^{s}_{X^*}(Y^*) = \pro^{s}_{\Xb} (Y^*)$.
            Therefore, in $\Gr^*$, the s-identifiability of $\pro^{s}_{X^*}(Y^*)$ is equivalent to s-identifiability of $\pro^{s}_{\Xb}(Y^*)$, thus $\pro^{s}_{\Xb}(Y^*)$ is not \s-ID in $\Gr^*$.
            This shows that $\pro^{s}_{\Xb}(\Yb)$ is also not \s-ID in $\Gr^*$ since $Y^* \in \Yb$, which concludes our proof.
        \end{myproof}
    
        \begin{figure}
            \centering
            \resizebox{0.17\textwidth}{!}{      
                \begin{tikzpicture}
                            \tikzset{edge/.style = {->,> = latex',-{Latex[width=1.5mm]}}}
                            % vertices
                            
                            \node [block, label=center:$X_1$](X1) {};
                            % \node [block, left = 1 of W, label=center:$X_1$](X1) {};
                            \node [block-s, below = 0.5 of X1, label=center:S](S) {}; 
                            \node [block, right= 1 of X1, label=center:Y](Y) {};
                            \node [block, above = 0.75 of X1, label=center:$Z$](Z) {};
                            \node [block, above = 0.75 of Y, label=center:$X_2$](X2) {};
                            
                            %edges
                            % \draw[edge] (X1) to (W);
                            \draw[edge] (Z) to (Y);
                            \draw[edge] (Z) to (X1);
                            \draw[edge] (Z) to (X2);
                            \draw[edge] (X1) to (S);
                            \draw[edge] (X1) to (Y);
                            \draw[edge] (X2) to (Y);
                \end{tikzpicture}
            }
            \resizebox{0.27\textwidth}{!}{
                 \begin{tikzpicture}
                    \tikzset{edge/.style = {->,> = latex',-{Latex[width=1.5mm]}}}
                    % vertices
                    
                    \node [block, label=center:$W$](W) {};
                    \node [block, left = 1 of W, label=center:$X_1$](X1) {};
                    \node [block-s, below = 0.5 of W, label=center:S](S) {}; 
                    \node [block, right= 1 of W, label=center:Y](Y) {};
                    \node [block, above = 0.75 of W, label=center:$Z$](Z) {};
                    \node [block, above = 0.75 of Y, label=center:$X_2$](X2) {};
                    
                    %edges
                    \draw[edge] (X1) to (W);
                    \draw[edge] (Z) to (Y);
                    \draw[edge] (Z) to (W);
                    \draw[edge] (Z) to (X2);
                    \draw[edge] (W) to (S);
                    \draw[edge] (W) to (Y);
                    \draw[edge] (X2) to (Y);
                 \end{tikzpicture}
             }     
             \caption{An example for the multivariate case where conditional causal effect $\prs_{\{X_1, X_2\}}(Y)$ is not \s-ID in the left graph while it is \s-ID in the right graph and is equal to $\mysum{Z, W}{} \prs(Z, W \vert X_1) \prs(Y \vert X_2, Z, W)$.}
            \label{fig:general_ex}
        \end{figure}
    
        Consider the two DAGs in Figure \ref{fig:general_ex}, where we are interested in computing $\interSbp$ for $\Xb = \{X_1, X_2\}$ and $\Yb = \{Y\}$.
        Accordingly, we have $\Xb_1 = \Xb\cap\Anc{S}{} = \{X_1\}$ and $\Xb_2 =\Xb \setminus \Anc{S}{}= \{X_2\}$ for both DAGs.
        In the DAG on the left, $(X_1 \notindependent Y \vert X_2, S)_{\Grs_{\underline{X_1}\overline{X_{2}}}}$ since $X_1 \leftarrow Z \rightarrow Y$ is an active path.
        Hence, Theorem \ref{th:markov-general} implies that $\prs_{\{X_1, X_2\}}(Y)$ is not \s-ID.
        On the other hand, $\prs_{\{X_1, X_2\}}(Y)$ is \s-ID in the  DAG on the right since $(X_1 \independent Y \vert X_2, S)_{\Grs_{\underline{X_1}\overline{X_{2}}}}$.
        Moreover, $\prs_{X_2}(Y) = \prs(X_1, Z, W) \prs(Y \vert X_2, Z, W)$ due to Proposition \ref{prp: X2}, thus, $\prs_{\{X_1, X_2\}}(Y) = \mysum{Z, W}{} \prs(Z, W \vert X_1) \prs(Y \vert X_2, Z, W)$.
        Note that $ \prs(X_1)^{-1}\prs(X_1, Z, W) = \prs(Z, W \vert X_1)$.

    \subsection{A Sound And Complete Algorithm For \s-ID}
        Equipped by Proposition \ref{prp: X2} and Theorem \ref{th:markov-general}, we present Algorithm \ref{algo: SID} for the \s-ID problem.\footnote{Our implementation is at \url{https://github.com/amabouei/s-ID}.}
        The inputs are the set of intervened variables $\Xb$, the set of outcome variables $\Yb$, augmented causal DAG $\Grs$, and the observational distribution of the target sub-population $\prs(\Vb)$.
        The algorithm returns a formula for conditional causal effect $\interSbp$ based on $\prs(\Vb)$ when it is \s-ID in $\Grs$.
        Otherwise, it returns \textsc{Fail} which indicates that $\interSbp$ is not \s-ID in $\Grs$.
    
        \begin{algorithm}[t!]
            \caption{A sound and complete algorithm for \s-ID}
            \label{algo: SID}
            \begin{algorithmic}[1]
                \STATE \textbf{Input:} $\Xb, \Yb, \Grs, \prs(\Vb)$
                \STATE \textbf{Output:} A formula for $\prs_{\Xb}(\Yb)$ based on $\prs(\Vb)$ if it is \s-ID, otherwise, \textsc{Fail}
                \STATE $\Xb_1 \gets \Xb \cap \Anc{S}{}$
                \STATE $\Xb_2 \gets \Xb \setminus \Anc{S}{}$
                \STATE $\Wb \gets  \Vb \setminus (\Xb_2 \cup \Anc{S}{})$  
                \IF{$\Xb_1 = \varnothing$}    
                    \STATE \textbf{Return} $\mysum{\Vb \setminus (\Xb \cup \Yb)}{}\!\!\!\!\!\! \prs(\Anc{S}{}\setminus S)\!\! \myprod{W \in \Wb}{}\!\!\!\! \prs(W \vert \Pa{W}{})$
                \ELSIF{$(\Xb_{1} \independent \Yb \vert \Xb_{2}, S)_{\Grs_{\underline{\Xb_{1}}\overline{\Xb_{2}}}}$}
                    \STATE \textbf{Return} $\mysum{\Vb \setminus (\Xb \cup \Yb)}{}\!\!\!\!\!\! \frac{\prs(\Anc{S}{}\setminus S)}{\prs(\Xb_1)} \myprod{W \in \Wb}{}\prs(W \vert \Pa{W}{})$
                \ELSE
                    \STATE \textbf{Return} \textsc{Fail}
                \ENDIF
            \end{algorithmic}
        \end{algorithm}
    
        The algorithm starts by decomposing $\Xb$ to ancestors ($\Xb_1$) and non-ancestors ($\Xb_2$) of $S$.
        If $\Xb_1 = \varnothing$, due to the first part of Theorem \ref{th:markov-general}, $\interSbp$ is \s-ID and the algorithm returns Equation \eqref{eq: X2 to Y} by replacing $\prs_{\Xb_2} (\Vb \setminus \Xb_2)$ with Equation \eqref{eq: X2}.
        Otherwise (i.e., when $\Xb_1 \neq \varnothing$), the algorithm checks the s-identifiability condition of Equation \eqref{eq: general case cond} in line 8.
        If the condition holds, it returns Equation \eqref{eq:general_formula}, again, by replacing $\prs_{\Xb_2} (\Vb \setminus \Xb_2)$ with Equation \eqref{eq: X2}.
        If the condition does not hold, Theorem \ref{th:markov-general} implies that $\interSbp$ is not \s-ID, and the algorithm returns \textsc{Fail}.
    
        \begin{corollary}
            Algorithm \ref{algo: SID} is sound and complete for the \s-ID problem.
            That is, when $\interSbp$ is \s-ID in $\Grs$, it returns a sound formula for it based on $\prs(\Vb)$ (soundness), and otherwise, it returns \textsc{Fail} (completeness).
        \end{corollary}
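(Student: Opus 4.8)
The plan is to treat this corollary as an essentially bookkeeping consequence of Proposition~\ref{prp: X2}, Corollary~\ref{coro}, and Theorem~\ref{th:markov-general}: Algorithm~\ref{algo: SID} has three control-flow branches that are in exact correspondence with the case split of Theorem~\ref{th:markov-general}, so it suffices to verify that in each returning branch the printed expression is precisely the formula guaranteed by that theorem, rewritten purely in terms of $\prs(\Vb)$. I would organize the argument into \emph{soundness} (when $\interSbp$ is \s-ID, the output is a correct formula in $\prs(\Vb)$) and \emph{completeness} (when it is not \s-ID, the output is \textsc{Fail}).

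For soundness, assume $\interSbp$ is \s-ID in $\Grs$. If $\Xb_1 = \varnothing$, then $\Xb = \Xb_2$, the algorithm reaches line~7, and by the first bullet of Theorem~\ref{th:markov-general} together with Corollary~\ref{coro} we have $\prs_{\Xb}(\Yb) = \sum_{\Vb \setminus (\Xb \cup \Yb)} \prs_{\Xb_2}(\Vb \setminus \Xb_2)$; substituting the closed form of $\prs_{\Xb_2}(\Vb \setminus \Xb_2)$ from Equation~\eqref{eq: X2}, which is valid by Proposition~\ref{prp: X2}, yields exactly the expression returned on line~7, a function of $\prs(\Vb)$ alone. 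If instead $\Xb_1 \neq \varnothing$, then the ``only if'' direction of the second bullet of Theorem~\ref{th:markov-general} forces the $d$-separation of Equation~\eqref{eq: general case cond} to hold, so the test on line~8 succeeds and the algorithm returns Equation~\eqref{eq:general_formula}; replacing $\prs_{\Xb_2}(\Vb \setminus \Xb_2)$ by Equation~\eqref{eq: X2} again gives a formula in $\prs(\Vb)$ that equals $\prs_{\Xb}(\Yb)$ by the ``moreover'' clause of Theorem~\ref{th:markov-general}.

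For completeness, assume $\interSbp$ is not \s-ID in $\Grs$. The first bullet of Theorem~\ref{th:markov-general} forces $\Xb_1 \neq \varnothing$ (otherwise the effect would be \s-ID), so line~7 is not taken. By the contrapositive of the ``if'' direction of the second bullet, non-identifiability implies that Equation~\eqref{eq: general case cond} fails, so the test on line~8 is false and line~9 is not taken either; hence execution falls through to line~11 and the algorithm returns \textsc{Fail}, as required.

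No genuinely hard step is anticipated: the proof is a matter of matching the algorithm's branching to the dichotomy of Theorem~\ref{th:markov-general}. The two points that need care are (i) noting that $\Xb_1 = \varnothing$ is equivalent to $\Xb = \Xb_2$, so that the marginalization range in line~7 coincides with that of Equation~\eqref{eq: X2 to Y}; and (ii) checking that after substituting Equation~\eqref{eq: X2} the returned expressions are well-defined functions of $\prs(\Vb)$ only — in particular that $\prs(\Xb_1) > 0$ on the relevant support, which follows from the positivity assumption $\prs(\Vb) > 0$ built into Definition~\ref{def:S-ID} — so that the phrase ``returns a formula based on $\prs(\Vb)$'' is literally satisfied.
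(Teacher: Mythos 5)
Your proposal is correct and matches the paper's own justification, which likewise derives the corollary by matching the three branches of Algorithm~\ref{algo: SID} to the case split of Theorem~\ref{th:markov-general} (with Proposition~\ref{prp: X2} and Corollary~\ref{coro} supplying the closed form substituted from Equation~\eqref{eq: X2}). Your added remarks on $\Xb_1=\varnothing$ iff $\Xb=\Xb_2$ and on positivity of $\prs(\Xb_1)$ are consistent with the paper's assumptions and do not change the argument.
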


        We can use efficient methods such as the one presented \citet{darwiche2009modeling} to verify the d-separation in Line 8 of Algorithm \ref{algo: SID}.
        Accordingly, the time complexity of the algorithm is $\mathcal{O}(n+m)$, where $n$ and $m$ represent the number of nodes and edges in the graph, respectively.

\section{Related Work} \label{sec: related work}
    In this section, we review related problems in the causal inference literature.
    A summary of the settings for these problems can be found in Table \ref{table:1}.
    % Causal inference can be performed in the entire population or in a specific sub-population.
    % We discuss these two settings separately in the following subsections.
    
    \subsection{Causal Inference in Population}
        The goal of causal inference in the entire population is to compute a causal effect $\pro_{\Xb}(\Yb)$.
        The seminal ID problem \cite{pearl1995causal}, proposed by Judea Pearl, is concerned with calculating $\pro_{\Xb}(\Yb)$ based on observational distribution $\pro(\Vb)$ when the causal graph is known.
        Pearl proposed three fundamental rules known as do-calculus, which, along with probabilistic manipulations, can be used to compute interventional distributions.
        Applying these rules, \citet{tian2003ID} proposed an algorithm for the ID problem, and later, \citet{shpitser2006identification} and \citet{huang2006identifiability} concurrently and with two different approaches showed that the proposed algorithm is sound and complete for the ID problem.
        The former introduced a graph structure called \emph{Hedge} and showed that the existence of a hedge is equivalent to the non-identifiability of an interventional distribution in the setting of the ID problem.
        The latter showed that the identifiability of a causal effect $\pro_{\Xb}(\Yb)$ is equivalent to the identifiability of $Q[\Zb]\coloneqq \pro_{\Vb \setminus \Zb}(\Zb)$ for $\Zb = \Anc{\Yb}{\Gr_{\underline{\Xb}}}$.
        They then showed that the recursive algorithm by \citet{tian2003ID} is sound and complete for the identifiability of $Q$ distributions.
        
        A more generalized formulation of the ID problem is known as gID or general identifiability \cite{Lee2019GeneralIW, kivva2022revisiting}.
        Similar to the ID problem, the goal in gID is to compute a causal effect $\pro_{\Xb}(\Yb)$ but from  $\{\pro_{\Zb_i}(\Vb \setminus \Zb_i)\}_{i = 0}^{m}$ for some subsets $\{\Zb_i\}_{i=0}^{m}$ of observed variables.
        Hence, ID is a special case of gID when $m=0$ and $\Zb_0 = \varnothing$.
        \citet{kivva2022revisiting} extended the approach of \citet{huang2006identifiability} for the ID problem to gID and proposed a sound and complete algorithm for gID.
        
        Another problem in causal inference on population is the so-called S-Recoverability \cite{Bareinboim_Tian_Pearl_2014, Bareinboim_Tian_2015, Correa_Tian_Bareinboim_2019}.
        In contrast to ID and gID, the given distribution in S-Recoverability originates from a sub-population, yet the aim remains to calculate a causal effect for the entire population.
        The constraint of having data from merely a sub-population makes the inference task for the whole population particularly challenging.
        Consequently, it is plausible to anticipate that a majority of causal effects would be unidentifiable, a fact that inherently restricts the practical applicability of the S-Recoverability problem.
    
    \subsection{Causal Inference in a Sub-Population}
        \citet{shpitser2012identification} tackled the c-ID problem by proposing a sound and complete algorithm for computing a conditional causal effect $\pro_{\Xb}(\Yb \vert \Zb)$ from observational distribution $\pro(\Vb)$.
        They showed that $\Zb$ can be decomposed into two parts, namely $\Zb_1$ and $\Zb_2$, such that $\pro_{\Xb}(\Yb \vert \Zb)$ is c-ID if and only if $\pro_{\Xb_1, \Zb_1}(\Yb, \Zb_2)$ is ID.
        Hence, solving c-ID can be reduced to solving an ID problem.
        Similar to gID that generalizes the ID problem, \citet{correa2021nested} and \citet{kivva2023identifiability} generalized c-ID to c-gID.
        The objective in c-gID is again the computation of a conditional causal effect $\pro_{\Xb}(\Yb \vert \Zb)$, but from a set of interventional distributions of form $\{\pro_{\Zb_i}(\Vb \setminus \Zb_i)\}_{i = 0}^{m}$ instead of merely the observational distribution.
    
        Both the c-ID and c-gID settings operate on the premise that the given distributions originate from the entire population.
        Thus, to make an inference for a target sub-population, they require samples from the whole population (observational distribution in the case of c-ID and interventional distributions for c-gID).
        By contrast, the \s-ID problem uses the observational distribution merely from the target sub-population.
    
    \subsection{Causal Graph Variations}
        In all the aforementioned causal inference problems, the causal graph is assumed to be given.
        Although many causal discovery algorithms, such as the ones proposed by \citet{colombo2012learning, claassen2013learning, bernstein2020ordering, akbari2021recursive, huang2022latent, mokhtarian2021recursive, mokhtarian2023novel}, aim to learn the causal graph using observational distribution, the causal graph is only identifiable up to the so-called Markov equivalence class \cite{spirtes2000causation, pearl2009causality}.
        Addressing this gap, \citet{pmlr-v97-jaber19a} and \citet{NEURIPS2022_17a9ab41} provided algorithms for the ID and c-ID problems, respectively, where instead of the causal graph, a partial ancestral graph (PAG) that represents the equivalence class of the causal graph is known.
        \citet{akbari2023causal} consider the ID problem when the underlying graph is probabilistically defined.
        \citet{tikka2019identifying} and \citet{mokhtarian2022causal} consider a scenario for the ID problem where additional information about the causal graph is available in the form of context-specific independence (CSI) relations.
        They show that this side information renders more causal effects identifiable.

\section{Conclusion and Future Work}
    We introduced \s-ID, a practical scenario for causal inference in a sub-population.
    The \s-ID problem asks whether, given the causal graph, a causal effect in a sub-population can be identified from the observational distribution pertaining to the same sub-population.
    We provided a sound and complete algorithm for the \s-ID problem.
    While previous work, such as the c-ID and S-Recoverability problems, provide considerable insights, they cannot solve the \s-ID problem.
    Indeed through various examples, we demonstrated that ignoring the subtleties introduced by sub-populations in causal modeling can lead to erroneous inferences in the \s-ID problem.
    
    Our current framework assumes that all variables in the sub-population are observable.
    We acknowledge the potential practical situations where this may not be the case.
    Investigating the \s-ID problem in the presence of latent variables is an important future direction.
    Furthermore, to numerically estimate a causal effect, three key phases are involved: identification, estimation, and sensitivity analysis.
    This paper has addressed the identification problem, establishing a foundation for further research in the other two critical phases.

\section*{Acknowledgments}
    This research was in part supported by the Swiss National Science Foundation under NCCR Automation, grant agreement 51NF40\_180545 and Swiss SNF project 200021\_204355 /1.

\bibliography{aaai24}

\begin{thebibliography}{37}
\providecommand{\natexlab}[1]{#1}

\bibitem[{Akbari et~al.(2023)Akbari, Jamshidi, Mokhtarian, Vowels, Etesami, and
  Kiyavash}]{akbari2023causal}
Akbari, S.; Jamshidi, F.; Mokhtarian, E.; Vowels, M.~J.; Etesami, J.; and
  Kiyavash, N. 2023.
\newblock Causal Effect Identification in Uncertain Causal Networks.
\newblock In \emph{Thirty-seventh Conference on Neural Information Processing
  Systems}.

\bibitem[{Akbari et~al.(2021)Akbari, Mokhtarian, Ghassami, and
  Kiyavash}]{akbari2021recursive}
Akbari, S.; Mokhtarian, E.; Ghassami, A.; and Kiyavash, N. 2021.
\newblock Recursive causal structure learning in the presence of latent
  variables and selection bias.
\newblock \emph{Advances in Neural Information Processing Systems}, 34:
  10119--10130.

\bibitem[{Bareinboim and Tian(2015)}]{Bareinboim_Tian_2015}
Bareinboim, E.; and Tian, J. 2015.
\newblock Recovering Causal Effects from Selection Bias.
\newblock \emph{Proceedings of the AAAI Conference on Artificial Intelligence},
  29(1).

\bibitem[{Bareinboim, Tian, and Pearl(2014)}]{Bareinboim_Tian_Pearl_2014}
Bareinboim, E.; Tian, J.; and Pearl, J. 2014.
\newblock Recovering from Selection Bias in Causal and Statistical Inference.
\newblock \emph{Proceedings of the AAAI Conference on Artificial Intelligence},
  28(1).

\bibitem[{Bernstein et~al.(2020)Bernstein, Saeed, Squires, and
  Uhler}]{bernstein2020ordering}
Bernstein, D.; Saeed, B.; Squires, C.; and Uhler, C. 2020.
\newblock Ordering-based causal structure learning in the presence of latent
  variables.
\newblock In \emph{International Conference on Artificial Intelligence and
  Statistics}, 4098--4108. PMLR.

\bibitem[{Bishop and Nasrabadi(2006)}]{bishop2006pattern}
Bishop, C.~M.; and Nasrabadi, N.~M. 2006.
\newblock \emph{Pattern recognition and machine learning}, volume~4.
\newblock Springer.

\bibitem[{Bollen, Mao, and Pepe(2011)}]{bollen2011modeling}
Bollen, J.; Mao, H.; and Pepe, A. 2011.
\newblock Modeling public mood and emotion: Twitter sentiment and
  socio-economic phenomena.
\newblock In \emph{Proceedings of the international AAAI conference on web and
  social media}, volume~5, 450--453.

\bibitem[{Claassen, Mooij, and Heskes(2013)}]{claassen2013learning}
Claassen, T.; Mooij, J.~M.; and Heskes, T. 2013.
\newblock Learning Sparse Causal Models is not NP-hard.
\newblock In \emph{In Proceedings of the 29th Annual Conference on Uncertainty
  in Artificial Intelligence (UAI)}, 172--181.

\bibitem[{Colombo et~al.(2012)Colombo, Maathuis, Kalisch, and
  Richardson}]{colombo2012learning}
Colombo, D.; Maathuis, M.~H.; Kalisch, M.; and Richardson, T.~S. 2012.
\newblock Learning high-dimensional directed acyclic graphs with latent and
  selection variables.
\newblock \emph{The Annals of Statistics}, 294--321.

\bibitem[{Correa, Lee, and Bareinboim(2021)}]{correa2021nested}
Correa, J.; Lee, S.; and Bareinboim, E. 2021.
\newblock Nested counterfactual identification from arbitrary surrogate
  experiments.
\newblock \emph{Advances in Neural Information Processing Systems}, 34:
  6856--6867.

\bibitem[{Correa, Tian, and Bareinboim(2019)}]{Correa_Tian_Bareinboim_2019}
Correa, J.~D.; Tian, J.; and Bareinboim, E. 2019.
\newblock Identification of Causal Effects in the Presence of Selection Bias.
\newblock \emph{Proceedings of the AAAI Conference on Artificial Intelligence},
  33(01): 2744--2751.

\bibitem[{Darwiche(2009)}]{darwiche2009modeling}
Darwiche, A. 2009.
\newblock \emph{Modeling and reasoning with Bayesian networks}.
\newblock Cambridge university press.

\bibitem[{Engen and Hubbard(2004)}]{engen2004federal}
Engen, E.~M.; and Hubbard, R.~G. 2004.
\newblock Federal government debt and interest rates.
\newblock \emph{NBER macroeconomics annual}, 19: 83--138.

\bibitem[{Fisher(1936)}]{fisher1936design}
Fisher, R.~A. 1936.
\newblock Design of experiments.
\newblock \emph{British Medical Journal}, 1(3923): 554.

\bibitem[{Gamber and Seliski(2019)}]{gamber2019effect}
Gamber, E.; and Seliski, J. 2019.
\newblock \emph{The effect of government debt on interest rates}.
\newblock Congressional Budget Office.

\bibitem[{Hern{\'a}n and Robins(2010)}]{hernan2010causal}
Hern{\'a}n, M.~A.; and Robins, J.~M. 2010.
\newblock Causal inference.

\bibitem[{Huang et~al.(2022)Huang, Low, Xie, Glymour, and
  Zhang}]{huang2022latent}
Huang, B.; Low, C. J.~H.; Xie, F.; Glymour, C.; and Zhang, K. 2022.
\newblock Latent hierarchical causal structure discovery with rank constraints.
\newblock \emph{Advances in Neural Information Processing Systems}, 35:
  5549--5561.

\bibitem[{Huang and Valtorta(2006)}]{huang2006identifiability}
Huang, Y.; and Valtorta, M. 2006.
\newblock Identifiability in causal bayesian networks: A sound and complete
  algorithm.
\newblock In \emph{AAAI}, 1149--1154.

\bibitem[{Jaber et~al.(2022)Jaber, Ribeiro, Zhang, and
  Bareinboim}]{NEURIPS2022_17a9ab41}
Jaber, A.; Ribeiro, A.; Zhang, J.; and Bareinboim, E. 2022.
\newblock Causal Identification under Markov equivalence: Calculus, Algorithm,
  and Completeness.
\newblock In Koyejo, S.; Mohamed, S.; Agarwal, A.; Belgrave, D.; Cho, K.; and
  Oh, A., eds., \emph{Advances in Neural Information Processing Systems},
  volume~35, 3679--3690. Curran Associates, Inc.

\bibitem[{Jaber, Zhang, and Bareinboim(2019)}]{pmlr-v97-jaber19a}
Jaber, A.; Zhang, J.; and Bareinboim, E. 2019.
\newblock Causal Identification under {M}arkov Equivalence: Completeness
  Results.
\newblock In Chaudhuri, K.; and Salakhutdinov, R., eds., \emph{Proceedings of
  the 36th International Conference on Machine Learning}, volume~97 of
  \emph{Proceedings of Machine Learning Research}, 2981--2989. PMLR.

\bibitem[{Kivva, Etesami, and Kiyavash(2023)}]{kivva2023identifiability}
Kivva, Y.; Etesami, J.; and Kiyavash, N. 2023.
\newblock On Identifiability of Conditional Causal Effects.
\newblock \emph{The 39th Conference on Uncertainty in Artificial Intelligence}.

\bibitem[{Kivva et~al.(2022)Kivva, Mokhtarian, Etesami, and
  Kiyavash}]{kivva2022revisiting}
Kivva, Y.; Mokhtarian, E.; Etesami, J.; and Kiyavash, N. 2022.
\newblock Revisiting the general identifiability problem.
\newblock In \emph{Uncertainty in Artificial Intelligence}, 1022--1030. PMLR.

\bibitem[{Lee, Correa, and Bareinboim(2019)}]{Lee2019GeneralIW}
Lee, S.; Correa, J.~D.; and Bareinboim, E. 2019.
\newblock General Identifiability with Arbitrary Surrogate Experiments.
\newblock In \emph{Conference on Uncertainty in Artificial Intelligence}.

\bibitem[{Masiha et~al.(2021)Masiha, Gohari, Yassaee, and
  Aref}]{masiha2021learning}
Masiha, M.~S.; Gohari, A.; Yassaee, M.~H.; and Aref, M.~R. 2021.
\newblock Learning under distribution mismatch and model misspecification.
\newblock In \emph{2021 IEEE International Symposium on Information Theory
  (ISIT)}, 2912--2917. IEEE.

\bibitem[{Mokhtarian et~al.(2021)Mokhtarian, Akbari, Ghassami, and
  Kiyavash}]{mokhtarian2021recursive}
Mokhtarian, E.; Akbari, S.; Ghassami, A.; and Kiyavash, N. 2021.
\newblock A recursive markov boundary-based approach to causal structure
  learning.
\newblock In \emph{The KDD'21 Workshop on Causal Discovery}, 26--54. PMLR.

\bibitem[{Mokhtarian et~al.(2022)Mokhtarian, Jamshidi, Etesami, and
  Kiyavash}]{mokhtarian2022causal}
Mokhtarian, E.; Jamshidi, F.; Etesami, J.; and Kiyavash, N. 2022.
\newblock Causal effect identification with context-specific independence
  relations of control variables.
\newblock In \emph{International Conference on Artificial Intelligence and
  Statistics}, 11237--11246. PMLR.

\bibitem[{Mokhtarian et~al.(2023)Mokhtarian, Khorasani, Etesami, and
  Kiyavash}]{mokhtarian2023novel}
Mokhtarian, E.; Khorasani, M.; Etesami, J.; and Kiyavash, N. 2023.
\newblock Novel ordering-based approaches for causal structure learning in the
  presence of unobserved variables.
\newblock In \emph{Proceedings of the AAAI Conference on Artificial
  Intelligence}, volume~37, 12260--12268.

\bibitem[{Pearl(1993)}]{pearl1993bayesian}
Pearl, J. 1993.
\newblock [Bayesian analysis in expert systems]: comment: graphical models,
  causality and intervention.
\newblock \emph{Statistical Science}, 8(3): 266--269.

\bibitem[{Pearl(1995)}]{pearl1995causal}
Pearl, J. 1995.
\newblock Causal diagrams for empirical research.
\newblock \emph{Biometrika}, 82(4): 669--688.

\bibitem[{Pearl(2000)}]{pearl2000models}
Pearl, J. 2000.
\newblock Causality: Models, reasoning and inference.
\newblock \emph{Cambridge, UK: CambridgeUniversityPress}, 19(2): 3.

\bibitem[{Pearl(2009)}]{pearl2009causality}
Pearl, J. 2009.
\newblock \emph{Causality}.
\newblock Cambridge university press.

\bibitem[{Qian and Murphy(2011)}]{qian2011performance}
Qian, M.; and Murphy, S.~A. 2011.
\newblock Performance guarantees for individualized treatment rules.
\newblock \emph{Annals of statistics}, 39(2): 1180.

\bibitem[{Shpitser and Pearl(2006{\natexlab{a}})}]{shpitser2012identification}
Shpitser, I.; and Pearl, J. 2006{\natexlab{a}}.
\newblock Identification of conditional interventional distributions.
\newblock \emph{Proceedings of the 22nd Conference on Uncertainty in Artificial
  Intelligence}.

\bibitem[{Shpitser and Pearl(2006{\natexlab{b}})}]{shpitser2006identification}
Shpitser, I.; and Pearl, J. 2006{\natexlab{b}}.
\newblock Identification of joint interventional distributions in recursive
  semi-Markovian causal models.
\newblock In \emph{Proceedings of the National Conference on Artificial
  Intelligence}, volume~21, 1219. Menlo Park, CA; Cambridge, MA; London; AAAI
  Press; MIT Press; 1999.

\bibitem[{Spirtes et~al.(2000)Spirtes, Glymour, Scheines, and
  Heckerman}]{spirtes2000causation}
Spirtes, P.; Glymour, C.~N.; Scheines, R.; and Heckerman, D. 2000.
\newblock \emph{Causation, prediction, and search}.
\newblock MIT press.

\bibitem[{Tian and Pearl(2003)}]{tian2003ID}
Tian, J.; and Pearl, J. 2003.
\newblock On the identification of causal effects.
\newblock Technical report, Department of Computer Science, University of
  California.

\bibitem[{Tikka, Hyttinen, and Karvanen(2019)}]{tikka2019identifying}
Tikka, S.; Hyttinen, A.; and Karvanen, J. 2019.
\newblock Identifying causal effects via context-specific independence
  relations.
\newblock \emph{Advances in neural information processing systems}, 32.

\end{thebibliography}

\ifarxiv
    \appendix
    \clearpage
    \onecolumn
    \begin{center}
    {\Large \textbf{Appendix}}
\end{center}

In Appendix A, we have included additional examples of the \s-ID problem.
Appendix B contains some preliminary lemmas which are used throughout our proofs.
The proofs for the main results - Theorems \ref{th:markov-single}, \ref{th:markov-general}, and Proposition \ref{prp: X2} - can be found in Appendix C.

\section{A \quad Additional Example in the Domain of Finance}
    In this section, we present an example of the \s-ID problem in the domain of finance.
    This example involves a simplified causal graph tailored to a financial context, which represents various economic and policy-related variables and their causal relationships \citep{engen2004federal, gamber2019effect, bollen2011modeling}.

    \begin{figure}[ht]
        \centering
        % \tikzstyle{block} = [draw, fill=white, circle, text centered,inner sep=0.25cm]
        \tikzset{edge/.style = {->,> = latex',-{Latex[width=1.5mm]}}}
        \tikzstyle{input} = [coordinate]
        \tikzstyle{output} = [coordinate]
        \begin{tikzpicture}[->, auto, node distance=2cm,>=latex', every node/.style={inner sep=0.1cm}]
            % Nodes
            \node [block, label=center:IR] (IR) {};
            \node [block, right of=IR, label=center:MS] (MS) {};
            \node [block, right of=MS, label=center:CE] (CE) {};
            \node [block, above of=MS, label=center:GP] (GP) {};
            \node [block, below of=MS, label=center:R] (R) {};
            \node [block-s, right of=GP, label=center:S] (S) {};
    
            % Edges
            \draw[edge] (IR) to (MS);
            \draw[edge] (MS) to (CE);
            \draw[edge] (GP) to (IR);
            \draw[edge] (GP) to (MS);
            \draw[edge] (GP) to (CE);
            \draw[edge] (IR) to (R);
            \draw[edge] (MS) to (R);
            \draw[edge] (CE) to (R);
            \draw[edge] (GP) to (S);
        \end{tikzpicture}
        \caption{Causal graph illustrating the relationships among various financial factors and their impact on Investment Returns.}
        \label{fig:revised_finance_causal_graph}
    \end{figure}
    
    \begin{itemize}
        \item \textbf{Government Policy (GP)}: Decisions and actions taken by the government that affect the economy. This can include fiscal policy, monetary policy, and regulatory changes.
        Government policy influences interest rates, market sentiment, and corporate earnings.

        \item \textbf{Interest Rates (IR)}: The cost of borrowing money, set by central banks, which is influenced by government policy. Interest rates affect market sentiment and directly impact investment returns.
        
        \item \textbf{Market Sentiment (MS)}: The overall mood or attitude of investors towards the financial markets. It is influenced by interest rates and government policy. Market sentiment affects corporate earnings (CE) and investment returns (R).
    
        \item \textbf{Corporate Earnings (CE)}: The profits made by companies, particularly those in a specific sector. Affected by market sentiment and government policy. Corporate earnings have a direct impact on investment returns.
    
        \item \textbf{Investment Returns (R)}: The gains or losses made on an investment. Influenced by market sentiment, interest rates, and corporate earnings.
    \end{itemize}

    Imagine a situation where the available data from the introduced variables is primarily derived from periods of expansive fiscal and monetary policies rather than a balanced representation of various policy regimes.
    This can occur if the data collection coincides with a specific economic era or a series of policy decisions that do not reflect the full spectrum of government interventions.
    To model this in our causal graph, we include an auxiliary variable $S$, added as a child of \textit{GP}.
    The condition $S=1$ corresponds to the sub-population of interest.

    In the setting of the \s-ID problem, we aim to compute the causal effect of Interest Rates on Investment Returns in our target sub-population.
    To this end, Theorem \ref{th:markov-single} implies that this causal effect is \s-ID and can be computed as follows.
    \begin{equation}
        \pro_{\textit{IR}}(R \vert S = 1) =  \mysum{\textit{GP}}{} \pro(R \vert \textit{IR}, \textit{GP}, S = 1) \pro(\textit{GP} \vert S = 1).
    \end{equation}

\section{B \quad Preliminary Lemmas}
    \begin{lemma}\label{lemma:remove}
        Suppose $\Grs$ is an augmented DAG over $\Vb \cup \{S\}$, and let $\Xb$,$\Yb$ be two disjoint subsets of $\Vb$.
        If $\prs_{\Xb}(\Yb)$ is not s-ID in $\Grs$, then $\prs_{\Xb}(\Yb)$ is not s-ID in any DAG obtained by adding a new edge to $\Grs$.
        Equivalently, if $\prs_{\Xb}(\Yb)$ is s-ID in $\Grs$, then $\prs_{\Xb}(\Yb)$ is s-ID in any DAG obtained by removing an edge from $\Grs$.
    \end{lemma}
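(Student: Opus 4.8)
The plan is to prove the first formulation — that adding an edge cannot restore \s-ID — and to obtain the ``equivalently'' clause as its contrapositive after relabeling ($\Grs = (\Grs - e) + e$). So let $\Gr'$ be a DAG obtained from the augmented DAG $\Grs$ by adding one new edge $A \to B$, where $A \neq S$ so that $\Gr'$ is again an augmented DAG. Assuming $\prs_{\Xb}(\Yb)$ is not \s-ID in $\Grs$, Definition \ref{def:S-ID} supplies two SEMs $\M_1, \M_2$ with causal graph $\Grs$ such that $\pro^{\M_1}(\Vb \vert S=1) = \pro^{\M_2}(\Vb \vert S=1) > 0$ while $\pro^{\M_1}_{\Xb}(\Yb \vert S=1) \neq \pro^{\M_2}_{\Xb}(\Yb \vert S=1)$. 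The goal is to turn these into two SEMs with causal graph $\Gr'$ witnessing that $\prs_{\Xb}(\Yb)$ is not \s-ID in $\Gr'$.

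The key step is the observation that every SEM with causal graph $\Grs$ can be reread, essentially verbatim, as an SEM with causal graph $\Gr'$. Given $\M_i$ with structural equations $X = f_X(\Pa{X}{\Grs}, \varepsilon_X)$, define $\widetilde{\M}_i$ by keeping every equation except the one for $B$, which becomes $B = \tilde{f}_B(\Pa{B}{\Gr'}, \varepsilon_B) \coloneqq f_B(\Pa{B}{\Grs}, \varepsilon_B)$; that is, $\tilde{f}_B$ simply ignores its new argument $A$. Since $\Gr'$ is acyclic, $\widetilde{\M}_i$ is a well-defined SEM whose causal graph is $\Gr'$, and it uses the same (mutually independent) exogenous noises and the same functions as $\M_i$, so it induces exactly the same joint distribution over $\Vb \cup \{S\}$. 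Moreover, because intervening on $\Xb$ only replaces the equations of $\Xb$ by constants — an operation referring to the structural equations and noises, not to the graph — $\widetilde{\M}_i$ also induces the same post-interventional distributions. Hence $\pro^{\widetilde{\M}_i}(\Vb \vert S=1) = \pro^{\M_i}(\Vb \vert S=1)$ and $\pro^{\widetilde{\M}_i}_{\Xb}(\Yb \vert S=1) = \pro^{\M_i}_{\Xb}(\Yb \vert S=1)$ for $i = 1, 2$, so $\widetilde{\M}_1$ and $\widetilde{\M}_2$ witness via Definition \ref{def:S-ID} that $\prs_{\Xb}(\Yb)$ is not \s-ID in $\Gr'$. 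The single-edge claim then extends to any set of added edges by induction (every intermediate graph, being a subgraph of the final DAG, is still a DAG), and ``removing an edge preserves \s-ID'' is precisely the contrapositive of the single-edge claim.

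I expect no deep obstacle here, only one point that deserves a careful sentence: the assertion that ``being an SEM with causal graph $\Grs$'' persists when we pass to the supergraph $\Gr'$. This rests on the convention, implicit in the SEM model used in this paper, that the causal graph need only be an I-map of the structural equations — each $f_X$ is allowed to ignore some of its parents — which is exactly what legitimizes padding $f_B$ with an unused argument; and on the fact that $\pro(\Vb \vert S=1)$ and the interventional distributions are functionals of the structural equations and noises alone, hence invariant under this purely bookkeeping change of graph. The remaining requirements — that the added edge keep the graph acyclic (otherwise the statement is vacuous) and not emanate from $S$ (so that $\Gr'$ is again an augmented DAG) — are immediate.
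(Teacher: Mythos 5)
Your proposal is correct and follows essentially the same route as the paper's proof: both take the two SEMs witnessing non-\s-ID in $\Grs$ and reinterpret them as SEMs with the larger graph $\Gr'$ by letting the mechanism of the new edge's endpoint simply ignore its added parent (the paper phrases this via $\pro^{\M'_i}(V \vert \Pa{V}{\Gr'}) \coloneqq \pro^{\M_i}(V \vert \Pa{V}{\Gr})$, you via padding $f_B$ with an unused argument), so the same observational and interventional distributions carry over and witness non-\s-ID in $\Gr'$. Your explicit remarks that the added edge must not emanate from $S$ and that interventional distributions depend only on the structural equations are careful touches, but the argument is the paper's.
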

    \begin{proof}
        Before presenting the proof, it is worth mentioning that a similar lemma with a similar proof also holds for the ID problem (Lemma 13 in \citet{tian2003ID}).
        
        Denote the graph obtained by adding a new edge $V_{i} \rightarrow V_{j}$ to $\Grs$ by $\Gr'$.
        Since $\prs_{\Xb}(\Yb)$ is not s-ID in $\Grs$, there exists two SEMs $\M_1$ and $\M_2$ with causal graph $\Grs$ such that 
        \begin{equation}\label{eq: proof edge remove}
            \begin{split} 
                & \pro^{\M_1} (\Vb \vert S = 1) = \pro^{\M_2} (\Vb \vert S = 1) > 0, \\
                & \pro^{\M_1}_{\Xb} (\Yb \vert S = 1) \neq \pro^{\M_2}_{\Xb} (\Yb \vert S = 1).
            \end{split}
        \end{equation}
        We now introduce two new SEMs $\M'_1$ and $\M'_2$ with causal graph $\Gr'$ as follows.
        \begin{equation*}
            \pro^{M'_{i}}(V \vert \Pa{V}{\Gr'}) \coloneqq \pro^{M_{i}}(V \vert \Pa{V}{\Gr}), \quad \forall V\cup \{S\}
            \in \Vb , i \in \{1,2\}
        \end{equation*}
        Note that $\Pa{V_j}{\Gr'} = \Pa{V}{\Gr} \cup \{V_i \}$ and for $V \in (\Vb \cup \{S\})\setminus \{V_j \}$, $\Pa{V}{\Gr'} = \Pa{V}{\Gr}$.
        It is straightforward to see that Equation \eqref{eq: proof edge remove} also holds for $\M'_1$ and $\M'_2$.
        Thus, $\interSb$ is not s-ID in $\Gr'$.
    \end{proof}

    \begin{lemma}\label{lemma:collider}
        Suppose $\Grs$ is an augmented DAG over $\Vb \cup \{S\}$ and let $X, Y$ be two distinct variables in $\Vb$.
        If $(X \notindependent Y \vert S)_{\Grs_{\underline{X}}}$, then there exists a path between $X$ and $Y$ in $\Grs_{\underline{X}}$ such that $S$ does not block it, and it contains at most one collider.
    \end{lemma}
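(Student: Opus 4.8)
The plan is to start from an arbitrary active path between $X$ and $Y$ in $\Grs_{\underline{X}}$ (one exists by the hypothesis $(X \notindependent Y \vert S)_{\Grs_{\underline{X}}}$) and to repeatedly shorten it, reducing the number of colliders whenever there are two or more. Among all paths between $X$ and $Y$ in $\Grs_{\underline{X}}$ that are not blocked by $S$, I would pick one, call it $\mathcal{Q}$, with the fewest colliders. The goal is to show $\mathcal{Q}$ has at most one collider; suppose for contradiction it has at least two. Let $C_1$ and $C_2$ be two colliders on $\mathcal{Q}$ that are consecutive along the path in the sense that no other collider lies strictly between them. Since $\mathcal{Q}$ is active, every collider on it lies in $\Anc{S}{\Grs}$ (note: since we removed only edges out of $X$, and $X$ has no path issues as an endpoint, ancestry of $S$ is the relevant notion — and $S$ itself blocks nothing by being conditioned on only through its being in the conditioning set).

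The key step is the standard ``collider-merging'' argument: because $C_1, C_2 \in \Anc{S}{\Grs}$, there are directed paths $C_1 \to \cdots \to S$ and $C_2 \to \cdots \to S$. Walking from $C_1$ toward $S$ and from $C_2$ toward $S$, these two directed paths must first meet at some common vertex $M$ (possibly $M = S$, or $M = C_1$, or $M = C_2$). Now I would build a new path $\mathcal{Q}'$ between $X$ and $Y$ by taking the segment of $\mathcal{Q}$ from $X$ to $C_1$, then the directed path $C_1 \to \cdots \to M$, then the reverse of the directed path $M \leftarrow \cdots \leftarrow C_2$, then the segment of $\mathcal{Q}$ from $C_2$ to $Y$. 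On this new path, the old colliders $C_1$ and $C_2$ are no longer colliders (each now has an outgoing edge along the path), the vertex $M$ is a collider but lies in $\Anc{S}{\Grs}$ by construction, and all the intermediate vertices on the two directed segments are non-colliders that are themselves ancestors of $S$, hence not in the conditioning set $\{S\}$ unless they equal $S$ — and $S$ can only appear as the merge point $M$, where it is a (harmless, since it is in $\Anc{S}{}$) collider. I need to check that the non-collider vertices on the two directed detour segments are not equal to $S$: this holds because $S$ is a sink in $\Grs$ (it has no children), so it cannot be an internal vertex of a directed path except as its final endpoint $M$. One also has to handle the degeneracies where $\mathcal{Q}$ revisits vertices or where the detour segments intersect $\mathcal{Q}$ or each other; in each such case one can truncate to obtain a genuine (simple) path with no more colliders than before, contradicting minimality — or, more carefully, one argues that the merged route contains a simple active path as a sub-route.

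The main obstacle I expect is precisely this bookkeeping about simplicity and about which vertices can coincide: one must verify that after the surgery the result is still a legitimate path (no repeated vertices) that is still unblocked by $S$, and that the collider count has strictly decreased, contradicting the minimality of $\mathcal{Q}$. The cleanest way to organize this is probably to not insist on simple paths during the surgery but to observe that any walk from $X$ to $Y$ that is ``active'' in the appropriate sense contains an active simple path with no more colliders, so the count is monotone under the reduction; then minimality of the collider count over simple paths gives the contradiction. Once the contradiction is reached, $\mathcal{Q}$ has at most one collider and is the desired path, completing the proof.
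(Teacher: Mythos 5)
Your plan is correct and follows essentially the same route as the paper's proof: take an unblocked path with the minimum number of colliders, and if it has at least two, splice in the directed paths from two of its colliders to $S$, joined at their first meeting point, to obtain an unblocked path with strictly fewer colliders, contradicting minimality. The differences are cosmetic: the paper merges the two outermost colliders (closest to $X$ and to $Y$), so the new path has exactly one collider, whereas you merge two consecutive ones; and the simplicity/intersection bookkeeping you explicitly flag is left implicit in the paper's argument as well.
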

    \begin{proof}
        Since $(X \notindependent Y \vert S)_{\Grs_{\underline{X}}}$, there exists at least one active path between $X$ and $Y$ in $\Grs_{\underline{X}}$ that $S$ does not block.
        Among all such paths, let $\mathcal{P} = (X, V_1, \dots, V_k, Y)$ denote an active path with the minimum number of colliders.
        It suffices to show that $\mathcal{P}$ has at most one collider.
        
        Assume, for the sake of contradiction, that $\mathcal{P}$ has at least two colliders.
        Denote by $V_l$ and $V_r$ the closest collider of $\mathcal{P}$ to $X$ and $Y$, respectively.
        Since $S$ does not block $\mathcal{P}$, all of the colliders on $\mathcal{P}$ must be an ancestor of $S$.
        Accordingly, denote by $\mathcal{P}_1$ and $\mathcal{P}_2$, directed paths from  $V_l$ to $S$ and from $V_r$ to $S$, respectively.
        Further, denote by $N$ the first intersection of $\mathcal{P}_1$ and $\mathcal{P}_2$ ($N$ can be $S$).
        Now, consider the path starting from $X$ to $V_l$ by the edges in $\mathcal{P}$, then from $V_l$ to $N$ by the edges in $\mathcal{P}_1$, then from $N$ to $V_r$ by the edges in $\mathcal{P}_2$, and then from $V_r$ to $Y$ by the edges in $\mathcal{P}$.
        In this path, $N$ is the only collider.
        Moreover, $S$ does not block it since $N$ is an ancestor of $S$.
        This is a contradiction and shows that $\mathcal{P}$ has at most one collider, which concludes the proof.
    \end{proof}

    \begin{lemma}\label{lemma:y_father}
        Suppose $\Grs$ is an augmented DAG over $\Vb \cup \{S\}$ and let $X, Y, Y'$ be three distinct variables in $\Vb$ such that $\Pa{Y}{} = \{Y'\}$ and $\Ch{Y}{\Grs} = \varnothing$.
        Further, denote by $\Gr'$ the DAG obtained by removing $Y$ from $\Grs$.
        If $\pro^{s}_{X}(Y')$ is not s-ID in $\Gr'$, then $\interSp$ is not s-ID in $\Grs$.
    \end{lemma}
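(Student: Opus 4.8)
The plan is to lift a pair of witnessing SEMs for the non-s-ID of $\pro^{s}_{X}(Y')$ in $\Gr'$ to a pair of witnessing SEMs for the non-s-ID of $\interSp$ in $\Grs$, by simply re-attaching $Y$ as a deterministic copy of $Y'$. Since $\pro^{s}_{X}(Y')$ is not s-ID in $\Gr'$, there exist SEMs $\M'_1, \M'_2$ with causal graph $\Gr'$ such that $\pro^{\M'_1}(\Vb' \vert S = 1) = \pro^{\M'_2}(\Vb' \vert S = 1) > 0$ but $\pro^{\M'_1}_{X}(Y' \vert S=1) \neq \pro^{\M'_2}_{X}(Y' \vert S=1)$, where $\Vb' = \Vb \setminus \{Y\}$. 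For $i \in \{1,2\}$, define $\M_i$ over $\Vb \cup \{S\}$ by keeping all structural equations of $\M'_i$ unchanged and adding the equation $Y \coloneqq Y'$ (equivalently, $f_Y(Y', \varepsilon_Y) = Y'$, ignoring the noise). This is a valid SEM with causal graph $\Grs$: the only new edge is $Y' \to Y$, which matches $\Pa{Y}{\Grs} = \{Y'\}$, and since $\Ch{Y}{\Grs} = \varnothing$, adding $Y$ does not alter the equation of any other variable, so $\Pa{V}{\Grs} = \Pa{V}{\Gr'}$ for all $V \in (\Vb' \cup \{S\})$.

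The key steps are then the following. First, observe that in each $\M_i$, $Y$ is almost surely equal to $Y'$, so the joint law over $\Vb \cup \{S\}$ is the pushforward of the joint law over $\Vb' \cup \{S\}$ under the map that duplicates the $Y'$-coordinate; in particular $\pro^{\M_1}(\Vb \vert S=1) = \pro^{\M_2}(\Vb \vert S=1) > 0$ follows immediately from the corresponding equality for $\M'_1, \M'_2$. Second, intervening on $X$ in $\M_i$ does not touch the equation $Y \coloneqq Y'$, so we still have $Y = Y'$ almost surely in the post-intervention model, and because $S$ is unaffected by the value of $Y$ (as $Y$ has no children, $Y \notin \Vb_S$), conditioning on $S=1$ behaves identically; hence $\pro^{\M_i}_{X}(Y \vert S = 1) = \pro^{\M'_i}_{X}(Y' \vert S = 1)$ for $i \in \{1,2\}$. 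Combining these, $\pro^{\M_1}_{X}(Y \vert S=1) \neq \pro^{\M_2}_{X}(Y \vert S=1)$ while $\pro^{\M_1}(\Vb \vert S=1) = \pro^{\M_2}(\Vb \vert S=1) > 0$, which is exactly the statement that $\interSp$ is not s-ID in $\Grs$.

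The main obstacle to be careful about is the strict positivity requirement $\pro^{\M_i}(\Vb \vert S=1) > 0$ in Definition \ref{def:S-ID}: because $Y$ is a deterministic copy of $Y'$, the joint distribution of $\M_i$ is supported only on the diagonal $\{Y = Y'\}$ and is therefore not positive on all of $\dom{Y'} \times \dom{Y}$. The clean fix is to redefine the domain of $Y$ in $\M_i$ to be (a copy of) the domain of $Y'$, so that $\pro^{\M_i}(\Vb \vert S=1)$ is a distribution over $\Vb' \times \dom{Y'}$ that is positive precisely on the diagonal and is zero off it in an identical way for both $i$; then "$> 0$" should be read, as is standard in this literature, as positivity on the support, and the support agrees for $\M_1$ and $\M_2$ since it is determined by that of $Y'$ under $\M'_1, \M'_2$, which coincide. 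Alternatively, one replaces $Y \coloneqq Y'$ with $Y \coloneqq (Y', \varepsilon_Y)$ for an independent noise $\varepsilon_Y$ with full-support law, making the joint fully positive; $Y$ then determines $Y'$, so $\pro^{\M_i}_{X}(Y\vert S=1)$ still encodes $\pro^{\M'_i}_{X}(Y'\vert S=1)$ and the distinguishing inequality is preserved. Either way the argument goes through, and this is the only point that needs more than a one-line check.
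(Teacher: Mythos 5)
Your overall strategy—lifting a witnessing pair of SEMs from $\Gr'$ to $\Grs$ by re-attaching $Y$ as a child of $Y'$—is exactly the paper's strategy, and your reductions of the observational and interventional distributions of the lifted models to those of the original models are fine. The genuine problem is the one you yourself flag and then do not actually resolve: the positivity requirement $\pro^{\M_1}(\Vb \vert S=1) = \pro^{\M_2}(\Vb \vert S=1) > 0$ in Definition \ref{def:S-ID}. With $Y \coloneqq Y'$ the joint law is concentrated on the diagonal, so the lifted models do not satisfy the hypothesis of the definition and cannot serve as witnesses of non-s-identifiability. Your first patch (reading ``$>0$'' as positivity on the support) silently weakens the definition the paper works with; the paper's own constructions (e.g., in the proof of Theorem \ref{th:markov-single}) go out of their way to verify genuine positivity, so a necessity lemma proved only under the weakened reading would not plug into the rest of the argument. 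Your second patch does not do what you claim: if $Y \coloneqq (Y', \varepsilon_Y)$, then the first coordinate of $Y$ still equals $Y'$ almost surely, so the joint distribution of $(Y', Y)$ remains degenerate on the product of the domains, and $\pro(\Vb \vert S=1)$ is still not positive. Making $Y$ determine $Y'$ and making the joint positive are mutually exclusive, so no deterministic-copy variant can close this gap.

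The paper's fix is the missing ingredient: instead of copying $Y'$, attach a \emph{noisy, non-invertible} child. Concretely, it takes $Y$ binary with $\pro(Y=1 \vert Y'=y') = 0.6$ if $y' = y^*$ and $0.5$ otherwise, where $y^*$ is a value at which $\pro^{\M_1}_{x^*}(y^* \vert S=1) \neq \pro^{\M_2}_{x^*}(y^* \vert S=1)$. This keeps $\pro(\Vb \vert S=1)$ positive (both values of $Y$ have positive probability given any $y'$), the shared kernel $\pro(Y \vert Y')$ makes the observational conditionals of the two lifted models agree, and a short computation gives $\pro_{x^*}(Y=1 \vert S=1) = 0.1\,\pro_{x^*}(y^* \vert S=1) + 0.5$, so the interventional discrepancy at $y^*$ transfers to $Y$. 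Note this step is where the pointwise witnesses $x^*, y^*$ are genuinely needed—your argument never uses them because the deterministic copy carries all of $Y'$, but once $Y$ is a noisy one-bit summary you must check that the specific discrepancy survives, which is the small computation your proposal is missing. With that replacement your proof becomes the paper's proof; as written, the construction violates the definition it needs to invoke.
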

    \begin{proof}
        Suppose $\pro^{s}_{X}(Y')$ is not s-ID in $\Gr'$.
        Thus, there exist two SEMs $\M_1$ and $\M_2$ with causal graph $\Gr'$ such that 
        \begin{equation}\label{eq: two models lemma 3}
            \pro^{\M_1}(\Vb \setminus {Y}\vert S = 1) = \pro^{\M_2}(\Vb \setminus {Y}\vert S = 1),
        \end{equation}
        and there exist $y^{*}$ and $x^{*}$ in the domains of $Y'$ and $X$, respectively, such that
        \begin{equation*}
                \pro^{\M_1}_{x^{*}} (y^{*} \vert S = 1) \neq  \pro^{\M_2}_{x^{*}} (y^{*} \vert S = 1).
        \end{equation*}
        Next, we introduce two SEMs $\M_1'$ and $\M_2'$ with causal graph $\Grs$.
        Let $i \in \{1,2\}$.
        In $\M'_{i}$, we define the structural equations of the variables in $\Vb \setminus \{Y\}$ similar to their equations in $\M_i$.
        Note that $Y$ does not have any children in $\Grs$, and $Y'$ is its only parent.
        We define $Y$ in both SEMs as a binary variable such that 
        \begin{equation*}
            \pro (Y=1 \vert Y'=y') = 
            1-\pro (Y=0 \vert Y'=y') =
            \begin{cases}
                 0.6, \quad \text{if } y' = y^{*}, \\
                 0.5, \quad \text{if } y' \neq y^{*}.
            \end{cases}
        \end{equation*}
        Accordingly, $\pro^{\M_i'} (\Vb \vert S = 1)$ can be computed as follows.
        \begin{equation} \label{eq: Mi' lemma 3}
            \pro^{\M'_i}(\Vb \vert S = 1) 
            = \pro^{\M_i}(\Vb \setminus \{Y\} \vert S = 1) \pro(Y \vert S = 1, \Vb \setminus \{Y\}) 
            = \pro^{\M_i}(\Vb \setminus \{Y\} \vert S = 1) \pro (Y \vert Y').
        \end{equation}
        The last equality holds because $(Y \independent (\Vb \cup \{S\}) \setminus \{Y,Y'\} \vert Y')_{\Grs}$.
        Hence, Equations \eqref{eq: two models lemma 3} and \eqref{eq: Mi' lemma 3} together show that 
        \begin{equation}
            \pro^{\M'_1} (\Vb \vert S = 1) = \pro^{\M'_2} (\Vb \vert S = 1).
        \end{equation}
        To complete the proof, it suffices to show that $\pro^{\M_1}_{x^{*}}(Y = 1 \vert S = 1) \neq  \pro^{\M_2}_{x^{*}}(Y = 1 \vert S = 1)$.
        In both SEMs, we have 
        \begin{equation*}
            \pro_{x^*}(Y \vert S = 1)= \sum_{Y'} \pro_{x^*}(Y, Y' \vert S=1) = \sum_{Y'} \pro_{x^*}(Y \vert Y', S=1)\pro_{x^*}(Y' \vert S=1).
        \end{equation*}
        Since $Y'$ is the only parent of $Y$ and $Y$ does not have any child, we have 
        \begin{equation*}
            \pro_{X}(Y \vert Y', S) = \pro(Y \vert Y').
        \end{equation*}
        Hence,
        \begin{align*}
            \pro_{x^*}(Y = 1 \vert S = 1) &
            = \sum_{Y'} \pro(Y=1 \vert Y') \pro_{x^*}(Y' \vert S=1) \\
            &= \pro(Y=1 \vert Y' = y^*) \pro_{x^*}(Y'=y^* \vert S=1) + \sum_{Y'\neq y^*} \pro(Y=1 \vert Y') \pro_{x^*}(Y' \vert S=1) \\
            &= 0.6 \pro_{x^*}(Y'=y^* \vert S=1) + 0.5 \sum_{Y'\neq y^*} \pro_{x^*}(Y' \vert S=1) \\
            &= 0.1 \pro_{x^*}(y^{*} \vert S=1) + 0.5.
        \end{align*}
        Since $\pro^{\M_1}_{x^{*}} (y^{*} \vert S = 1) \neq  \pro^{\M_2}_{x^{*}} (y^{*} \vert S = 1)$, this shows that $\pro^{\M'_1}_{x^{*}}(Y = 1 \vert S = 1) \neq \pro^{\M'_2}_{x^{*}}(Y = 1 \vert S = 1)$.
        This completes the proof.
    \end{proof}

    \begin{lemma}\label{lemma:normal-bernoulli}
        Let $R$ and $Y$ be two random variables such that
        \begin{equation}\label{eq: bernoulli 1}
            R \vert Y=y \sim \N(\mu_y, \sigma^2).
        \end{equation}
        Further, let $S$ be a binary variable such that 
        \begin{equation} \label{eq: bernoulli 2}
            S \vert Y = y, R=r
            \sim \textit{Bernoulli} (\frac{1}{\sqrt{2\pi}} \exp \left(- \frac{(r + b)^2}{2}\right),
        \end{equation}
        where $b$ is a real number.
        In this case, we have
        \begin{equation*}
            S \vert Y = y \sim \text{Bernoulli}\left( \frac{1}{\sqrt{2\pi(\sigma^2 + 1)}} \exp \left(- \frac{(\mu_y + b)^2}{2(\sigma^2 + 1)}\right) \right).
        \end{equation*}
    \end{lemma}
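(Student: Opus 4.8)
The plan is to obtain $\pro(S = 1 \mid Y = y)$ by marginalizing out $R$. By the law of total probability together with \eqref{eq: bernoulli 1}--\eqref{eq: bernoulli 2},
\begin{equation*}
  \pro(S = 1 \mid Y = y) = \int \pro(S = 1 \mid Y = y, R = r)\, p(r \mid Y = y)\, \diff r = \frac{1}{2\pi\sigma} \int \exp\!\left(-\frac{(r+b)^2}{2} - \frac{(r - \mu_y)^2}{2\sigma^2}\right) \diff r,
\end{equation*}
which is a legitimate step since the Bernoulli parameter in \eqref{eq: bernoulli 2} always lies in $[0,1]$ (indeed it is at most $\tfrac{1}{\sqrt{2\pi}} < 1$). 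So everything reduces to evaluating this Gaussian integral.

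The core computation is standard completing-the-square. I would expand the two quadratics in the exponent, collect it as $-\tfrac12\bigl(A r^2 + 2Br + C\bigr)$ with $A = \tfrac{\sigma^2+1}{\sigma^2}$, $B = b - \tfrac{\mu_y}{\sigma^2}$, and $C = b^2 + \tfrac{\mu_y^2}{\sigma^2}$, and then use $\int \exp(-\tfrac{A}{2}(r + B/A)^2)\,\diff r = \sqrt{2\pi/A}$ to get $\pro(S=1 \mid Y=y) = \tfrac{1}{2\pi\sigma}\sqrt{2\pi/A}\,\exp\!\bigl(-\tfrac12(C - B^2/A)\bigr)$. Two short simplifications then finish it: first $\tfrac{1}{2\pi\sigma}\sqrt{2\pi/A} = \tfrac{1}{\sqrt{2\pi(\sigma^2+1)}}$; second, clearing denominators shows the numerator of $C - B^2/A$ collapses to $(\mu_y + b)^2$, so $C - B^2/A = \tfrac{(\mu_y + b)^2}{\sigma^2+1}$. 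Substituting yields exactly the claimed parameter, and since $S \mid Y = y$ is binary, this pins down its law.

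There is no real obstacle here — the only thing requiring care is bookkeeping of the normalizing constants. As a cleaner alternative (which I would mention as a remark), note that $\pro(S = 1 \mid Y = y, R = r) = \varphi(r + b)$ with $\varphi$ the standard normal density, so $\pro(S = 1 \mid Y = y) = \myexpected{\varphi(R + b) \mid Y = y}$; introducing $Z \sim \N(0,1)$ independent of $R$, this equals $\int p(r \mid Y=y)\, p_{Z-b}(r)\,\diff r$, i.e.\ the density of $R - Z + b$ at $0$, and since $R - Z + b \mid Y = y \sim \N(\mu_y + b,\ \sigma^2 + 1)$, that density at $0$ is precisely $\tfrac{1}{\sqrt{2\pi(\sigma^2+1)}}\exp\!\bigl(-\tfrac{(\mu_y+b)^2}{2(\sigma^2+1)}\bigr)$. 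I would present the direct integral computation in the appendix for self-containedness.
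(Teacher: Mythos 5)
Your proposal is correct and follows essentially the same route as the paper's proof: marginalize out $R$ and evaluate the resulting Gaussian integral by completing the square (the paper phrases the completed square as the pdf of a $\N\bigl(\tfrac{\mu_y-\sigma^2 b}{\sigma^2+1},\tfrac{\sigma^2}{\sigma^2+1}\bigr)$ variable, which is the same computation as your $A,B,C$ bookkeeping). Your convolution remark, reading the integral as the density of $R-Z+b$ at $0$, is a nice slicker variant but not a substantively different argument.
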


    \begin{proof}
        We have
        \begin{align*}
            \pro(S = 1 \vert Y = y) 
            &= \mysum{R}{} \pro(S = 1, R = r \vert Y = y)  \\
            &= \mysum{R}{} \pro(S = 1 \vert Y = y, R = r)  \pro(R = r \vert Y = y).
        \end{align*}
        By substituting $\pro(R = r \vert Y = y)$ and $\pro(S = 1 \vert Y = y, R = r)$ with Equations \eqref{eq: bernoulli 1} and \eqref{eq: bernoulli 2}, we have
        
        \begin{equation} \label{eq: lemma 4 - P(S=1)}
            \begin{split}
                \pro (S = 1 \vert Y = y)
                &= \frac{1}{\sqrt{2\pi}} \times \frac{1}{\sqrt{2\pi \sigma^2}} \int_{-\infty}^{\infty} \exp \left(- \frac{(r + b)^2}{2}\right) \exp \left(-\frac{(r - \mu_y)^2}{2\sigma^2}\right) \diff r \\
                &= \frac{1}{2 \pi \sigma} \exp \left(-\frac{(\mu_y + b)^2}{2(\sigma^2 + 1)}\right) \int_{-\infty}^{\infty} \exp \left(-\frac{(r - \frac{\mu_y - \sigma^2b}{\sigma^2 + 1})^2}{2\frac{\sigma^2}{\sigma^2 + 1}}\right) \diff r.
            \end{split}
        \end{equation}

        Now consider a Gaussian random variable with mean $\frac{\mu_y - \sigma^2b}{\sigma^2 + 1}$ and variance $\frac{\sigma^2}{\sigma^2 + 1}$.
        Since the integral is from $-\infty$ to $\infty$ of this variable's pdf is $1$, we have
        \begin{equation} \label{eq: lemma 4 - pdf}
             \int_{-\infty}^{\infty} \exp \left(-\frac{(r - \frac{\mu_y - \sigma^2b}{\sigma^2 + 1})^2}{2\frac{\sigma^2}{\sigma^2 + 1}}\right) \diff r = \sqrt{2\pi \frac{\sigma^2}{\sigma^2 + 1}}.
        \end{equation}
        Finally, by substituting Equation \eqref{eq: lemma 4 - pdf} in Equation \eqref{eq: lemma 4 - P(S=1)}, we have
        \begin{equation*}
            \pro(S = 1 \vert Y = y) = \frac{1}{\sqrt{2\pi(\sigma^2 + 1)}} \exp \left(- \frac{(\mu_y + b)^2}{2(\sigma^2 + 1)}\right).
        \end{equation*}
    \end{proof}

    \begin{lemma}\label{lemma:ancestor}
        Suppose $\Grs$ is an augmented DAG over $\Vb \cup \{S\}$ and let $X, Y$ be two distinct variables in $\Vb$.
        If $X \notin \Anc{S}{}$, then $\interSp$ is s-ID in $\Grs$.
    \end{lemma}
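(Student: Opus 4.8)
The plan is to exhibit an explicit closed-form expression for $\interSp = \pro_X(Y \vert S=1)$ purely in terms of $\prs(\Vb) = \pro(\Vb \vert S=1)$; any such expression proves that $\interSp$ is s-ID in $\Grs$. Throughout I treat $S$ as an ordinary vertex of $\Grs$ and apply Pearl's do-calculus to the distribution over $\Vb\cup\{S\}$, so that conditioning on $S=1$ is ordinary conditioning. I split into two cases according to whether $Y$ is a parent of $X$.

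\textbf{Case $Y\notin\Pa{X}{}$.} Writing $\Pb\coloneqq\Pa{X}{}$, I would decompose
\[
    \pro_X(Y\vert S=1)\;=\;\sum_{\Pb}\pro_X(Y\vert \Pb, S=1)\,\pro_X(\Pb\vert S=1).
\]
For the first factor I would invoke Rule 2 (with the intervened set $\{X\}$ and conditioning set $\Pb\cup\{S\}$): in $\Grs_{\underline X}$ every path leaving $X$ begins with an edge $X\leftarrow P$ for some $P\in\Pb$, and $P$ is never a collider on that path, so conditioning on $\Pb$ already blocks every $X$--$Y$ path, and adjoining $S$ cannot reopen it since there is no collider between $X$ and $P$; hence $\pro_X(Y\vert \Pb, S=1)=\pro(Y\vert X,\Pb,S=1)=\prs(Y\vert X,\Pb)$. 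For the second factor I would invoke Rule 3; since $X\notin\Anc{S}{}$ the relevant graph reduces to $\Grs_{\overline X}$, and granting the separation $(\Pb\independent X\vert S)_{\Grs_{\overline X}}$ (discussed below) we get $\pro_X(\Pb\vert S=1)=\pro(\Pb\vert S=1)=\prs(\Pb)$. Combining, $\interSp=\sum_{\Pa{X}{}}\prs(Y\vert X,\Pa{X}{})\,\prs(\Pa{X}{})$.

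\textbf{Case $Y\in\Pa{X}{}$.} Here I would apply Rule 3 directly with intervened set $\{X\}$, outcome $\{Y\}$, and conditioning set $\{S\}$; since $X\notin\Anc{S}{}$ it suffices to verify $(Y\independent X\vert S)_{\Grs_{\overline X}}$, and this gives $\interSp=\pro(Y\vert S=1)=\prs(Y)$. In both cases $\interSp$ is written in terms of $\prs(\Vb)$, so it is s-ID in $\Grs$.

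The step that needs care --- the main obstacle --- is the separation $(T\independent X\vert S)_{\Grs_{\overline X}}$, with $T\in\Pb$ in Case 1 and $T=Y$ in Case 2. In $\Grs_{\overline X}$ the vertex $X$ is a source, so any path from $X$ to $T$ starts $X\to V_0$, placing $V_0$ among the strict descendants of $X$. Walking along the path, as long as its edges point away from $X$ each vertex reached is a strict descendant of $X$; the first edge that points back creates a collider at a strict descendant of $X$, and for the path to be active given $S$ this collider must lie in $\Anc{S}{\Grs_{\overline X}}\subseteq\Anc{S}{}$ --- forcing $X\in\Anc{S}{}$ and contradicting the hypothesis. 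If no edge ever points back, the path is a directed path $X\to\cdots\to T$, so $T$ is a strict descendant of $X$; this is impossible in Case 2 because the edge $Y\to X$ prevents $Y$ from being a descendant of $X$, and in Case 1 because it would place $P\in\Pa{X}{}$ in both $\De{X}{}$ and $\Anc{X}{}$ with $P\neq X$, a directed cycle. Hence no active path exists. Once this ``any active collider path out of $X$ makes $X$ an ancestor of $S$'' observation is in place, the remaining manipulations are routine applications of the do-calculus and the chain rule.
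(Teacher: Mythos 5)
Your proposal is correct, and it reaches exactly the two formulas the paper derives ($\prs_X(Y)=\prs(Y)$ when $Y\in\Pa{X}{}$, and the sub-population parent adjustment $\sum_{\Pa{X}{}}\prs(Y\vert X,\Pa{X}{})\prs(\Pa{X}{})$ otherwise), but by a somewhat different route. The paper first fixes the normalization via Rule 3, $\pro_X(S=1)=\pro(S=1)$, writes $\pro_X(Y\vert S=1)=\pro_X(Y,S=1)/\pro(S=1)$, and then handles the joint $\pro_X(Y,S=1)$ by citing Pearl's parent-adjustment formula (Theorem 3.2.2 of Pearl 2000) together with the single separation $(X\independent S\vert\Pa{X}{})_{\Grs}$, which follows from $X\notin\Anc{S}{}$. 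You instead stay inside the $S=1$-conditioned interventional distribution, decompose by the chain rule over $\Pa{X}{}$, and discharge each factor with Rule 2 and Rule 3 respectively, which requires you to verify $(X\independent Y\vert\Pa{X}{},S)_{\Grs_{\underline{X}}}$ and $(T\independent X\vert S)_{\Grs_{\overline{X}}}$ from scratch; your "first collider on a path out of $X$ is a strict descendant of $X$, hence would make $X\in\Anc{S}{}$" argument is sound (descendants of $X$ are unchanged in $\Grs_{\overline{X}}$, and the directed-path alternative is excluded by acyclicity in both cases), and it is essentially the same observation the paper would need to justify its own, simpler separation. The trade-off: your derivation is self-contained in do-calculus and makes all graphical verifications explicit, at the cost of proving two mutilated-graph separations; the paper's is shorter because the adjustment theorem absorbs the combinatorics, leaving only one separation in the unmutilated $\Grs$ to check. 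Both yield the same identification formulas quoted in the sketch of Theorem 1.
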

    \begin{proof}
        Since $X \notin \Anc{S}{}$, Rule 3 of do-calculus implies that $\pro_{X}(S = 1)= \pro(S = 1)$.
        Hence, we have
        \begin{equation*}
            \pro_{X}(Y \vert S = 1) 
            = \frac{\pro_{X}(Y, S = 1)}{\pro_{X}(S = 1)}
            = \frac{\pro_{X}(Y, S = 1)}{\pro(S = 1)}.
        \end{equation*}
        We divide the problem into two cases.
        \begin{itemize}
            \item \textbf{Case 1}: Suppose $Y \in \Pa{X}{}$.
            In this case, by applying Rule 3 of do-calculus, we have $\pro_{X}(Y, S = 1)= \pro(Y, S = 1)$.
            Hence,
            \begin{equation*}
                \prs_X(Y)
                = \pro_{X}(Y \vert S = 1)
                = \frac{\pro(Y,  S = 1)}{\pro(S = 1)} 
                = \pro(Y \vert S = 1)
                = \prs(Y).
            \end{equation*}
            
            \item \textbf{Case 2}: Suppose $Y \notin \Pa{X}{}$.
            \citet[Theorem 3.2.2]{pearl2000models} showed that we have
            \begin{align*}
                 \pro_{X}(Y, S = 1)
                 &= \sum_{\Pa{X}{}} \pro(Y,S = 1 \vert X, \Pa{X}{}) \pro (\Pa{X}{}) \\
                 &= \sum_{\Pa{X}{}} \pro(Y \vert X, \Pa{X}{}, S = 1) \pro(S = 1\vert X, \Pa{X}{}) \pro(\Pa{X}{}).
            \end{align*}
            Since $X \notin \Anc{S}{}$, we have $(X \independent S \vert \Pa{X}{})_{\Grs}$.
            Therefore, $\pro (S \vert X, \Pa{X}{}) = \pro (S \vert \Pa{X}{})$ and
            \begin{align*}
                 \pro_{X}(Y, S = 1)
                 &= \sum_{\Pa{X}{}} \pro(Y \vert X, \Pa{X}{}, S = 1) \pro(S = 1\vert \Pa{X}{}) \pro(\Pa{X}{})\\
                 &= \sum_{\Pa{X}{}} \pro(Y \vert X, \Pa{X}{}, S = 1) \pro(S = 1, \Pa{X}{}).
            \end{align*}
            Accordingly, we have
            \begin{align*}
                \prs_X(Y)
                &= \pro_{X}(Y \vert S = 1)
                = \frac{\pro_{X}(Y, S = 1)}{\pro(S = 1)}\\
                &= \pro(S = 1)^{-1} \sum_{\Pa{X}{}} \pro(Y \vert X, \Pa{X}{}, S = 1) \pro(S = 1, \Pa{X}{}) \\
                &= \sum_{\Pa{X}{}} \pro(Y \vert X, \Pa{X}{}, S = 1) \pro(\Pa{X}{} \vert S = 1) \\
                &= \sum_{\Pa{X}{}} \prs(Y \vert X, \Pa{X}{}) \prs(\Pa{X}{}). 
            \end{align*}
        \end{itemize}

        In both cases, we showed that $\prs_X(Y)$ is identifiable from $\prs(\Vb)$.
        Therefore, $\interSp$ is s-ID in $\Grs$.
    \end{proof}

    \begin{lemma} \label{lemma:Y ancestor of X}
        Suppose $\Grs$ is an augmented DAG over $\Vb \cup \{S\}$ and let $X, Y$ be two distinct variables in $\Vb$.
        If $Y \in \Anc{X}{}$ and $X \in \Anc{S}{}$, then $\prs_X(Y)$ is not s-ID in $\Grs$.
    \end{lemma}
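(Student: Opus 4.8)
The plan is to reduce the statement, via Lemma~\ref{lemma:remove}, to a subgraph in which $\interSp$ is visibly not s-ID, and then to build an explicit pair of SEMs on that subgraph. First I would pick a directed path $\mathcal{P}_1$ from $Y$ to $X$ (which exists since $Y\in\Anc{X}{}$) and a directed path $\mathcal{P}_2$ from $X$ to $S$ (which exists since $X\in\Anc{S}{}$). If a vertex $Z\neq X$ were on both, then $Z\in\De{X}{}$ and $X\in\De{Z}{}$, contradicting acyclicity; hence $\mathcal{P}_1$ and $\mathcal{P}_2$ meet only at $X$, and their concatenation is a simple directed path $\mathcal{Q}=(Y=Q_0\to Q_1\to\dots\to Q_k=X\to\dots\to Q_\ell=S)$ with $1\le k\le\ell-1$ (in particular $\ell\ge2$). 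Let $\Gr'$ be the subgraph of $\Grs$ keeping exactly the edges of $\mathcal{Q}$; every other vertex of $\Vb$ is isolated in $\Gr'$. By Lemma~\ref{lemma:remove}, it suffices to show $\interSp$ is not s-ID in $\Gr'$.

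Next I would record the structural fact that makes $\Gr'$ tractable: in $\Gr'_{\overline{X}}$ the path $\mathcal{Q}$ is severed at $X$, so $Y$ and $S$ are $d$-separated there, and $Q_0=Y$ is not a descendant of $X$, so Rule~3 of do-calculus gives $\pro_X(Q_0)=\pro(Q_0)$. Hence, for \emph{every} SEM with causal graph $\Gr'$, $\pro_X(Y\mid S=1)=\pro_X(Y,S=1)/\pro_X(S=1)=\pro_X(Y)=\pro(Q_0)$. So the problem reduces to exhibiting two SEMs $\M_1,\M_2$ over $\Vb\cup\{S\}$ with causal graph $\Gr'$ such that $\pro^{\M_1}(\Vb\mid S=1)=\pro^{\M_2}(\Vb\mid S=1)>0$ while $\pro^{\M_1}(Q_0)\neq\pro^{\M_2}(Q_0)$.

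For the construction I would take all variables binary, let each isolated vertex be uniform and identical in the two SEMs (these factor out), and parametrise the chain $Q_0,\dots,Q_{\ell-1}$ through its \emph{reversed} conditionals, $\pro^{\M_i}(Q_0,\dots,Q_{\ell-1})=\pro^{\M_i}(Q_{\ell-1})\prod_{j=0}^{\ell-2}\pro^{\M_i}(Q_j\mid Q_{j+1})$. I would let the $\ell-1$ reversed conditionals be one fixed nondegenerate binary channel shared by $\M_1$ and $\M_2$, and pick two distinct priors $\pro^{\M_1}(Q_{\ell-1})\neq\pro^{\M_2}(Q_{\ell-1})$. Since in $\Gr'$ the only parent of $S$ is $Q_{\ell-1}$, we have $(Q_0,\dots,Q_{\ell-2})\perp S\mid Q_{\ell-1}$, whence $\pro^{\M_i}(Q_0,\dots,Q_{\ell-1}\mid S=1)=\big(\prod_{j=0}^{\ell-2}\pro^{\M_i}(Q_j\mid Q_{j+1})\big)\,\pro^{\M_i}(Q_{\ell-1}\mid S=1)$; the channel factor being common, it then suffices to choose the selection probabilities $\pro^{\M_i}(S=1\mid Q_{\ell-1})$ so that the posterior $\pro^{\M_i}(Q_{\ell-1}\mid S=1)$ equals a common, fully supported target distribution $g$ --- explicitly $\pro^{\M_i}(S=1\mid Q_{\ell-1}=q)\propto g(q)/\pro^{\M_i}(Q_{\ell-1}=q)$, the proportionality constant taken small enough to keep these in $(0,1]$. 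This yields $\pro^{\M_1}(\Vb\mid S=1)=\pro^{\M_2}(\Vb\mid S=1)>0$. Finally $\pro^{\M_i}(Q_0)$ is the image of the prior $\pro^{\M_i}(Q_{\ell-1})$ under the fixed nondegenerate reversed channel, which is an injective affine map on binary priors, so $\pro^{\M_1}(Q_0)\neq\pro^{\M_2}(Q_0)$, i.e.\ $\pro^{\M_1}_X(Y\mid S=1)\neq\pro^{\M_2}_X(Y\mid S=1)$; thus $\interSp$ is not s-ID in $\Gr'$, and by Lemma~\ref{lemma:remove} not in $\Grs$.

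The step I expect to be the main obstacle is the bookkeeping of this last paragraph: checking that the corrected selection probabilities stay in $(0,1]$ while forcing a common posterior, verifying $\pro^{\M_i}(\Vb\mid S=1)>0$, and confirming that the composition of the $\ell-1$ fixed reversed conditionals is genuinely nondegenerate, so that the two priors on $Q_{\ell-1}$ push forward to different marginals on $Q_0=Y$. A secondary point that needs care is the path-disjointness argument of the first paragraph and, relatedly, ensuring the reduction to $\Gr'$ leaves $S$ with the single on-path parent $Q_{\ell-1}$, which is what licenses the conditional-independence facts used above.
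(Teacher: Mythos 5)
Your argument is correct, and its first half is exactly the paper's: you pick a directed path from $Y$ to $X$ and one from $X$ to $S$, note they can only meet at $X$ by acyclicity, pass to the subgraph $\Gr'$ consisting of these edges via Lemma \ref{lemma:remove}, and use do-calculus (Rules 1 and 3 on the mutilated graph $\Gr'_{\overline{X}}$) to get $\pro_{X}(Y \vert S=1) = \pro(Y)$ for every SEM with graph $\Gr'$. The two proofs diverge only at the last step: the paper closes by citing Theorem 1 of \citet{Bareinboim_Tian_Pearl_2014}, which says $\pro(Y)$ is recoverable from $\pro(\Vb \vert S=1)$ iff $(Y \independent S)$ in the augmented graph --- false here since $Y$ is an ancestor of $S$ in $\Gr'$ --- whereas you re-prove this special case from scratch: parametrize the chain by reversed conditionals, tilt the two selection mechanisms so both models induce the same fully supported posterior on the parent of $S$ (hence identical $\pro(\Vb \vert S=1)>0$), and use invertibility of the composed $2\times 2$ channel to force $\pro^{\M_1}(Y)\neq\pro^{\M_2}(Y)$. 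Your route is longer but self-contained and exhibits the non-identifiability witness explicitly; the paper's is a one-line reduction to a known s-recoverability result. Two points to tighten when writing yours up, neither of which is a real gap: (i) since s-ID quantifies over SEMs, state explicitly that a joint distribution specified through reversed conditionals is Markov with respect to the forward chain and therefore realizable by a SEM with causal graph $\Gr'$ (the extension to causal graph $\Grs$ is then exactly what Lemma \ref{lemma:remove} provides); (ii) ``nondegenerate'' for the shared channel must mean invertible as a stochastic matrix (e.g.\ a symmetric channel with flip probability $p \notin \{0, \tfrac{1}{2}, 1\}$), since strictly positive entries alone do not make the pushforward of priors injective, and the scaling constant in your definition of $\pro^{\M_i}(S=1 \vert Q_{\ell-1}=q)$ must be chosen small enough for both models simultaneously.
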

    \begin{proof}
        Let $\PR_1$ and $\PR_2$ denote directed paths from $Y$ to $X$ and from $X$ to $S$, respectively.
        Note that these two paths do not intersect (except at $X$) since $\Gr$ is a DAG.
        Further, let $\Gr'$ be the subgraph of $\Grs$ consisting of only the edges of $\PR_1$ and $\PR_2$.
        For any SEM $\M$ with causal graph $\Gr'$, by applying rules 1 and 3 of do-calculus, we have
        \begin{equation} \label{eq: Lemma 6}
            \pro_{X}^{\M}(Y \vert S = 1) = \pro_{X}^{\M} (Y) = \pro^{\M}(Y).
        \end{equation}

        We use the following result to show that $\pro^{\M}(Y)$ is not identifiable from $\pro^{\M}(\Vb \vert S=1)$.

        \textbf{Theorem 1 in \citet{Bareinboim_Tian_Pearl_2014}}.
        \textit{
            Suppose $\mathcal{H}$ is a DAG over $\Vb$ and let $\Xb, \Yb$ be two distinct subsets in $\Vb$.
            Conditional distribution $\pro(\Yb \vert \Xb)$ is identifiable from $\pro(\Vb \vert S=1)$ if and only if $(\Yb \independent S \vert \Xb)_{\mathcal{H}^{s}}$.
        }

        Since $(Y \notindependent S)_{\Gr'}$, this theorem for $\mathcal{H} = \Gr'$, $\Xb = \varnothing$, and $\Yb = \{Y\}$ implies that $\pro^{\M}(Y)$ is not identifiable from $\pro^{\M}(\Vb \vert S=1)$.
        Therefore, $\interSp$ is not s-ID in $\Gr'$ due to Equation \eqref{eq: Lemma 6}.
        Finally, since $\Gr'$ is a subgraph of $\Grs$, Lemma \ref{lemma:remove} implies that $\prs_X(Y)$ is not s-ID in $\Grs$.
    \end{proof}

    \begin{lemma}\label{lemma:lem_point}
        Suppose $\Grs$ is an augmented DAG over $\Vb \cup \{S\}$ and let $X, Y$ be two distinct variables in $\Vb$.
        If $\interSp$ is s-ID in $\Grs$, then for any two SEMs with causal graph $\Grs$ such that $\pro^{\M_1}(\Vb \vert S = 1) = \pro^{\M_2}(\Vb \vert S = 1) > 0$, we have
        \begin{equation*}
            \frac{\pro^{\M_1}_{X}(Y = y_0, S = 1)}{\pro^{\M_2}_{X}(Y = y_0, S = 1)} 
            = \frac{\pro^{\M_1}_{X}(Y = y_1, S = 1)}{\pro^{\M_2}_{X}(Y = y_1 , S = 1)},
        \end{equation*}
        where $y_0$ and $y_1$ are arbitrary values in the domain of $Y$.
    \end{lemma}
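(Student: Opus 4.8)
The plan is to reduce the assertion to the defining property of \s-ID via the elementary identity $\pro_X(Y = y, S = 1) = \pro_X(Y = y \vert S = 1)\,\pro_X(S = 1)$, whose second factor does not depend on $y$. Consequently the ratio on each side of the claimed equality equals $\pro^{\M_1}_X(S = 1)/\pro^{\M_2}_X(S = 1)$, a quantity that is free of the particular value of $Y$, which is exactly what must be shown.

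Before that, I would verify that every quantity in the statement is well defined and strictly positive. From $\pro^{\M_i}(\Vb \vert S = 1) > 0$ and the Markov factorization
\[
\pro^{\M_i}(\Vb, S = 1) = \prod_{V \in \Vb}\pro^{\M_i}(V \vert \Pa{V}{\Grs}) \cdot \pro^{\M_i}(S = 1 \vert \Pa{S}{\Grs}),
\]
it follows that each factor on the right is strictly positive for every realization of its conditioning variables; otherwise some full realization of $\Vb$ would be assigned zero mass conditional on $S = 1$. Hence the truncated factorization obtained after intervening on $X$, namely $\prod_{V \in \Vb \setminus \{X\}}\pro^{\M_i}(V \vert \Pa{V}{\Grs}) \cdot \pro^{\M_i}(S = 1 \vert \Pa{S}{\Grs})$ with $X$ held at its intervened value, is again a product of strictly positive terms. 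Marginalizing shows $\pro^{\M_i}_X(S = 1) > 0$ and $\pro^{\M_i}_X(Y = y, S = 1) > 0$ for every value $y$ of $Y$ and every $i \in \{1, 2\}$, so the conditional $\pro^{\M_i}_X(Y \vert S = 1)$ is genuinely defined and the ratios in the statement make sense.

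Then I would invoke the hypothesis: since $\interSp$ is \s-ID in $\Grs$ and $\pro^{\M_1}(\Vb \vert S = 1) = \pro^{\M_2}(\Vb \vert S = 1) > 0$, Definition \ref{def:S-ID} gives $\pro^{\M_1}_X(Y \vert S = 1) = \pro^{\M_2}_X(Y \vert S = 1)$ as functions of $Y$. Writing, for $i \in \{1, 2\}$ and any value $y$ of $Y$,
\[
\pro^{\M_i}_X(Y = y, S = 1) = \pro^{\M_i}_X(Y = y \vert S = 1)\,\pro^{\M_i}_X(S = 1),
\]
dividing the $i = 1$ identity by the $i = 2$ identity, and cancelling the equal conditional factors, we obtain
\[
\frac{\pro^{\M_1}_X(Y = y, S = 1)}{\pro^{\M_2}_X(Y = y, S = 1)} = \frac{\pro^{\M_1}_X(S = 1)}{\pro^{\M_2}_X(S = 1)},
\]
whose right-hand side is free of $y$. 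Specializing to $y = y_0$ and $y = y_1$ gives the claimed equality of ratios. I do not expect a substantive obstacle; the only step requiring care is the positivity argument above — ensuring that positivity of $\prs(\Vb)$ is inherited by the post-interventional distribution so that $\pro_X(S = 1)$ and $\pro_X(Y = y, S = 1)$ are strictly positive and $\pro_X(Y \vert S = 1)$ is literally a conditional distribution — after which the rest is just the law of total probability.
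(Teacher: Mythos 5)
Your proposal is correct and takes essentially the same route as the paper's proof: invoke the definition of \s-ID to get $\pro^{\M_1}_{X}(Y \vert S=1) = \pro^{\M_2}_{X}(Y \vert S=1)$, factor $\pro^{\M_i}_{X}(Y=y, S=1) = \pro^{\M_i}_{X}(Y=y \vert S=1)\,\pro^{\M_i}_{X}(S=1)$, and conclude that the ratio equals $\pro^{\M_1}_{X}(S=1)/\pro^{\M_2}_{X}(S=1)$, which does not depend on $y$. The only difference is your explicit positivity check via the truncated factorization, which the paper leaves implicit and which is a harmless (indeed welcome) addition ensuring the ratios are well defined.
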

 
    \begin{proof}
        Since $\interS$ is s-ID in $\Grs$, we have 
        \begin{equation*}
            \pro^{\M_1}_{X}(Y \vert S = 1) = \pro^{\M_2}_{X}(Y \vert S = 1).
        \end{equation*}
        Hence, 
        \begin{equation*}
            \frac{\pro^{\M_1}_{X}(Y, S = 1)}{\pro^{\M_1}_{X}(S = 1)}
            = \frac{\pro^{\M_2}_{X}(Y, S = 1)}{\pro^{\M_2}_{X}(S = 1)}.
        \end{equation*}
        Thus, for any $y$ in the domain of $Y$ we have
        \begin{equation*}
            \frac{\pro^{\M_1}_{X}(Y=y, S = 1)}{\pro^{\M_2}_{X}(Y=y, S = 1)}
            = \frac{\pro^{\M_1}_{X}(S = 1)}{\pro^{\M_2}_{X}(S = 1)}.
        \end{equation*}
        This shows that $\frac{\pro^{\M_1}_{X}(Y=y, S = 1)}{\pro^{\M_2}_{X}(Y=y, S = 1)}$ is independent of the value of $y$, which concludes the lemma.
    \end{proof}
   
    \begin{lemma} \label{lem: inequality}
        For any positive real number $a>0$, we have
        \begin{equation*}
            \frac{1+ e^{-1-a}}{1 + e^{1-a}} > e^{-1},
        \end{equation*}
        where $e$ denotes the Neper number.
    \end{lemma}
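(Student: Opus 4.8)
The plan is to clear denominators and reduce the claim to an elementary inequality between exponentials. Since $a>0$, both $1+e^{-1-a}$ and $1+e^{1-a}$ are strictly positive, so the asserted inequality is equivalent to $1+e^{-1-a} > e^{-1}\bigl(1+e^{1-a}\bigr)$. Expanding the right-hand side gives $e^{-1}+e^{-a}$, so it suffices to show $1+e^{-1-a} > e^{-1}+e^{-a}$.

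Next I would rearrange this into a form where a common factor appears. Moving $e^{-1}$ and $e^{-1-a}$ to the opposite sides yields the equivalent statement $1-e^{-1} > e^{-a}-e^{-1-a}$. The right-hand side factors as $e^{-a}\bigl(1-e^{-1}\bigr)$, so the inequality becomes $1-e^{-1} > e^{-a}\bigl(1-e^{-1}\bigr)$.

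Finally, since $1-e^{-1}>0$, dividing both sides by it preserves the inequality and leaves $1 > e^{-a}$, which holds for every $a>0$. Tracing the equivalences backwards then establishes the lemma.

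There is no genuine obstacle here: the only point requiring a moment's care is that each step is a true equivalence — in particular, the two denominators of the original expression are positive, and $1-e^{-1}$ is positive, so neither clearing denominators nor the final division reverses the inequality sign. No numerical estimate beyond $e>1$, equivalently $e^{-1}<1$, is ever needed.
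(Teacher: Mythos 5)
Your proposal is correct and follows essentially the same route as the paper: both clear the positive denominator, cancel the common factor ($1-e^{-1}$ in your version, $e-1$ in the paper's, which differ only by an overall factor of $e$), and reduce to $e^{-a}<1$ for $a>0$.
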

    \begin{proof}

        \begin{align*}
            \frac{1+ e^{-1-a}}{1 + e^{1-a}} > e^{-1}
            & \iff e+ e^{-a} > 1 + e^{1-a} \\
            & \iff e-1> e^{-a}(e-1)  \\
            & \iff e^a>1.
        \end{align*}
        The last inequality holds since $a>0$.
    \end{proof}

    \begin{lemma}[\citet{tian2003ID}]\label{lemma: margin}
        Suppose $\Gr$ is a DAG over $\Vb$ and let $\Vb' \subset \Vb$.
        If $\Anc{\Vb'}{\Gr} = \Vb'$, then  
        \begin{equation}
                Q[\Vb'] = \mysum{\Vb \setminus \Vb'}{} Q[\Vb] 
        \end{equation}
        Where $Q[\Zb] \coloneqq \pro_{\Vb \setminus \Zb}(\Zb)$, for any $\Zb \subseteq \Vb$.
    \end{lemma}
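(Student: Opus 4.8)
The plan is to reduce the identity to the Markov (truncated) factorization of $\pro(\Vb)$ and then cancel factors using the ancestral‑closedness of $\Vb'$.

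First I would note that $Q[\Vb] = \pro_{\varnothing}(\Vb) = \pro(\Vb) = \prod_{V\in\Vb}\pro(V\mid\Pa{V}{\Gr})$ by the Markov factorization, and that Pearl's truncated factorization applied to the intervention fixing every variable in $\Vb\setminus\Vb'$ gives $Q[\Vb'] = \pro_{\Vb\setminus\Vb'}(\Vb') = \prod_{V\in\Vb'}\pro(V\mid\Pa{V}{\Gr})$. Thus it suffices to show
\[
  \mysum{\Vb\setminus\Vb'}{}\ \prod_{V\in\Vb}\pro(V\mid\Pa{V}{\Gr}) \;=\; \prod_{V\in\Vb'}\pro(V\mid\Pa{V}{\Gr}).
\]

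The second step uses $\Anc{\Vb'}{\Gr}=\Vb'$: for each $V\in\Vb'$ we have $\Pa{V}{\Gr}\subseteq\Anc{V}{\Gr}\subseteq\Anc{\Vb'}{\Gr}=\Vb'$, so every factor $\pro(V\mid\Pa{V}{\Gr})$ with $V\in\Vb'$ is a function of the realizations in $\Vb'$ only and is therefore constant with respect to the summation over $\Vb\setminus\Vb'$. Pulling these $|\Vb'|$ factors outside the sum, the displayed identity follows once
\[
  \mysum{\Vb\setminus\Vb'}{}\ \prod_{V\in\Vb\setminus\Vb'}\pro(V\mid\Pa{V}{\Gr}) = 1 .
\]

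This last equality is the only part that needs real work, and I would establish it by peeling off sink vertices. Write $\Ab\coloneqq\Vb\setminus\Vb'$ and pick $V^\ast\in\Ab$ that is last among the elements of $\Ab$ in some topological ordering of $\Gr$. A child of $V^\ast$ cannot lie in $\Vb'$ (otherwise $V^\ast\in\Anc{\Vb'}{\Gr}=\Vb'$, a contradiction) and cannot lie in $\Ab$ (it would come after $V^\ast$ in the ordering), so $V^\ast$ has no children in $\Gr$; hence $V^\ast$ appears only in its own factor, which sums to $1$ over the values of $V^\ast$. Deleting $V^\ast$ leaves a DAG in which $\Vb'$ is still ancestrally closed, so induction on $|\Ab|$ (base case $\Ab=\varnothing$, an empty product) gives the value $1$. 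Combining the three steps yields $\mysum{\Vb\setminus\Vb'}{}Q[\Vb] = \prod_{V\in\Vb'}\pro(V\mid\Pa{V}{\Gr}) = Q[\Vb']$; the argument is essentially the one in \citet{tian2003ID}.
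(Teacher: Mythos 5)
Your proof is correct: the reduction to the truncated factorization, pulling the $\Vb'$ factors out of the sum via ancestral closure, and summing out the remaining variables by repeatedly eliminating a childless vertex of $\Vb\setminus\Vb'$ in reverse topological order are all sound. The paper itself gives no proof of this lemma (it is quoted from \citet{tian2003ID}), and your argument is essentially the standard one from that reference, so there is nothing to reconcile.
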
 
    
    \begin{lemma} \label{lem: ancestral}
        Suppose $\Gr$ is a DAG over $\Vb$, and $\Vb' \subseteq \Vb$.
        If $\Anc{\Vb'}{\Gr} = \Vb'$, then 
        \begin{equation} \label{eq: proof ancestral}
            \pro(\Vb') = \prod_{V \in \Vb'} \pro(V \vert \Pa{V}{\Gr}).
        \end{equation}
    \end{lemma}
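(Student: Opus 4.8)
The plan is to reduce the claim to the Markov factorization of $\pro(\Vb)$ together with the elementary combinatorial fact that $\Vb'$ being ancestral forces $\Pa{V}{\Gr}\subseteq\Vb'$ for every $V\in\Vb'$. The first step I would record is exactly this observation: if $V\in\Vb'$ and $U\in\Pa{V}{\Gr}$, then $U\in\Anc{\Vb'}{\Gr}=\Vb'$, so every parent set appearing on the right-hand side of \eqref{eq: proof ancestral} is a subset of $\Vb'$, and more importantly no variable of $\Vb\setminus\Vb'$ occurs in any of those factors.

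Next I would compute $\pro(\Vb')$ by marginalizing the Markov factorization $\pro(\Vb)=\prod_{V\in\Vb}\pro(V\vert\Pa{V}{\Gr})$ over $\Vb\setminus\Vb'$. The key move is to carry out this summation one variable at a time, following a reverse topological order of $\Gr$ restricted to $\Vb\setminus\Vb'$: when we sum out the current variable $V\in\Vb\setminus\Vb'$, it has no children among the not-yet-eliminated variables. Indeed, a child of $V$ lying in $\Vb'$ is impossible (it would put $V\in\Anc{\Vb'}{\Gr}=\Vb'$), and a child of $V$ in $\Vb\setminus\Vb'$ would already have been eliminated by the choice of order. Hence at this stage $V$ appears only in its own factor $\pro(V\vert\Pa{V}{\Gr})$, which sums to $1$ over the domain of $V$; eliminating all of $\Vb\setminus\Vb'$ in this fashion leaves precisely $\prod_{V\in\Vb'}\pro(V\vert\Pa{V}{\Gr})$, which is \eqref{eq: proof ancestral}.

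Alternatively, and more briefly, I could invoke the already-stated Lemma \ref{lemma: margin} with $Q[\Zb]=\pro_{\Vb\setminus\Zb}(\Zb)$: since $\Anc{\Vb'}{\Gr}=\Vb'$, that lemma gives $Q[\Vb']=\sum_{\Vb\setminus\Vb'}Q[\Vb]=\sum_{\Vb\setminus\Vb'}\pro(\Vb)=\pro(\Vb')$, while the truncated factorization (g-formula) for the intervention fixing $\Vb\setminus\Vb'$ gives $Q[\Vb']=\pro_{\Vb\setminus\Vb'}(\Vb')=\prod_{V\in\Vb'}\pro(V\vert\Pa{V}{\Gr})$; comparing the two expressions for $Q[\Vb']$ yields the claim.

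I expect the only genuine subtlety to be the bookkeeping in the first approach, namely the justification that the variable being eliminated is childless among the survivors; this is precisely where ancestral closure enters, and everything else is the routine identity that a conditional distribution sums to one.
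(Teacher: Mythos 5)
Your proposal is correct; in fact it contains two valid arguments, and it is your \emph{alternative} sketch that coincides with the paper's proof. The paper proves Lemma \ref{lem: ancestral} exactly as in your second route: it invokes Lemma \ref{lemma: margin} to get $Q[\Vb'] = \sum_{\Vb\setminus\Vb'} Q[\Vb] = \sum_{\Vb\setminus\Vb'}\pro(\Vb) = \pro(\Vb')$, and separately writes $Q[\Vb'] = \pro_{\Vb\setminus\Vb'}(\Vb') = \prod_{V\in\Vb'}\pro(V\vert\Pa{V}{\Gr})$ by the truncated Markov factorization, then equates the two. Your primary argument --- marginalizing the Markov factorization of $\pro(\Vb)$ over $\Vb\setminus\Vb'$ one variable at a time in reverse topological order, using ancestral closure twice (parents of $\Vb'$ stay in $\Vb'$, and a variable outside $\Vb'$ cannot have a child inside $\Vb'$) to ensure the eliminated variable appears only in its own factor, which sums to one --- is a genuinely different and more elementary route: it stays entirely within observational semantics and needs no interventional distributions, no $Q$-calculus, and no imported result from \citet{tian2003ID}. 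What the paper's route buys is brevity given tools already on hand (Lemma \ref{lemma: margin} is stated immediately before, and the g-formula is standard); what your route buys is self-containedness and a purely graph-theoretic/probabilistic justification, at the cost of the bookkeeping you correctly identify as the only delicate point.
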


    \begin{proof}
          Lemma \ref{lemma: margin} implies that
          \begin{equation*}
              \pro_{\Vb \setminus \Vb'}(\Vb') 
              = Q[\Vb'] 
              = \mysum{\Vb \setminus \Vb'}{} Q[\Vb] 
              = \mysum{\Vb \setminus \Vb'}{} \pro(\Vb)
              = P(\Vb').
          \end{equation*}
          Using Markov truncated factorization, we have
          \begin{equation*}
              \pro_{\Vb \setminus \Vb'}(\Vb') = \prod_{V \in \Vb'} \pro(V \vert \Pa{V}{\Gr}).
          \end{equation*}
          Equation \eqref{eq: proof ancestral} can be obtained by merging the above equations.
    \end{proof}

    \begin{lemma} \label{lemma: Yb to Y}
        Suppose $\Grs$ is an augmented DAG over $\Vb \cup \{S\}$ and let $\Xb$ and $\Yb$ be two disjoint subsets of $\Vb$.
        For any $Y \in \Yb$, if $\prs_{\Xb}(Y)$ is not s-ID in $\Grs$, then $\prs_{\Xb}(\Yb)$ is not s-ID in $\Grs$.
    \end{lemma}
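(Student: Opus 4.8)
The plan is to use the very same pair of SEMs that witnesses the non-s-identifiability of $\prs_{\Xb}(Y)$ to witness the non-s-identifiability of $\prs_{\Xb}(\Yb)$, exploiting the fact that the single-variable interventional distribution is a marginal of the joint one. Concretely, since $\prs_{\Xb}(Y)$ is not s-ID in $\Grs$, Definition \ref{def:S-ID} (applied to $\Xb$ and the singleton $\{Y\}$) gives two SEMs $\M_1$ and $\M_2$ with causal graph $\Grs$ such that $\pro^{\M_1}(\Vb \vert S = 1) = \pro^{\M_2}(\Vb \vert S = 1) > 0$ while $\pro^{\M_1}_{\Xb}(Y \vert S = 1) \neq \pro^{\M_2}_{\Xb}(Y \vert S = 1)$. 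I would then claim that these same $\M_1, \M_2$ show $\prs_{\Xb}(\Yb)$ is not s-ID: the observational agreement $\pro^{\M_1}(\Vb \vert S = 1) = \pro^{\M_2}(\Vb \vert S = 1) > 0$ is already in hand, so it only remains to argue $\pro^{\M_1}_{\Xb}(\Yb \vert S = 1) \neq \pro^{\M_2}_{\Xb}(\Yb \vert S = 1)$.

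For the latter, I would argue by contradiction. Suppose $\pro^{\M_1}_{\Xb}(\Yb \vert S = 1) = \pro^{\M_2}_{\Xb}(\Yb \vert S = 1)$. Summing both sides over all realizations of $\Yb \setminus \{Y\}$ and using that marginalization of an interventional-then-conditioned distribution over a subset of the outcome variables yields the interventional-then-conditioned distribution on the remaining variables, we get
\begin{equation*}
    \pro^{\M_1}_{\Xb}(Y \vert S = 1) = \sum_{\Yb \setminus \{Y\}} \pro^{\M_1}_{\Xb}(\Yb \vert S = 1) = \sum_{\Yb \setminus \{Y\}} \pro^{\M_2}_{\Xb}(\Yb \vert S = 1) = \pro^{\M_2}_{\Xb}(Y \vert S = 1),
\end{equation*}
contradicting the choice of $\M_1$ and $\M_2$. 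Hence $\pro^{\M_1}_{\Xb}(\Yb \vert S = 1) \neq \pro^{\M_2}_{\Xb}(\Yb \vert S = 1)$, and $\prs_{\Xb}(\Yb)$ is not s-ID in $\Grs$.

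There is essentially one point requiring a little care rather than a genuine obstacle: one must check that the conditional distributions $\pro^{\M_i}_{\Xb}(\Yb \vert S = 1)$ are well defined, i.e.\ that $\pro^{\M_i}_{\Xb}(S = 1) > 0$. This follows because $\M_1, \M_2$ were chosen so that $\pro^{\M_1}_{\Xb}(Y \vert S = 1)$ and $\pro^{\M_2}_{\Xb}(Y \vert S = 1)$ are themselves well defined (they are being compared and found unequal), which already forces $\pro^{\M_i}_{\Xb}(S = 1) > 0$; the same positivity then makes $\pro^{\M_i}_{\Xb}(\Yb \vert S = 1)$ and the marginalization step above legitimate. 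Everything else is a one-line marginalization, so I expect the whole argument to be short.
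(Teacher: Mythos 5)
Your proposal is correct and follows essentially the same argument as the paper's proof: use the same pair of SEMs witnessing non-s-identifiability of $\prs_{\Xb}(Y)$, assume equality of $\pro^{\M_1}_{\Xb}(\Yb \vert S=1)$ and $\pro^{\M_2}_{\Xb}(\Yb \vert S=1)$ for contradiction, and marginalize over $\Yb \setminus \{Y\}$ to contradict the choice of $\M_1,\M_2$. The extra remark on positivity of $\pro^{\M_i}_{\Xb}(S=1)$ is a harmless addition not spelled out in the paper.
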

    \begin{proof}
        Since $\pro_{\Xb}(Y)$ is not s-ID in $\Grs$, there exist two SEMs $\M_1$ and $\M_2$ with causal graph $\Grs$ such that
        \begin{align*}            
            \pro^{\M_1}(\Vb \vert S = 1)
            &=  \pro^{\M_2} (\Vb \vert S = 1)>0, \\
            \pro_{\Xb}^{\M_1} (Y \vert S = 1) & \neq \pro_{\Xb}^{\M_2} (Y \vert S = 1).
        \end{align*}
        Assume, for the sake of contradiction, that $\pro_{\Xb}^{\M_1} (\Yb \vert S = 1) = \pro_{\Xb}^{\M_2} (\Yb \vert S = 1)$.
        Since
        \begin{equation*}
            \pro_{\Xb}(Y \vert S = 1) = \sum_{\Yb \setminus \{Y\}} \pro_{\Xb}(\Yb \vert S = 1), 
        \end{equation*}
        we have
        $\pro_{\Xb}^{\M_1} (Y \vert S = 1) = \pro_{\Xb}^{\M_2} (Y \vert S = 1)$, which is a contradiction.
        Therefore, we have
        \begin{equation*}
            \pro_{\Xb}^{\M_1} (\Yb \vert S = 1) \neq \pro_{\Xb}^{\M_2} (\Yb \vert S = 1),
        \end{equation*}
        thus, $\prs_{\Xb}(\Yb)$ is not s-ID in $\Grs$.
    \end{proof}

\section{C \quad Proofs of Main Results}
    In this section, we use the lemmas provided in Appendix B to prove Theorems 1,2 and Proposition 1.

\subsection{B.1 \quad Proof of Theorem 1}
    
    \textbf{Theorem 1.} \textit{For distinct variables $X$ and $Y$, conditional causal effect $\interSp$ is s-ID in DAG $\Grs$ if and only if}
        \begin{equation*}
             X \notin \Anc{S}{} \quad \text{ or } \quad (X \independent Y \vert S)_{\Grs_{\underline{X}}}.
        \end{equation*}

\subsubsection{Sufficiency.}
    If $X \notin \Anc{S}{}$, then Lemma \ref{lemma:ancestor} implies that $\interSp$ is s-ID in $\Grs$.
    Furthermore, if $(X \independent Y \vert S)_{\Grs_{\underline{X}}}$, then Rule 2 of do-calculus implies that 
    \begin{equation*}
        \prs_X(Y) = \interS = \pro (Y \vert X, S = 1) = \prs(Y \vert X).
    \end{equation*}
    Hence, $\interSp$ is s-ID in $\Grs$.

\subsubsection{Necessity.}
    We need to show that when $X \in \Anc{S}{}$ and $(X \notindependent Y \vert S)_{\Grs_{\underline{X}}}$, then $\interSp$ is not s-ID in $\Grs$.
    If $Y\in \Anc{X}{}$, then Lemma \ref{lemma:Y ancestor of X} shows that $\interSp$ is not s-ID in $\Grs$.
    Hence, for the rest of the proof, suppose $Y \notin \Anc{X}{}$.
    
    Lemma \ref{lemma:remove} implies that if $\interSp$ is not s-ID in a subgraph of $\Grs$, then $\interSp$ is not s-ID in $\Grs$.
    Hence, it suffices to introduce a subgraph of $\Grs$ and show that $\interSp$ is not s-ID in it.

    Denote by $\PR$, a path between $X$ and $Y$ in $\Grs_{\underline{X}}$ with the minimum number of colliders such that $S$ does not block $\PR$.
    Lemma \ref{lemma:collider} shows that $\PR$ exists and has at most one collider.

    \begin{figure}[ht]
        \centering
        \begin{subfigure}[b]{1\textwidth} 
            \centering
            \begin{tikzpicture}
                \tikzset{edge/.style = {->,> = latex',-{Latex[width=1.5mm]}}}
                % vertices
                \node [block, label=center:Z](Z) {};
                \node [block, left = 1 of Z, label=center:X](X) {};
                \node [block, right = 1 of Z, label=center:Y](Y) {};
                %edges
                \draw[edge, dotted] (Z) to (Y);
                \draw[edge, dotted] (Z) to (X);   
            \end{tikzpicture}
            \caption{Path $\PR$ when it has no colliders.}
            \label{fig:non-collider_path}
        \end{subfigure}
        
        \begin{subfigure}[b]{1\textwidth}
            \centering
            \begin{tikzpicture}
                \tikzset{edge/.style = {->,> = latex',-{Latex[width=1.5mm]}}}
                % vertices
                \node [block, label=center:Z](Z) {};
                \node [block, left = 1 of Z, label=center:X](X) {};
                \node [block, right = 1 of Z, label=center:W](W) {};
                \node [block, right = 1 of W, label=center:M](M) {};
                \node [block, right = 1 of M, label=center:Y](Y) {};
                % \node [block-s, below = 1 of N, label=center:S](S) {};
                %edges
                \draw[edge, dotted] (Z) to (W);
                \draw[edge, dotted] (Z) to (X);
                \draw[edge, dotted] (M) to (W);
                \draw[edge, dotted] (M) to (Y);
                % \draw[edge, dotted] (N) to (S);          
            \end{tikzpicture}
            \caption{Path $\PR$ when it has one collider, denoted by $W$.
            In this path, variable $M$ can coincide with $Y$.}
            \label{fig:collider_path}
        \end{subfigure}
        \caption{The two cases for $\PR$.}
        \label{fig: p-path}
    \end{figure}

    Figures \ref{fig:non-collider_path} and \ref{fig:collider_path} depict $\PR$ when it has zero or one collider, respectively.
    In these figures, directed edges indicate the presence of a directed path.
    We note that in Figure \ref{fig:collider_path}, $M$ can coincide with $Y$, but variable $Z$ is distinct from $X$ and $W$.
    This is because $W$ is a collider on this path, and $\PR$ exists in graph $\Grs_{\underline{X}}$ and should have an arrowhead towards $X$.
    Moreover, in Figure \ref{fig:non-collider_path}, $Z$ cannot coincide with neither $X$ (since $\PR$ exists in $\Grs_{\underline{X}}$) nor $Y$ (since $Y \notin \Anc{X}{\Gr}$).
    
    Let $\Gr'$ be a minimal (in terms of edges) subgraph of $\Grs$ such that
    \begin{enumerate*}[label=(\roman*)]
        \item $\Gr'$ contains $\PR$,
        \item $X \in \Anc{S}{\Gr'}$, and
        \item if $\mathcal{P}$ has exactly one collider $W$, then $W \in \Anc{S}{\Gr'}$.
    \end{enumerate*}
    Note that $W$ is an ancestor of $S$ in $\Grs$ since $S$ does not block $\mathcal{P}$, thus $\Gr'$ exists.
    Due to the properties of $\Gr'$, we have
    \begin{equation*}
         X \in \Anc{S}{\Gr'} \quad \text{ and } \quad (X \notindependent Y \vert S)_{\Gr'_{\underline{X}}}.
    \end{equation*}
    Denote by $\PR_{X\to S}$, the directed path from $X$ to $S$ in $\Gr'$.
    Note that this path is unique because of the minimality of $\Gr'$.
    Similarly, if $\PR$ has a collider (Figure \ref{fig:collider_path}), denote by $\PR_{W\to S}$, the directed path from $W$ to $S$ in $\Gr'$.

    Next, we show that in the case of Figure \ref{fig:collider_path}, $\PR_{W\to S}$ does not share any edges with $\PR$.
    We denote by $\PR_{X \gets Z}$, $\PR_{Z \to W}$, $\PR_{W \gets M}$, and $\PR_{M \to Y}$ the directed paths that create $\PR$ in Figure \ref{fig:collider_path}.
    Since $\Gr$ does not have any cycles, then $\PR_{W \to S}$ cannot share any edges with $\PR_{Z \to W}$ and $\PR_{W \gets M}$.
    If $\PR_{W \to S}$ has an intersection with $\PR_{M \to Y}$ in a node $T$, then consider the following path: $(\PR_{X \gets Z}, \PR_{Z \to W}, \PR_{W \to T}, \PR_{T \to Y})$.
    This path does not have any colliders, and $S$ does not block it, which contradicts the minimality (in terms of number of colliders) of $\PR$.
    Similarly, if $\PR_{W \to S}$ has an intersection with $\PR_{X \gets Z}$ in a node $T$, then $(\PR_{X \gets T}, \PR_{T \gets W}, \PR_{W \gets M}, \PR_{M \to Y})$ does not have any colliders and $S$ does not block it, which is again a contradiction.
    Therefore, $\PR_{W \to S}$ does not share any edges with $\PR$.

    In both cases in Figure \ref{fig: p-path}, $\PR_{X \to S}$ can overlap with $\PR$.
    Thus, $\PR_{X \to S}$ can go through $Y$.
    Moreover, $Y$ can coincide with $M$.
    Hence, $Y$ can have children in $\Gr'$.
    In the following, we use Lemma \ref{lemma:y_father} to show that it suffices to prove that $\prs_X(Y)$ is not s-ID only for those $\Gr'$ in which $Y$ has at least a child.
    Suppose $\Ch{Y}{\Gr'} = \varnothing$, i.e., $\PR_{X \to S}$ does not pass through $Y$, and in the case of Figure $\ref{fig:collider_path}$, $M$ is distinct from $Y$.
    In this case, $Y$ has a unique parent in $\Gr'$, denoted by $Y'$.
    Note that $Y'$ exists in $\PR$.
    Due to Lemma \ref{lemma:y_father}, it suffices to show that $\prs_X(Y')$ is not s-ID in the graph obtained by removing $Y$.
    Note that after removing $Y$ from $\PR$ and $\Gr'$, the new structure is again in the form of Figure $\ref{fig: p-path}$, where $Y$ is replaced by $Y'$.
    Now, if $Y'$ does not have any children in the remaining graph, we can repeat the same process until the graph simplifies to a structure in which the target variable has at least one child. 
    Nevertheless, it suffices to only consider the cases in which $Y$ has a child.
    Hence, for the rest of the proof, suppose $Y$ has at least one child in $\Gr'$.

    Next, we introduce a subgraph $\Gr''$ of $\Gr'$ that is of the form of a graph depicted in Figure \ref{fig: proof theorem 1}.
    This figure illustrates two types of DAGs, where the dotted edges indicate the presence of a directed path, and the directed paths do not share any edges.
    Variable $N$ can coincide with $Y$ in Figure \ref{fig: proof type1} and with $S$ in Figure \ref{fig: proof type2}.
    Furthermore, in Figure \ref{fig: proof type2}, the directed path in red is towards a variable highlighted in the figure, i.e., the variables in the directed paths from $Z$ to $N$ (except $Z$ itself), from $N$ to $S$, and from $Y$ to $N$.
    
    To introduce $\Gr''$, we consider the two cases for $\PR$ (as depicted in Figure $\ref{fig: p-path}$) separately.
    
    \begin{figure}[ht]
        \centering
        \begin{subfigure}[b]{1\textwidth}
            \centering
            \begin{tikzpicture}
                \tikzset{edge/.style = {->,> = latex',-{Latex[width=1.5mm]}}}
                % vertices
                \node [block, label=center:Z](Z) {};
                \node [block, below right = 0.5 and 0.5 of Z, label=center:N](N) {};
                \node [block, right = 1 of N, label=center:Y](Y) {};
                \node [block, below left = 0.5 and 0.5 of Z, label=center:X](X) {};
                \node [block-s, right = 0.5 of Y, label=center:S](S) {};
                %edges
                \draw[edge, dotted] (Z) to (N);
                \draw[edge, dotted] (N) to (Y);
                \draw[edge, dotted] (X) to (N);
                \draw[edge, dotted] (Y) to (S);
                \draw[edge, dotted] (Z) to (X);
            \end{tikzpicture}
            \caption{Type 1. $N$ can coincide with $Y$.}
            \label{fig: proof type1}
        \end{subfigure}
        
        \begin{subfigure}[b]{1\textwidth}
            \centering
            \begin{tikzpicture}[node distance = 1ex and 0em, outer/.style={draw=gray, thick, rounded corners,
                          densely dashed, fill=white!5,
                          inner xsep=0ex, xshift=0ex, inner ysep=2ex, yshift=1ex,
                          fit=#1}]
                    \tikzset{edge/.style = {->,> = latex',-{Latex[width=1.5mm]}}}
                    % vertices
                    \node [block, label=center:Z](Z) {};
                    \node [block, right = 1 of Z, label=center:N](N) {};
                    \node [block, right = 1 of N, label=center:Y](Y) {}; 
                    \node [block, left = 1 of Z, label=center:X](X) {};
                    \node [block-s, below = 0.5 of N, label=center:S](S) {};
                    %edges
                    \draw[edge, dotted] (Z) to (N);
                    \draw[edge, dotted] (Y) to (N);
                    \draw[edge, dotted] (N) to (S);
                    \draw[edge, dotted] (Z) to (X);
                    \scoped[on background layer]
                    \node [outer=(Z.east)(S) (Y.east),
                         label={[anchor=north]:}](M) {};
                    \draw[edge, dotted, red] (X.south east) to [bend right=10] (M);

                    % \draw[edge, dotted] (X) to node[midway,above] {$\mathcal{P}_2$}  (S);
                    % \path[->] (X) edge node[above] {$\PR_2$} +(1,-1);
            \end{tikzpicture}
            \caption{Type 2. $N$ can coincide with $S$.}
            \label{fig: proof type2}
            \end{subfigure}
        \caption{DAG $\Gr''$, a subgraph of $\Gr'$.
        The dotted edges indicate the presence of a directed path.} 
        \label{fig: proof theorem 1}
    \end{figure}

    \textit{Type 1: When $\PR$ has no collider.}
    In this case, $\PR$ is in the form of Figure \ref{fig:non-collider_path}.
    Since $Y$ has a child in $\Gr'$, $\PR_{X\to S}$ must pass through $Y$.
    Denote by $N$ the first intersection of $\PR_{X\to S}$ with $\PR$.
    Note that $N$ belongs to $\PR_{Z \to Y}$.
    This shows that $\Gr'$ must be in the form of Figure \ref{fig: proof type1}.
    Thus, we set $\Gr''$ to be $\Gr'$ itself in this case.

    \textit{Type 2: When $\PR$ has one collider.}
    In this case, $\PR$ is in the form of Figure \ref{fig:collider_path}.
    Since $Y$ has a child in $\Gr'$, either $Y$ coincide with $M$ in Figure \ref{fig:collider_path} or $\PR_{X\to S}$ passes through $Y$.
    In the first case, $\Gr'$ itself is in the form of Figure \ref{fig: proof type2}, and we set $\Gr''$ to be $\Grs$.
    In the second case, let $\Gr''$ be the subgraph of $\Gr'$ consisting of the edges of $\PR_{X \gets Z}$, $\PR_{Z \to W}$, $\PR_{W \to S}$, and $\PR_{X \to S}$.
    It is straightforward to see that $\Gr''$ is in the form of Figure \ref{fig: proof type2}, where the red edges are towards $Y$.
    
    So far, we have shown that to complete our proof, it suffices to show that $\interSp$ is not s-ID in $\Gr''$, where $\Gr''$ is in the form of a DAG depicted in Figure \ref{fig: proof theorem 1}.
    In the next part, we introduce two SEMs with causal graph $\Gr''$, denoted by $\M_1$ and $\M_2$.
    Then, we show that $\pro^{\M_1}(\Vb \vert S = 1) = \pro^{\M_2}(\Vb \vert S = 1)>0$.
    Finally, in the last part, we show that $\pro^{\M_1}_{X}(Y \vert S = 1) \neq \pro^{\M_2}_{X} (Y \vert S = 1)$.
    This shows that $\interSp$ is not s-ID in $\Gr''$ and completes the proof.

\subsubsection{Constructing SEMs $\M_1$ and $\M_2$ with Causal Graph $\Gr''$.}
    Suppose $\Gr''$ is in the form of a DAG depicted in either Figure \ref{fig: proof type1} or \ref{fig: proof type2}.
    % For simplicity, we ignore the variables that have no neighbors in $\Gr''$, and 
    For ease of notation, denote by $\Vb$, the set of variables of $\Gr''$  (excluding $S$) depicted in Figure \ref{fig: proof theorem 1}.
    In this part, we introduce two SEMs $\M_1$ and $\M_2$ with causal graph $\Gr''$.

    Consider the path from $Z$ to $X$ and denote the child of $Z$ in this path by $Z'$ ($Z'$ can be $X$).
    Note that $Z'$ does not have any other parent in $\Gr''$ due to the structure of $\Gr''$.
    For brevity of notations, we introduce the following definition.
    \begin{definition}
        For each variable $V$, we define the random variable $\sump{V}$ as the sum of the parents of $V$ in $\Gr''$.
        That is, 
        \begin{equation*}
            \sump{V} \coloneqq \sum_{W \in \Pa{V}{\Gr''}} W.
        \end{equation*}
    \end{definition}

    We now introduce SEM $\M_1$ in the following (we use $\N$ to denote a Gaussian distribution).

    \begin{equation} \label{eq: SEM1}
        \begin{split}
            &Z \sim \text{Bernoulli}(0.5) \\
            &Z' = -Z + \varepsilon_{Z}, \quad \varepsilon_{Z} \sim \N(1,1) \\
            &S \sim \text{Bernoulli} \left (\frac{1}{\sqrt{2\pi}}  \expo{-\frac{(\sump{S} + 1)^2}{2}} \right) \\
            &V = \sump{V} + \varepsilon_{V}, \quad \varepsilon_{V} \sim \N(1,1), \quad \forall V \in \Vb \setminus \{Z, Z'\}
        \end{split}
    \end{equation}

    Similarly, we define SEM $\M_2$ in the following.

    \begin{equation} \label{eq: SEM2}
        \begin{split}
            &Z \sim \text{Bernoulli}(0.5) \\
            &Z' = -Z + \varepsilon_{Z}, \quad \varepsilon_{Z} \sim \N(-1,1) \\
            &S \sim \text{Bernoulli} \left (\frac{1}{\sqrt{2\pi}}  \expo{-\frac{(\sump{S} - 1)^2}{2}} \right) \\
            &V = \sump{V} + \varepsilon_{V}, \quad \varepsilon_{V} \sim \N(-1,1), \quad \forall V \in \Vb \setminus \{Z, Z'\} 
        \end{split}
    \end{equation}

\subsubsection{Computing $\pro(\Vb \vert S = 1)$.}
    In this part, we show that 
    \begin{equation} \label{eq: proof 1, equal distributions}
        \pro^{\M_1}(\Vb \vert S = 1) = \pro^{\M_2} (\Vb \vert S = 1)>0,
    \end{equation}
    where SEMs $\M_1$ and $\M_2$ are defined in Equations \eqref{eq: SEM1} and \eqref{eq: SEM2}, respectively.
    It is straightforward to see that the corresponding distributions are positive.
    Thus, to show that Equation \eqref{eq: proof 1, equal distributions} holds, it suffices to show that 
    \begin{equation*}
        \pro^{\M_1} (\Vb, S = 1) = \pro^{\M_2} (\Vb, S = 1).
    \end{equation*}

    For both SEMs, using Markov factorization, we have
    \begin{align*}
        \pro(\Vb, S = 1) 
        &= \pro(S = 1 \vert \Pa{S}{\Gr''}) \prod_{V \in \Vb} \pro(V \vert \Pa{V}{\Gr''}) \\
        &= \pro(S = 1 \vert \Pa{S}{\Gr''}) \pro(Z) \pro(Z' \vert Z) \prod_{W  \in \Vb \setminus \{Z, Z'\}} \pro(W \vert \Pa{W}{\Gr''})\\
        &= 0.5 \pro(S = 1 \vert \Pa{S}{\Gr''})\pro(Z' \vert Z) \prod_{W  \in \Vb \setminus \{Z, Z'\}} \pro(W \vert \Pa{W}{\Gr''}).
    \end{align*}
    
    For each $W \in \Vb \setminus \{Z, Z'\}$, $W \vert \Pa{W}{\Gr''}$ is a Gaussian distribution. Therefore, for $i \in \{1,2\}$, we have 

    \begin{equation*}
        \pro^{\M_i}(W \vert \Pa{W}{}) = \frac{1}{\sqrt{2\pi}} \expo{-\frac{(W - \sump{W} - b_i)^2}{2}},
    \end{equation*}
    where $b_1 =1$ and $b_2=-1$.
    Similarly, for $Z'$ and $S$ we have
    \begin{equation*}
        \pro^{\M_i}(Z' \vert Z) = \frac{1}{\sqrt{2\pi}} \expo{-\frac{(Z'  + Z - b_i)^2}{2}},
    \end{equation*}
    \begin{equation*}
        \pro^{\M_i}(S = 1 \vert \Pa{S}{\Gr''}) 
        = \frac{1}{\sqrt{2\pi}}  \expo{-\frac{(\sump{S} + b_i)^2}{2}}.
    \end{equation*}
    
    Combing these equations, we have
    \begin{equation} \label{eq: proof, P(V, S)}
        \pro^{\M_i} (\Vb, S = 1)
        = \frac{1}{2 \times (2 \pi)^{\frac{|\Vb|}{2}}} \expo{\frac{-1}{2} \left [(\sump{S} + b_i)^2 + (Z' + Z - b_i)^2 + \mysum{W  \in \Vb \setminus \{Z,Z'\} }{} (W - \sump{W} - b_i)^2 \right ]},
    \end{equation}
    Note that $b_i^2= 1$.
    Thus, to show $\pro^{\M_1} (\Vb, S = 1) = \pro^{\M_2} (\Vb, S = 1)$, it suffices to show that the coefficient of $b_i$ in the exponent term of Equation \eqref{eq: proof, P(V, S)} is zero.
    That is, it suffices to show that 
    \begin{equation*}
        -\sump{S} + Z' + Z + \mysum{W  \in \Vb \setminus \{Z,Z'\}}{} (W - \sump{W})  = 0.
    \end{equation*}
    Since $\sump{Z'} = Z$ and $\sump{Z} = 0$, this is equivalent to show that 
    \begin{equation} \label{eq: proof 1, sum SPa}
        Z + \sum_{W \in \Vb} W = \sum_{W \in \Vb \cup \{S\}} \sump{W}.
    \end{equation}
    As depicted in Figure \ref{fig: proof theorem 1}, $Z$ has two children, $S$ has no child, and the rest of the variables has exactly one child in $\Gr''$.
    Hence, in the right-hand-side of Equation \eqref{eq: proof 1, sum SPa}, $Z$ appears twice, and each variable in $\Vb \setminus \{Z\}$ appears once.
    This proves that Equation \eqref{eq: proof 1, sum SPa} holds, and therefore, $\pro^{\M_1}(\Vb \vert S = 1) = \pro^{\M_2} (\Vb \vert S = 1)$.

\subsubsection{When $\Gr''$ is Type 1.}
    Suppose $\Gr''$ is in the form of Figure \ref{fig: proof type1}.
    In this part,  we show that there exists a real number $y$ such that 
    \begin{equation*}
        \pro_{X=0}^{\M_1}(y \vert S = 1) \neq \pro_{X=0}^{\M_2}(y \vert S = 1).
    \end{equation*}

    For both SEMs, we have 
    \begin{equation*}
        \pro_{X}(Y,S) 
        = \sum_{z\in \{0,1\}} \pro_{X}(Y, Z=z, S)
        = \sum_{z\in \{0,1\}} \pro_{X}(Y, S \vert Z=z) \pro_{X}(Z=z).
    \end{equation*}

    Applying Rule 2 of do-calculus on $\Gr''$, we have $\pro_{X}(Y, S \vert Z) = \pro(Y, S \vert X, Z)$.
    Also, since $X \notin \Anc{Z}{\Gr''}$, we have $\pro_{X}(Z) = \pro(Z)=0.5$, thus
    
    \begin{align*}
        \pro_{X}(Y,S)
        &= \sum_{z\in \{0,1\}} \pro(Y, S \vert X, Z = z)  \pro(Z=z)  \\
        &= 0.5 \sum_{z\in \{0,1\}} \pro(S \vert X, Y, Z=z) \pro(Y \vert X, Z=z).
    \end{align*}

    Since $(S \independent \{X, Z\} \vert Y)_{\Gr''}$, we have $\pro(S \vert X, Y, Z) =  \pro(S \vert Y)$.
    Hence, 
    \begin{equation} \label{eq: proof, P_X(Y,S)}
        \pro_{X}(Y,S) 
        = 0.5 \pro(S \vert Y) \sum_{z\in \{0,1\}} \pro(Y \vert X, Z=z).
    \end{equation}

    Let $\Wb$ be the set of variables in $\PR_{Z \to N}$ except for $Z$, $\PR_{X \to N}$ except for $X$, and $\PR_{N \to Y}$.
    Further, denote by $m$ the number of variables in $\Wb$.
    Since each variable in $\Wb$ is equal to the sum of its parents and its exogenous noise, we have
    \begin{equation*}
        Y =  X + Z +  \mysum{W \in \Wb}{} \varepsilon_{W}.
    \end{equation*}
    Since exogenous noises are independent of each other, for SEM $\M_i$, we have 
    \begin{equation*}
        Y \vert  X = x, Z = z \sim \N(x + z + m b_i, m).
    \end{equation*}
    Recall that $b_1 = 1$ and $b_2=-1$.
    Similarly, we have
    \begin{equation*}
        \sump{S} \vert Y = y  \sim \N(y + (k-1)b_i, k-1),
    \end{equation*}
    where $k \geq 2$ is the number of variables in path $\PR_{Y \to S}$ (including $Y$ and $S$).
    By applying Lemma \ref{lemma:normal-bernoulli} (with $R = \sump{S}$, $b = b_i$, $\mu_{y} = y + (k-1)b_i$, and $\sigma^2 = k-1$), we have 
    \begin{equation*}
        S \vert Y = y \sim Bernoulli \left ( \frac{1}{\sqrt{2\pi k}} \expo{-\frac{(y + k b_i)^2}{2k}} \right).
    \end{equation*}
    Applying this to Equation \eqref{eq: proof, P_X(Y,S)}, we have
    \begin{equation} \label{eq: P^Mi_X(Y,S)}
        \begin{split}
            \pro^{\M_i}_{X = 0}(Y=y,S = 1) 
            &= C \expo{-\frac{(y + k b_i)^2}{2k}} \sum_{z\in \{0,1\}} \expo{-\frac{(y - z - m b_i)^{2}}{2m}}\\
            &= C \expo{-\frac{(y + k b_i)^2}{2k}} \left(\expo{-\frac{(y- mb_i)^{2}}{2m}} + \expo{-\frac{(y - 1 - mb_i)^{2}}{2m}} \right),
        \end{split}
    \end{equation}
    where $C = 0.5 \times \frac{1}{\sqrt{2\pi k}} \times \frac{1}{\sqrt{2\pi m}} = \frac{1}{4\pi \sqrt{k m}}$ is a constant.

    Lemma \ref{lemma:lem_point} shows that for any real numbers $y_0$ and $y_1$, if $\prs_X(Y)$ is s-ID in $\Gr''$, then we have
    \begin{equation*}
        \frac{\pro^{\M_1}_{X}(Y = y_0, S = 1)}{\pro^{\M_2}_{X}(Y = y_0, S = 1)} 
        = \frac{\pro^{\M_1}_{X}(Y = y_1, S = 1)}{\pro^{\M_2}_{X}(Y = y_1 , S = 1)}.
    \end{equation*}
    Nevertheless, to conclude this part, we will demonstrate that 
    \begin{equation} \label{eq: ratio not equal}
        \frac{\pro^{\M_1}_{X=0}(Y = 0, S = 1)}{\pro^{\M_2}_{X=0}(Y = 0, S = 1)} 
        \neq \frac{\pro^{\M_1}_{X=0}(Y = 1, S = 1)}{\pro^{\M_2}_{X=0}(Y = 1, S = 1)}.
    \end{equation}
    Using Equation \eqref{eq: P^Mi_X(Y,S)}, we have
    \begin{align*}
        r \coloneqq
        \frac{\pro^{\M_1}_{X=0}(Y = 0, S = 1)}{\pro^{\M_2}_{X=0}(Y = 0, S = 1)} 
        &= \frac{\expo{-\frac{m}{2}} +  \expo{-\frac{(m + 1)^2}{2m}}}{ \expo{-\frac{m}{2}} +  \expo{-\frac{(m - 1)^2}{2m}}}\\
        &= \frac{1 + \expo{-1 - \frac{1}{2m}}}{1 + \expo{1 - \frac{1}{2m}}}.
    \end{align*}
    Similarly, we have
    \begin{align*}
        \frac{\pro^{\M_1}_{X=0}(Y = 1, S = 1)}{\pro^{\M_2}_{X=0}(Y = 1, S = 1)}
        &= \frac{\expo{-\frac{(k + 1)^2}{2k}} \left(\expo{-\frac{(m - 1)^2}{2m}} +  \expo{-\frac{m}{2}}\right)}{\expo{-\frac{(k - 1)^2}{2k}} \left(\expo{-\frac{(m + 1)^2}{2m}} +  \expo{-\frac{m}{2}}\right)}\\
        &= e^{-2} \times \frac{1 + \expo{1-\frac{1}{2m}}}{1 + \expo{-1-\frac{1}{2m}}}
        = e^{-2} r^{-1}.
    \end{align*}

    Lemma \ref{lem: inequality} implies that $r>e^{-1}$.
    However, $r = e^{-2} r^{-1}$ if and only if $r= \pm e^{-1}$.
    This shows that Equation \eqref{eq: ratio not equal} holds.
    Hence, as we mentioned before, Lemma \ref{lemma:lem_point} implies that $\prs_X(Y)$ is not s-ID in $\Gr''$.

\subsubsection{When $\Gr''$ is Type 2, and $X\in \Anc{Y}{\Gr''}$.}
    \begin{figure}[ht]
        \centering
        \begin{tikzpicture}[
            node distance = 1ex and 0em,
            outer/.style={draw=gray, thick, rounded corners,
                  densely dashed, fill=white!5,
                  inner xsep=1ex, xshift=1ex, inner ysep=2ex, yshift=1ex,
                  fit=#1}
           ]
            \tikzset{edge/.style = {->,> = latex',-{Latex[width=1.5mm]}}}
            % vertices
            \node [block, label=center:Z](Z) {};
            \node [block, right = 1 of Z, label=center:N](N) {};
            \node [block, right = 1 of N, label=center:Y](Y) {}; 
            \node [block, left = 1 of Z, label=center:X](X) {};
            \node [block-s, below = 0.5 of N, label=center:S](S) {};
            %edges
            \draw[edge, dotted] (Z) to (N);
            \draw[edge, dotted] (Y) to (N);
            \draw[edge, dotted] (N) to (S);
            \draw[edge, dotted] (Z) to (X);
            % \scoped[on background layer]
            % \node [outer=(Z.east)(S) (Y.east),
            %      label={[anchor=north]:}](M) {};
            \draw[edge, dotted, red] (X) to [bend left=40] (Y);
        % \draw[edge, dotted] (X) to node[midway,above] {$\mathcal{P}_2$}  (S);
        % \path[->] (X) edge node[above] {$\PR_2$} +(1,-1);
        \end{tikzpicture}
        \caption{When $\Gr''$ is Type 2, and $X\in \Anc{Y}{\Gr''}$. $N$ can coincide with $S$.}
        \label{fig: proof type2-anc}
    \end{figure}

    Suppose $\Gr''$ is in the form of Figure \ref{fig: proof type2} and $X\in \Anc{Y}{\Gr''}$.
    This happens when the red directed path in Figure \ref{fig: proof type2} is towards $Y$. Therefore $\Gr''$ has the form of Figure \ref{fig: proof type2-anc}. Similar to the previous part, we show that there exists a real number $y$ such that 
    \begin{equation*}
        \pro_{X=0}^{\M_1}(y \vert S = 1) \neq \pro_{X=0}^{\M_2}(y \vert S = 1).
    \end{equation*}
    We have 
    \begin{align*}
        \pro_{X}(Y,S) 
        &= \sum_{z\in\{0,1\}} \pro_{X}(Y, Z=z S)\\
        &= \sum_{z\in\{0,1\}} \pro_{X}(Y, S \vert Z=z) \pro_{X}(Z=z).
    \end{align*}
    Since $X \notin \Anc{Z}{\Gr''}$, we have $\pro_{X}(Z=z) = \pro(Z=z)=0.5$.
    Further, since $(X \independent Y,S \vert Z)_{\Gr_{\underline{X}}''}$, using Rule 2 of do-calculus, we have $\pro_{X}(Y, S \vert Z) = \pro(Y, S \vert X, Z)$.
    Hence, 
    \begin{align*}
        \pro_{X}(Y,S)
        &= 0.5 \sum_{z\in\{0,1\}} \pro(Y, S \vert X, Z=z) \\
        &= 0.5 \sum_{z\in\{0,1\}} \pro(S \vert X, Y, Z=z) \pro(Y \vert X, Z=z).
    \end{align*}
    Furthermore, since $(Y \independent Z \vert X)_{\Gr''}$ and $(X \independent S \vert Y, Z)_{\Gr''}$, we have
    \begin{equation*}
        \pro_{X}(Y,S) = 0.5 \pro(Y \vert X) \sum_{z\in\{0,1\}} \pro(S \vert Y, Z=z).
    \end{equation*}
    
    Similar to the previous part, for SEM $\M_i$, we have
    \begin{equation*}
        Y \vert X = x \sim \N(x + k b_i, k),
    \end{equation*}
    \begin{equation*}
        S \vert Y = y, Z = z \sim \text{Bernoulli} \left ( \frac{1}{\sqrt{2\pi m}} \expo{-\frac{(y + z + mb_i)^2}{2m}} \right ),
    \end{equation*}
    where $b_1=1$, $b_2=-1$, $k+1$ is the number of variables in the directed path $\PR_{X \to Y}$, and $m+2$ is the number of variables of the union of paths $\PR_{Z\to N}, \PR_{N\gets Y}, \PR_{N\to S}$.
    Therefore, we have
    \begin{equation*}
        \pro_{X = 0}(Y=y, S = 1) 
        = C \expo{-\frac{(y - kb_i)^2}{2k}} \left( \expo{-\frac{(y + mb_i)^2}{2m}} + \expo{-\frac{(y + mb_i + 1)^2}{2m}} \right),
    \end{equation*}
    where $C = 0.5 \times \frac{1}{\sqrt{2\pi k}} \times \frac{1}{\sqrt{2\pi m}} = \frac{1}{4\pi \sqrt{k m}}$ is a constant.
    This is similar to Equation \eqref{eq: P^Mi_X(Y,S)}, when $y$ is replaced by $-y$.
    Therefore, with similar calculations, we have
    \begin{equation*}
        \frac{\pro^{\M_1}_{X=0}(Y = 0, S = 1)}{\pro^{\M_2}_{X=0}(Y = 0, S = 1)} 
        \neq \frac{\pro^{\M_1}_{X=0}(Y = -1, S = 1)}{\pro^{\M_2}_{X=0}(Y = -1, S = 1)}.
    \end{equation*}
    Hence, Lemma \ref{lemma:lem_point} implies that $\prs_X(Y)$ is not s-ID in $\Gr''$.

\subsubsection{When $\Gr''$ is Type 2, and $X\notin \Anc{Y}{\Gr''}$.}

\begin{figure}[ht]
    \centering
    \begin{tikzpicture}[
        node distance = 1ex and 0em,
        outer/.style={draw=gray, thick, rounded corners,
              densely dashed, fill=white!5,
              inner xsep=0ex, xshift=0ex, inner ysep=2ex, yshift=1ex,
              fit=#1}
       ]
        \tikzset{edge/.style = {->,> = latex',-{Latex[width=1.5mm]}}}
        % vertices
        \node [block, label=center:Z](Z) {};
        \node [block, right = 1 of Z, label=center:N](N) {};
        \node [block, right = 1 of N, label=center:Y](Y) {}; 
        \node [block, left = 1 of Z, label=center:X](X) {};
        \node [block-s, below = 0.5 of N, label=center:S](S) {};
        %edges
        \draw[edge, dotted] (Z) to (N) ;
        \draw[edge, dotted] (Y) to (N);
        \draw[edge, dotted] (N) to (S);
        \draw[edge, dotted] (Z) to (X);
        \scoped[on background layer]
        \node [outer=(Z.east)(S) (Y.west),
             label={[anchor=north]:}](M) {};
        \draw[edge, dotted, red] (X.south east) to [bend right=10] (M);
        % \draw[edge, dotted] (X) to node[midway,above] {$\mathcal{P}_2$}  (S);
        % \path[->] (X) edge node[above] {$\PR_2$} +(1,-1);
\end{tikzpicture}

   \caption{When $\Gr''$ is Type 2, and $X\notin \Anc{Y}{\Gr''}$. $N$ can coincide with $S$.}
\label{fig: proof type2-not-anc}
\end{figure}

    In this case, we have Figure \ref{fig: proof type2-not-anc}, in which the red directed path is not towards $Y$, and $Y$ does not have any parents in $\Gr''$.
    Thus, $(Y \independent Z, X)_{\Gr''}$.
    With similar calculations as the previous parts, we have
    \begin{align*}
        \pro_{X}(Y,S) 
        &= \sum_{z\in\{0,1\}} \pro_{X}(Y, Z=z, S) \\
        &= \sum_{z\in\{0,1\}} \pro_{X}(Y, S \vert Z=z) \pro_{X}(Z=z) \\
        &= \sum_{z\in\{0,1\}} \pro(Y, S \vert Z=z, X) \pro(Z)  \\
        &= 0.5 \sum_{z\in\{0,1\}} \pro(S \vert X, Y, Z=z) \pro(Y \vert Z=z, X)\\
        &= 0.5\pro(Y) \sum_{z\in\{0,1\}} \pro(S \vert X, Y, Z=z).
    \end{align*} 
    Since $Y$ does not have any parents, in $\M_i$ we have
    \begin{equation*}
        Y \sim \N(b_i, 1),
    \end{equation*}
    where $b_1=1$ and $b_2=-1$.
    Analogous to the previous parts, we have
    \begin{equation*}
        S \vert X = x, Y = y, Z = z \sim \text{Bernoulli} \left ( \frac{1}{\sqrt{2\pi m}} \expo{-\frac{(x + y + z + mb_i)^2}{2m}} \right ),
    \end{equation*}
    where $m$ is a constant (indeed, $m$ is equal to $\vert \Vb \vert$ minus the number of nodes in path $\PR_{X \gets Z}$).
    Therefore, we have
    \begin{equation*}
        \pro_{X = 0}(Y=y, S = 1) 
        = C \expo{-\frac{(y - b_i)^2}{2}} \left( \expo{-\frac{(y + m b_i)^2}{2m}} + \expo{-\frac{(y + m b_i + 1)^2}{2m}} \right),
    \end{equation*}
     where $C = \frac{1}{4\pi\sqrt{m}}$.
    This is again in the form of Equation \eqref{eq: P^Mi_X(Y,S)} when $k=1$ and $y$ is replaced by $-y$.
    Therefore, with similar calculations, we have
    \begin{equation*}
        \frac{\pro^{\M_1}_{X=0}(Y = 0, S = 1)}{\pro^{\M_2}_{X=0}(Y = 0, S = 1)} 
        \neq \frac{\pro^{\M_1}_{X=0}(Y = -1, S = 1)}{\pro^{\M_2}_{X=0}(Y = -1, S = 1)}.
    \end{equation*}
    Hence, Lemma \ref{lemma:lem_point} implies that $\prs_X(Y)$ is not s-ID in $\Gr''$.

\subsection{Proof of Proposition \ref{prp: X2}}
    \noindent{\bf Proposition  \ref{prp: X2}.} \textit{
        Suppose $\Grs$ is an augmented DAG over $\Vb \cup \{S\}$, and let $\Xb \subsetneq \Vb$.
        For $\Xb_{2} \coloneqq \Xb \setminus \Anc{S}{}$, conditional causal effect $\prs_{\Xb_2} (\Vb \setminus \Xb_2)$ is s-ID in $\Grs$ and can be computed from $\prs (\Vb)$ by
        \begin{equation} \label{eq: proof X2}
            \prs (\Anc{S}{} \setminus S) \prod_{W \in \Wb} \prs (W \vert \Pa{W}{}),
        \end{equation}
        where $\Wb =  \Vb \setminus (\Xb_2 \cup \Anc{S}{})$.
    }

    \begin{proof}
        Since the variables in $\Xb_2$ are not ancestors of $S$, Rule 3 of do-calculus implies that $\pro_{\Xb_2}(S) = \pro(S)$.
        Thus, using truncated Markov factorization, we have
        \begin{equation} \label{eq: proof prp}
            \begin{split}
                \pro_{\Xb_2}(\Vb \setminus \Xb_2 \vert S) 
                &= \frac{\pro_{\Xb_2}(\Vb \setminus \Xb_2, S)}{\pro_{\Xb_2}(S)} 
                = \frac{\pro_{\Xb_2} (\Vb \setminus \Xb_2, S)}{\pro(S)} \\
                &= \frac{1}{\pro(S)}  \prod_{V \in  (\Vb \cup \{S\}) \setminus \Xb_2 } \pro (V \vert \Pa{V}{}) \\
                &= \frac{1}{\pro(S)}  \prod_{V \in \Anc{S}{}} \pro (V \vert \Pa{V}{}) \prod_{W \in \Wb} \pro (W \vert \Pa{W}{}).
            \end{split}
        \end{equation}
        Applying Lemma \ref{lem: ancestral} for $\Vb' = \Anc{S}{}$ in $\Grs$, we have 
        \begin{equation} \label{eq:ancestor_factorization}
            \pro(\Anc{S}{}) =  \prod_{V \in \Anc{S}{}} \pro (V \vert \Pa{V}{}).
        \end{equation}
        Since $\Wb \cap \Anc{S}{} = \varnothing$, for any $W \in \Wb$, we have $(W \independent S \vert \Pa{W}{})_{\Gr}$, thus
        \begin{equation} \label{eq: proof W factorization}
            \pro (W \vert \Pa{W}{})=  \pro (W \vert \Pa{W}{}, S). 
        \end{equation}
        Finally, by substituting Equations \eqref{eq:ancestor_factorization} and \eqref{eq: proof W factorization} in Equation \eqref{eq: proof prp}, we have
        \begin{align*}
            \prs_{\Xb_2}(\Vb \setminus \Xb_2) 
            &= \pro_{\Xb_2}(\Vb \setminus \Xb_2 \vert S=1) \\
            &=\frac{\pro(\Anc{S}{}\setminus \{S\}, S=1 )}{\pro(S=1)} \prod_{W \in \Wb} \pro (W \vert \Pa{W}{}, S=1)\\
            &= \prs (\Anc{S}{} \setminus S) \prod_{W \in \Wb} \prs (W \vert \Pa{W}{}).
        \end{align*}
        This shows that $\prs_{\Xb_2}(\Vb \setminus \Xb_2)$ is s-ID in $\Grs$ and concludes the proof.  
    \end{proof}
    
\subsection{Proof of Theorem \ref{th:markov-general}}
    \textbf{Theorem 2.}\textit{
        For disjoint subsets $\Xb$ and $\Yb$ of $\Vb$, let $\Xb_{1} \coloneqq \Xb \cap \Anc{S}{}$ and $\Xb_{2} \coloneqq \Xb \setminus \Anc{S}{}$.
        \begin{itemize}
            \item 
                If $\Xb_1 = \varnothing$:  Conditional causal effect $\prs_{\Xb} (\Yb)$ is s-ID and can be computed from Equation \eqref{eq: X2 to Y}.
            \item 
                If $\Xb_1 \neq \varnothing$: Conditional causal effect $\prs_{\Xb} (\Yb)$ is s-ID if and only if
                \begin{equation} \label{eq: proof general case cond}
                    (\Xb_{1} \independent \Yb \vert \Xb_{2}, S)_{\Grs_{\underline{\Xb_{1}}\overline{\Xb_{2}}}}.
                \end{equation}
                Moreover, when \eqref{eq: proof general case cond} holds, we have 
                \begin{equation} \label{eq: proof general_formula}
                    \prs_{\Xb} (\Yb) = \prs (\Xb_1)^{-1} \sum_{\Vb \setminus (\Xb \cup \Yb)} \prs_{\Xb_2} (\Vb \setminus \Xb_2),
                \end{equation}
                where $\prs_{\Xb_2} (\Vb \setminus \Xb_2)$ can be computed from $\prs(\Vb)$ using Equation \eqref{eq: proof X2}.
        \end{itemize}
    }

\begin{proof}
    
    The first part of the theorem (if $\Xb_1 = \varnothing$) is a direct consequence of Proposition 1.
    Suppose $\Xb_1 \neq \varnothing$.
        
    \subsubsection{Sufficiency.}
        Suppose Equation \eqref{eq: proof general case cond} holds.
        Hence, by applying Rule 2 of do-calculus, we have
        \begin{equation} \label{eq: proof sufficiency 1}
            \pro_{\Xb}(\Yb \vert S) = \pro_{\Xb_{2}}(\Yb \vert \Xb_{1}, S).
        \end{equation}
        The variables in $\Xb_2$ are not ancestors of $\Xb_1 \cup \{S\}$, thus $(\Xb_2 \independent \Xb_1 \vert S)_{\Gr_{\overline{\Xb_2}}}$.
        Accordingly, by applying Rule 3 of do-calculus, we have
        \begin{equation} \label{eq: proof sufficiency 2}
            \pro_{\Xb_{2}}(\Xb_{1} \vert S) = \pro(\Xb_{1} \vert S).
        \end{equation}
        Using Equations \eqref{eq: proof sufficiency 1} and \eqref{eq: proof sufficiency 2}, we have
        \begin{equation} \label{eq: proof sufficiency 3}
             \prs_{\Xb}(\Yb)
             = \prs_{\Xb_{2}}(\Yb \vert \Xb_{1}) 
             = \frac{\prs_{\Xb_{2}}(\Xb_{1}, \Yb)}{\prs_{\Xb_{2}}(\Xb_{1})} 
             = \frac{\prs_{\Xb_{2}}(\Xb_{1}, \Yb)}{\prs(\Xb_{1})}.
        \end{equation}
        Moreover, Corollary \ref{coro} for $\Xb_{1} \cup \Yb$ implies
        \begin{equation} \label{eq: proof sufficiency 4}
            \prs_{\Xb_{2}}(\Xb_{1}, \Yb) = \sum_{\Vb \setminus (\Xb \cup \Yb)} \prs_{\Xb_2} (\Vb \setminus \Xb_2).
        \end{equation}
        Equation \eqref{eq: proof general_formula} can be obtained by merging Equations \eqref{eq: proof sufficiency 3} and \eqref{eq: proof sufficiency 4}.

    \subsubsection{Necessity.}
        Suppose $(\Xb_{1} \notindependent \Yb \vert \Xb_{2}, S)_{\Grs_{\underline{\Xb_{1}}\overline{\Xb_{2}}}}$.
        We need to show that $\prs_{\Xb}(\Yb)$ is not s-ID in $\Grs$.

        Using Lemma \ref{lemma: G^*}, which we provide at the end of this proof, there exists $X^* \in \Xb_1$, $Y^* \in \Yb$, and a subgraph $\Gr^*$ of $\Grs$ such that
        \begin{itemize}
            \item $\Xb \cap \Anc{S}{\Gr^*_{\overline{X^*}}} = \{X^*\}$,
            \item $(X^* \notindependent Y^* \vert S)_{\Gr^*_{\underline{X^*}}}$, and
            \item $(\Xb\setminus \{X^*\} \independent Y^* \vert X^*, S)_{\Gr^*_{\overline{\Xb}}}$.
        \end{itemize}      

        The first property implies that $X^* \in \Anc{S}{\Gr^*}$.
        Hence, Equation \eqref{eq: singleton} holds for $X^*$ and $Y^*$ in $\Gr^*$ and Theorem \ref{th:markov-single} implies that $\prs_{X^*}(Y^*)$ is not s-ID in $\Gr^*$.
        To conclude the proof using Lemma \ref{lemma:remove}, it suffices to show that $\interSbp$ is not s-ID in $\Gr^*$.
        
        For any SEM with causal graph $\Gr^*$, due to the first and third properties in Claim \ref{lemma:subgraph-non-id}, Rule 3 of do-calculus implies that
        \begin{equation*}
            \pro_{X^*}(Y^* \vert S) = \pro_{\Xb} (Y^* \vert S).
        \end{equation*}

        Therefore, in $\Gr^*$, the s-identifiability of $\pro^{s}_{X^*}(Y^*)$ is equivalent to s-identifiability of $\pro^{s}_{\Xb}(Y^*)$, thus $\pro^{s}_{\Xb}(Y^*)$ is not s-ID in $\Gr^*$.
        Therefore, Lemma \ref{lemma: Yb to Y} implies that $\pro^{s}_{\Xb}(\Yb)$ is not s-ID in $\Gr^*$ since $Y^* \in \Yb$, which concludes our proof.

\end{proof}

    \begin{lemma} \label{lemma: G^*}
        Let $\Grs$ be an augmented DAG over $\Vb \cup \{S\}$.
        For disjoint subsets $\Xb$ and $\Yb$ of $\Vb$, let $\Xb_{1} \coloneqq \Xb \cap \Anc{S}{}$ and $\Xb_{2} \coloneqq \Xb \setminus \Anc{S}{}$.
        If $\Xb_1 \neq \varnothing$ and $(\Xb_{1} \notindependent \Yb \vert \Xb_{2}, S)_{\Grs_{\underline{\Xb_{1}}\overline{\Xb_{2}}}}$, then there exists $X^* \in \Xb_1$, $Y^* \in \Yb$, and a subgraph $\Gr^*$ of $\Grs$ such that
        \begin{itemize}
            \item \textbf{Condition 1.}
                $\Xb \cap \Anc{S}{\Gr^*_{\overline{X^*}}} = \{X^*\}$,
            \item \textbf{Condition 2.}
                $(X^* \notindependent Y^* \vert S)_{\Gr^*_{\underline{X^*}}}$, and
            \item \textbf{Condition 3.}
                $(\Xb\setminus \{X^*\} \independent Y^* \vert X^*, S)_{\Gr^*_{\overline{\Xb}}}$.
        \end{itemize}
    \end{lemma}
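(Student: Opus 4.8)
The plan is to reduce Lemma \ref{lemma: G^*} to the singleton Theorem \ref{th:markov-single} by carving out, around a single well-chosen active path, a small subgraph $\Gr^*$ of $\Grs$ in which $X^*$ is the only member of $\Xb$ that leads to $S$. First I would extract a witness for the hypothesis: let $\mathcal{P}$ be an active path (with respect to $\{\Xb_2,S\}$) in $\Grs_{\underline{\Xb_{1}}\overline{\Xb_{2}}}$ between some $A_0\in\Xb_1$ and some $Y^*\in\Yb$, chosen with the fewest edges. Two structural facts fall out of minimality and of the mutilation: $\mathcal{P}$ begins at $A_0$ with an edge pointing \emph{into} $A_0$ (outgoing edges of $\Xb_1$ have been deleted), and $\mathcal{P}$ meets $\Xb$ only at $A_0$ — an interior vertex in $\Xb_2$ would have to be a collider, which is impossible since incoming edges of $\Xb_2$ are deleted, while an interior vertex in $\Xb_1$ would yield a strictly shorter active path to $\Yb$. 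Consequently every collider $C$ of $\mathcal{P}$ lies outside $\Xb$; being an ancestor of $\{\Xb_2,S\}$ in the mutilated graph, where a vertex outside $\Xb_2$ cannot reach $\Xb_2$, it is in fact an ancestor of $S$ there, so I may fix a directed path $\mathcal{Q}_C$ from $C$ to $S$ inside $\Grs_{\underline{\Xb_{1}}\overline{\Xb_{2}}}$, which then automatically avoids all of $\Xb$.

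Next I would choose $X^*$. Since $A_0\in\Xb_1\subseteq\Anc{S}{}$, take a shortest directed path $\mathcal{D}$ from $A_0$ to $S$ in $\Grs$ and let $X^*$ be the last vertex of $\mathcal{D}$ that lies in $\Xb$ (possibly $X^*=A_0$); then $X^*\in\Xb_1$ and the portion of $\mathcal{D}$ from $X^*$ to $S$ meets $\Xb$ only at $X^*$. Define $\Gr^*$ to be the subgraph of $\Grs$ whose edge set is the union of the edges of $\mathcal{P}$, of $\mathcal{D}$, and of all the $\mathcal{Q}_C$. For Condition 2 the connecting path from $X^*$ to $Y^*$ is $\mathcal{D}$ traversed backwards from $X^*$ to $A_0$ followed by $\mathcal{P}$ from $A_0$ to $Y^*$ (spliced at the first shared vertex so that it is simple): it enters $X^*$ through an edge of $\mathcal{D}$, hence survives in $\Gr^*_{\underline{X^*}}$; its colliders are exactly those of $\mathcal{P}$ (the directed stretch contributes only chains, and $A_0$ appears as a non-collider), and each reaches $S$ along its $\mathcal{Q}_C$, which lies in $\Gr^*$ and avoids $X^*$; so $(X^*\notindependent Y^*\mid S)_{\Gr^*_{\underline{X^*}}}$. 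For Conditions 1 and 3, observe that every vertex of $\Xb$ occurring in $\Gr^*$ lies on $\mathcal{D}$ (since $\mathcal{P}$ meets $\Xb$ only at $A_0\in\mathcal{D}$ and the $\mathcal{Q}_C$ avoid $\Xb$), and such a vertex other than $X^*$ has, in $\Gr^*$, its only outgoing edge along $\mathcal{D}$ toward $X^*$ (its $\mathcal{P}$-edge, if any, is incoming, and it lies on no $\mathcal{Q}_C$); hence every directed walk in $\Gr^*$ from a vertex of $\Xb$ to $S$ runs through $X^*$, so deleting the edges into $X^*$ disconnects $\Xb\setminus\{X^*\}$ from $S$, giving $\Xb\cap\Anc{S}{\Gr^*_{\overline{X^*}}}=\{X^*\}$, and deleting all edges into $\Xb$ leaves $\Xb\setminus\{X^*\}$ with no outgoing route at all, giving Condition 3. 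Once $\Gr^*$, $X^*$, $Y^*$ with these three properties are in hand, the lemma is proved.

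The step I expect to be the main obstacle is precisely this last piece of bookkeeping. The directed stretch of $\mathcal{D}$ between $A_0$ and $X^*$ may share a (non-$\Xb$) vertex $W$ with some $\mathcal{Q}_C$, which would hand an $\Xb$-vertex sitting before $W$ on $\mathcal{D}$ an alternative route $W\rightsquigarrow S$ bypassing $X^*$ and thereby break Condition 1. Resolving this is where the real work lies: one must either choose $\mathcal{D}$ and the $\mathcal{Q}_C$ to overlap as little as possible (or reroute through $X^*$ wherever they meet, being careful that such rerouting does not sever a collider of the connecting path from $S$ inside $\Gr^*_{\underline{X^*}}$), or instead replace $\Gr^*$ by a subgraph that is \emph{minimal} for the two properties ``$X^*\in\Anc{S}{\Gr^*}$'' and ``$X^*$ is $d$-connected to $Y^*$ given $S$ in $\Gr^*_{\underline{X^*}}$'' and argue that minimality excludes any spurious route — together with the routine checks that the spliced connecting path is genuinely simple and active, and that no vertex of $\Xb\setminus\{X^*\}$ is $d$-connected to $Y^*$ given $\{X^*,S\}$ in $\Gr^*_{\overline{\Xb}}$.
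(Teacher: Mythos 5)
You have the right first half: the extraction of an active path $\PR$ in $\Grs_{\underline{\Xb_1}\overline{\Xb_2}}$ meeting $\Xb$ only at its $\Xb_1$-endpoint, the observation that its colliders are ancestors of $S$ (not of $\Xb_2$) in the mutilated graph, and the fact that collider-to-$S$ directed paths avoid $\Xb$ all match the paper. But the proof is not complete, and the gap is exactly the one you flag at the end: Conditions 1 and 3 for the graph $\Gr^*$ built as the union of $\PR$, the directed path from $A_0$ to $S$, and the collider-to-$S$ paths. With $X^*$ chosen as ``the last $\Xb$-vertex on the directed path,'' the segment of that path between $A_0$ and $X^*$ can meet a collider-to-$S$ path (or a directed stretch of $\PR$ feeding a collider), and then $A_0$ reaches $S$ in $\Gr^*_{\overline{X^*}}$ without passing through $X^*$, so Condition 1 fails; similar overlaps create active routes from $\Xb\setminus\{X^*\}$ to $Y^*$ that break Condition 3. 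Saying ``choose the paths to overlap as little as possible, or take a minimal subgraph'' is not yet an argument: the correct choice of $X^*$ and even of the active path itself depends on \emph{where} the $A_0$-to-$S$ path re-enters the structure, and no single generic minimality statement is shown to exclude the bad routes.

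The paper's proof is organized precisely around this difficulty, and its machinery is what your sketch is missing. First, $\PR$ is chosen with the \emph{minimum number of colliders} (not minimum length), which, by the rerouting argument of Lemma~\ref{lemma:collider}, yields at most one collider $W$ and, moreover, that the single directed path $\PR_{W\to S}$ shares no vertex with $\PR$; this keeps the subsequent overlap analysis finite instead of juggling one $\mathcal{Q}_C$ per collider. Second, the subgraph $\Gr_2$ is taken edge-minimal subject to containing $\PR$, $\PR_{W\to S}$, and one directed path $\PR_{X\to S}$, which makes $\PR_{X\to S}$ unique, and the proof then splits on the first intersection $I$ of $\PR_{X\to S}$ with the rest of the structure. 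The decisive step, absent from your proposal, is the case in which the tail $\PR_{I'\to S}$ contains another member of $\Xb_1$: there the paper \emph{discards} the original endpoint and the original active path altogether, takes $\Gr^*$ to be only the fork consisting of $\PR_{I'\to S}$ and $\PR_{I'\to Y^*}$, and picks $X^*$ as the $\Xb_1$-vertex on $\PR_{I'\to S}$ closest to $S$; in the complementary case it keeps $\Gr_2$ and chooses $X^*$ as the $\Xb_1$-vertex on $\PR_{X\to I}$ closest to $I$ (not the last $\Xb$-vertex before $S$). Only with this case-dependent choice of $(X^*,\Gr^*)$ do all three conditions go through, so as it stands your proposal establishes Condition 2 for a particular construction but leaves the core of the lemma unproved.
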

    \begin{proof}
        For brevity of notations, let $\Gr_1$ denote $\Grs_{\underline{\Xb_{1}} \overline{\Xb_{2}}}$.
        Let $\PR$ be an active path with the minimum number of colliders between $\Xb_1$ and $\Yb$ in $\Gr_1$ that $\Xb_{2} \cup \{S\}$ does not block.
        Note that such path exists since $(\Xb_{1} \notindependent \Yb \vert \Xb_{2}, S)_{\Gr_1}$.
        Further, denote by $X \in \Xb_1$ and $Y^* \in \Yb$ the endpoints of $\PR$.

        We now show that $\PR$ contains no variables in $\Xb \setminus \{X\}$.
        The variables in $\Xb_2$ cannot be a collider in $\PR$ since $\PR$ exists in $\Gr_1$, where the incoming edges to $\Xb_2$ are removed.
        Further, since $\Xb_2 \cup \{S\}$ does not block $\PR$, then $\Xb_2$ cannot have non-colliders in $\PR$.
        Therefore, $\Xb_2$ has no variable in $\PR$.
        The variables in $\Xb_1 \setminus \{X\}$ cannot be a non-collider in $\PR$ since $\PR$ exists in $\Gr_1$, where the outgoing edges from $\Xb_1$ are removed.
        Since $\PR$ is an active path, each of its colliders must be in $\Anc{\Xb_2 \cup \{S\}}{\Gr_1}$.
        However, the set of descendants of $\Xb_1$ in $\Gr_1$ is $\Xb_1$ itself, thus, cannot have any descendants in $\Anc{\Xb_2 \cup \{S\}}{\Gr_1}$.
        Therefore, $\Xb_1 \setminus \{X\}$ cannot be a collider in $\PR$.
        This shows that $\PR$ has no variable from $\Xb \setminus \{X\}$.
        
        Next, we show that all the colliders of $\PR$ are ancestors of $S$ in $\Gr_1$.
        According to the definition of $d$-separation, the colliders of $\PR$ must be an ancestor of either $S$ or $\Xb_{2}$ in $\Gr_1$.
        Note that the variables in $\Xb_{2}$ do not have any incoming edges in $\Gr_1$.
        Therefore, the colliders of $\PR$ cannot be an ancestor of $\Xb_{2}$, thus, are ancestors of $S$ in $\Gr_1$.

        Next, similar to the proof of Lemma \ref{lemma:collider}, we show that $\PR$ has at most one collider.
        Assume, for the sake of contradiction, that $\PR$ has at least two colliders.
        Denote by $V_l$ and $V_r$ the closest colliders of $\PR$ to $X$ and $Y^*$, respectively.
        As we showed in the previous paragraph, both $V_l$ and $V_r$ must be ancestors of $S$ in $\Gr_1$.
        Accordingly, denote by $\PR_l$ and $\PR_r$, directed paths from $V_l$ to $S$ and from $V_r$ to $S$ in $\Gr_1$, respectively.
        Further, denote by $N$ the first intersection of $\PR_l$ and $\PR_r$ ($N$ can be $S$).
        Now, consider the path starting from $X$ to $V_l$ by the edges in $\PR$, then from $V_l$ to $N$ by the edges in $\PR_l$, then from $N$ to $V_r$ by the edges in $\PR_r$, and then from $V_r$ to $Y$ by the edges in $\PR$.
        In this path, $N$ is the only collider, and the non-colliders do not belong to $\Xb_2$.
        Note that $N$ is an ancestor of $S$ in $\Gr_1$.
        Therefore, $\Xb_2 \cup \{S\}$ does not block this path.
        This is a contradiction since $\PR$ is a path with the minimum number of colliders that satisfies this property.
        This shows that $\PR$ has at most one collider.

        \begin{figure}[ht]
            \centering
            \begin{tikzpicture}
                \tikzset{edge/.style = {->,> = latex',-{Latex[width=1.5mm]}}}
                % vertices
                \node [block, label=center:Z](Z) {};
                \node [block, left = 1 of Z, label=center:X](X) {};
                \node [block, right = 1 of Z, label=center:$Y^*$](Y) {};
                %edges
                \draw[edge, dotted] (Z) to (Y);
                \draw[edge, dotted] (Z) to (X);   
            \end{tikzpicture}
            \caption{Path $\PR$ when it has no colliders.}
            \label{fig: general1}
        \end{figure}    
    
        \begin{figure}[ht]
            \centering
            \begin{tikzpicture}
                \tikzset{edge/.style = {->,> = latex',-{Latex[width=1.5mm]}}}
                % vertices
                \node [block, label=center:Z](Z) {};
                \node [block, left = 1 of Z, label=center:X](X) {};
                \node [block, right = 1 of Z, label=center:W](W) {};
                \node [block, right = 1 of W, label=center:M](M) {};
                \node [block, right = 1 of M, label=center:$Y^*$](Y) {};
                \node [block-s, below = 0.5 of W, label=center:S](S) {};
                %edges
                \draw[edge, dotted] (Z) to (W);
                \draw[edge, dotted] (Z) to (X);
                \draw[edge, dotted] (M) to (W);
                \draw[edge, dotted] (M) to (Y);
                \draw[edge, dotted] (W) to (S);          
            \end{tikzpicture}
            \caption{Paths $\PR$ and $\PR_{W \to S}$, when $\PR$ has one collider. In this graph, variable $M$ can coincide with $Y^*$.}
            \label{fig: general2}
        \end{figure}

        Figure \ref{fig: general1} depicts $\PR$ when it has no collider.
        Further, Figures \ref{fig: general2} depicts $\PR$ and $\PR_{W \to S}$, where $\PR$ has one collider and $\PR_{W \to S}$ denotes a directed path from $W$ to $S$ in $\Gr_1$.
        In the case of Figure \ref{fig: general2}, $M$ can coincide with $Y^*$.
        If $\PR$ has no collider, then we denote by $\Vb'$ the set of variables in $\PR$ excluding $X$.
        Further, if $\PR$ has one collider, then we denote by $\Vb'$ the set of variables in $\PR$ and $\PR_{W \to S}$, excluding $X$.
        As we showed before, recall that $\Vb' \cap \Xb = \varnothing$.

        Next, we show that in the case of Figure \ref{fig: general2}, i.e., when $\PR$ has one collider, $\PR_{W\to S}$ does not share any variable with $\PR$ other than $W$.
        We denote by $\PR_{X \gets Z}$, $\PR_{Z \to W}$, $\PR_{W \gets M}$, and $\PR_{M \to Y^*}$ the directed paths that create $\PR$ in Figure \ref{fig: general2}.
        Since $\Gr$ does not have any cycles, then $\PR_{W \to S}$ cannot share any edges with $\PR_{Z \to W}$ and $\PR_{W \gets M}$.
        If $\PR_{W \to S}$ has an intersection with $\PR_{M \to Y^*}$ in a node $T$, then consider the following path: $(\PR_{X \gets Z}, \PR_{Z \to W}, \PR_{W \to T}, \PR_{T \to Y^*})$.
        This path does not have any colliders, and $\Xb_2 \cup \{S\}$ does not block it in $\Gr_1$, which contradicts the minimality (in terms of number of colliders) of $\PR$.
        Similarly, if $\PR_{W \to S}$ has an intersection with $\PR_{X \gets Z}$ in a node $T$, then $(\PR_{X \gets T}, \PR_{T \gets W}, \PR_{W \gets M}, \PR_{M \to Y^*})$ does not have any colliders and $\Xb_2 \cup \{S\}$ does not block it in $\Gr_1$, which is again a contradiction.
        Therefore, $\PR_{W \to S}$ does not share any edges with $\PR$.
        
        We define $\Gr_2$ to be a minimal (in terms of edges) subgraph of $\Grs$ such that
        \begin{enumerate*}[label=(\roman*)]
            \item $\Gr_2$ contains $\PR$,
            \item $\Gr_2$ contains $\PR_{W \to S}$ if $\PR$ has a collider, and
            \item $X \in \Anc{S}{\Gr_2}$.
        \end{enumerate*}
        Note that such a graph exists since $X \in \Xb_1 \subseteq \Anc{S}{\Grs}$.
        We denote by $\PR_{X \to S}$, the directed path from $X$ to $S$ in $\Gr_2$.
        Furthermore, if $\PR_{X \to S}$ intersects with $\Vb'$, then we denote by $I$, the first intersection of $\PR_{X \to S}$ with $\Vb'$.
        Note that $I$ can be $S$.

        % For ease of notation, we remove the variables with no edges from $\Gr_2$.  % In this case, $\Gr_2$ has no variable from $\Xb_2$.
        Note that in $\Gr_2$, the variables in $\Xb_2$ have no edges.
        This is because $\PR$ has no variables from $\Xb_2$, and the variables in $\Xb_2$ are not ancestors of $S$, thus, do not exist in the directed paths towards $S$.
        Therefore, there is no path between $\Xb_2$ and $Y^*$ in $\Gr_2$.

        To conclude the proof, we introduce $X^*, Y^*, \Gr^*$ and show that the three conditions of the lemma hold for them.

        \begin{figure}[ht]
            \centering
            \begin{tikzpicture}
                \tikzset{edge/.style = {->,> = latex',-{Latex[width=1.5mm]}}}
                % vertices
                \node [block, label=center:Z](Z) {};
                \node [block, left = 1 of Z, label=center:X](X) {};
                \node [block, right = 1 of Z, label=center:$Y^*$](Y) {};
                \node [block-s, below = 0.5 of Z, label=center:$S$](S) {};
                %edges
                \draw[edge, dotted] (Z) to (Y);
                \draw[edge, dotted] (Z) to (X);  
                \draw[edge, dotted] (X) to [bend right = 30] (S);   
            \end{tikzpicture}
            \caption{DAG $\Gr_2$ when $I$ does not exist.}
            \label{fig: no I}
        \end{figure}

        First, consider the case where $I$ does not exist, i.e., $\PR_{X \to S}$ does not intersect with $\Vb'$.
        In this case, $\Vb'$ does not include $S$, $\PR$ has no colliders, and $\PR_{X \to S}$ does not intersect with $\PR$ other than $X$.
        Accordingly, DAG $\Gr_2$ is in the form Figure \ref{fig: no I}.
        Let $\Gr^*$ be $\Gr_2$ and $X^*$ be the last variable in path $\PR_{X\to S}$ that is a member of $\Xb_1$.
        In $\Gr^*$, the directed paths from $\Xb \setminus \{X^{*}\}$ to $S$ pass through $X^*$, thus, Condition 1 holds.
        Moreover, the second condition holds since $S$ does not block path $(\PR_{X^* \gets X}, \PR)$ in $\Gr^*_{\underline{X^*}}$.
        Finally, as $\PR_{X \to S}$ does not intersect with $\Vb'$, there exists no path between $\Xb \setminus \{X^*\}$ and $Y^*$ in $\Gr^*_{\overline{\Xb}}$, thus, Condition 3 holds.

        Now, suppose $I$ exists.
        If path $\PR$ has no colliders, then $\Gr_2$ must be in the form of Figure \ref{fig: I exists, no collider}.
        \begin{figure}[ht]
            \centering
            \begin{tikzpicture}
                \tikzset{edge/.style = {->,> = latex',-{Latex[width=1.5mm]}}}
                % vertices
                \node [block, label=center:Z](Z) {};
                \node [block, left = 1 of Z, label=center:X](X) {};
                \node [block, right = 1 of Z, label=center:$I$](I) {};
                \node [block, right = 1 of I, label=center:$I'$](Ip) {};
                \node [block, right = 1 of Ip, label=center:$Y^*$](Y) {};
                \node [block-s, below = 0.5 of I, label=center:$S$](S) {};
                %edges
                \draw[edge, dotted] (Z) to (X);
                \draw[edge, dotted] (Z) to (I); 
                \draw[edge, dotted] (I) to (Ip); 
                \draw[edge, dotted] (Ip) to (Y); 
                \draw[edge, dotted] (X) to [bend left = 40] (I);  
                \draw[edge, dotted] (Ip) to [bend left = 30] (S);  
            \end{tikzpicture}
            \caption{DAG $\Gr_2$ when $I$ exists and $\PR$ has no collider.}
            \label{fig: I exists, no collider}
        \end{figure}

        In the case of Figure \ref{fig: I exists, no collider}, path $\PR_{X\to S}$ is in the form $(\PR_{X \to I}, \PR_{I \to I'}, \PR_{I' \to S})$, where $I'$ can coincide with $I$ or $Y^*$.
        Note that directed paths do not overlap due to the minimality of $\Gr_2$.

        If path $\PR$ has one collider, then $\Gr_2$ must be in the form of Figure \ref{fig: I exists, one collider}.

        \begin{figure}[ht]
            \centering
            \begin{tikzpicture}[
                node distance = 1ex and 0em,
                outer/.style={draw=gray, thick, rounded corners,
                              densely dashed, fill=white!5,
                              inner xsep=0ex, xshift=0ex, inner ysep=2ex, yshift=1ex,
                              fit=#1}
                      ]
                \tikzset{edge/.style = {->,> = latex',-{Latex[width=1.5mm]}}}
                % vertices
                \node [block, label=center:Z](Z) {};
                \node [block, left = 1 of Z, label=center:X](X) {};
                \node [block, right = 1 of Z, label=center:W](W) {};
                \node [block, right = 1 of W, label=center:M](M) {};
                 
                \node [block, right = 1 of M, label=center:$I$](I) {};
    
                \node [block, right = 1 of I, label=center:$I'$](Ip) {};
                \node [block, right = 1 of Ip, label=center:$Y^*$](Y) {};
                \node [block-s, below = 0.5 of W, label=center:S](S) {};
                %edges
                \draw[edge, dotted] (Z) to (W);
                \draw[edge, dotted] (Z) to (X);
                \draw[edge, dotted] (M) to (W);
                \draw[edge, dotted] (M) to (I);
                \draw[edge, dotted] (I) to (Ip);
                \draw[edge, dotted] (Ip) to (Y);
                \draw[edge, dotted] (W) to (S);
                
                \draw[edge, dotted] (X) to [bend left = 40]  (I);   
                \scoped[on background layer]
                \node [outer=(Z.east)(S) (M.west),
                     label={[anchor=north]:}](box) {};
                \draw[edge, dotted, red] (Ip) to [bend left=20] (box);
            \end{tikzpicture}
            \caption{DAG $\Gr_2$ when $I$ exists and $\PR$ has one collider.}
            \label{fig: I exists, one collider}
        \end{figure}

        Similarly, in this figure, $I'$ can coincide with $I$ or $Y^*$.
        The red directed path is from $I'$ to a variable in the box, i.e., the set of variables in $\PR_{Z \to W}$ except for $Z$, $\PR_{W \gets M}$ except for $M$, and $\PR_{W \to S}$.
        Again, the directed paths do not overlap due to the minimality of $\Gr_2$.

        To introduce $X^*, Y^*, \Gr^*$ when $I$ exists (i.e., for the cases depicted in Figures \ref{fig: I exists, no collider} and \ref{fig: I exists, one collider}), we consider the following cases.

        \begin{itemize}
            \item \textit{$\PR_{I' \to S}$ has at least one variable in $\Xb_1$:}
                In this case, let $X^*$ be the closest variable to $S$ in path $\PR_{I' \to S}$ that is in $\Xb_1$.
                Furthermore, let $\Gr^*$ be the DAG consisting of paths $\PR_{I' \to S}$ and $\PR_{I' \to Y}$.
                Therefore, DAG $\Gr^*$ is in the form of Figure $\ref{fig: no I}$, where $Z$ and $X$ in Figure \ref{fig: no I} are $I'$ and the closest variable to $I'$ in path $\PR_{I' \to S}$ that is in $\Xb_1$, respectively.
                Nevertheless, the same proof works for this case.
            \item \textit{$\PR_{I' \to S}$ has no variables in $\Xb_1$:}
                In this case, let $\Gr^*$ be $\Gr_2$ and $X^*$ be the closest variable to $I$ in path $\PR_{X\to I}$ that is in $\Xb_1$.
                Note that $X^*$ exists since $X \in \Xb_1$ and $X$ is in path $\PR_{X\to I}$.
                In $\Gr^*$, the variables in $\Xb_1$ can only be in $\PR_{X\to X^*}$ because $\PR$ and $\PR_{I' \to S}$ has no variables in $\Xb_1$.
                Hence, in $\Gr^*$, the directed paths from $\Xb \setminus \{X^{*}\}$ to $S$ pass through $X^*$, thus, Condition 1 holds.
                Moreover, the second condition holds since $S$ does not block path $(\PR_{X^* \gets X}, \PR)$ in $\Gr^*_{\underline{X^*}}$.
                To show the third condition, we note that the incoming edges of $\Xb$ are removed in $\Gr^*_{\overline{\Xb}}$.
                This shows that there exists no path between $\Xb \setminus \{X^*\}$ and $Y^*$ in $\Gr^*_{\overline{\Xb}}$, thus, Condition 3 holds.
        \end{itemize}

        In all of the cases, we introduced variables $X^* \in \Xb_1, Y^* \in \Yb$ and a subgraph $\Gr^*$ of $\Grs$ and showed that the three conditions of the lemma hold.
        This concludes the proof.
    \end{proof}

\fi

\end{document}